\documentclass{article}

\PassOptionsToPackage{numbers, compress}{natbib}
\usepackage[main, final]{neurips_2026_arxiv}

\usepackage[utf8]{inputenc} 
\usepackage[T1]{fontenc}    
\usepackage{hyperref}       
\usepackage{url}            
\usepackage{booktabs}       
\usepackage{amsfonts}       
\usepackage{nicefrac}       
\usepackage{microtype}      
\usepackage{xcolor}         

\usepackage{amsmath}
\usepackage{amssymb}
\usepackage{mathtools}
\usepackage{amsthm}
\usepackage{xspace}
\usepackage{enumitem}
\usepackage{subcaption}
\usepackage{wrapfig}
\usepackage{microtype}
\usepackage{graphicx}
\usepackage{booktabs}
\usepackage[textsize=tiny]{todonotes}
\usepackage{algorithm}
\usepackage{algorithmic}

\usepackage[capitalize,noabbrev]{cleveref}
\theoremstyle{plain}
\newtheorem{theorem}{Theorem}[section]

\theoremstyle{definition}

\theoremstyle{remark}

\hypersetup{
	colorlinks=true,
	linkcolor=blue,
	urlcolor=orange,
	citecolor=blue,
}

\crefname{proposition}{prop.}{props.}
\Crefname{proposition}{Proposition}{Propositions}

\DeclareMathOperator*{\argmin}{arg\,min}

\newcommand{\prvm}{\textsc{Svm}\xspace}
\newcommand{\dsrl}{\textsc{Dsrl}\xspace}
\newcommand{\resrl}{\textsc{Residual RL}\xspace}
\newcommand{\fhat}{\widehat{f}}
\newcommand{\algname}{\textsc{Svm}\xspace}
\newcommand{\arxiv}[1]{}
\newcommand{\loose}{\looseness=-1}

\newtoggle{arxiv}
\toggletrue{arxiv}

\title{Learning Process Rewards via Success Visitation Matching for Efficient RL}

\author{%
  Raymond Tsao\thanks{Correspondence to: Raymond Tsao \{\texttt{r112358@berkeley.edu}\} and Andrew Wagenmaker \{\texttt{ajwagen@berkeley.edu}\}} \\
  UC Berkeley
  \And
  Andrew Wagenmaker \\
  UC Berkeley 
  \And
  Sergey Levine \\
  UC Berkeley 
}

\begin{document}

\maketitle

\begin{abstract}
  In many modern applications of reinforcement learning (RL), the natural reward for a task of interest is inherently \emph{sparse}: a reward of 0 is given everywhere except when the task is completed, when a reward of +1 is given. Training a policy to maximize such a sparse reward requires solving a challenging credit assignment problem, leading to slow or ineffective RL improvement. We propose a simple approach to transform a sparse \emph{outcome} reward into a dense \emph{process} reward. Our approach relies on training a discriminator to distinguish between previous successful and unsuccessful episodes, and using this discriminator to incentivize the RL-learned policy to match the state-action visitations of successful episodes, while avoiding those of unsuccessful episodes. By incentivizing the policy to match the visitations over all states, not just those that correspond to task success, this reward provides dense feedback on whether progress is being made towards task completion, and, we show, provably achieves this without changing the optimal policy. Focusing on finetuning of robotic control policies, we demonstrate that our approach leads to significantly faster RL finetuning performance on both simulated and real-world manipulation tasks, as compared to simply maximizing the sparse outcome reward.\loose
  
  Website: \url{https://success-visitation-matching.github.io}
\end{abstract}


\section{Introduction}

Efficiently training a policy to maximize a scalar reward function via reinforcement learning (RL) is critical to achieving high-quality performance in modern AI systems.
To ensure that maximizing the reward corresponds to completing the task of interest, modern applications of RL often rely
on \emph{outcome} rewards that assign a reward of 0 until the task of interest is successfully completed, at which point a reward of +1 is given. While such outcome rewards have enabled recent progress both in LLM ``reasoning'' (the ``reinforcement learning with verifiable rewards'' setting) \citep{guo2025deepseek,team2025kimi}, and the successful application of RL to improving robotic control policies \citep{mark2024policy, luo2024serl, chen2025conrft, wagenmaker2025steering, xiao2025self}, they are inherently \emph{sparse}---reward is only received if the task is successfully accomplished. Learning to maximize such a sparse reward requires solving a difficult credit assignment problem---we must determine which actions, perhaps taken significantly before any reward is received, lead to  reward---often resulting in slow or ineffective RL improvement.

To mitigate the challenge of maximizing a sparse reward and enable faster RL improvement, we might hope our reward would correspond not just to task \emph{completion}, but to task \emph{progress}, rewarding the agent at intermediate states if they are making progress towards completing the task, and reducing the challenge of credit assignment. While obtaining such \emph{process rewards} has been an active area of research, it has proven challenging to develop rewards that both provide an accurate estimate of task progress, and ensure that the maximizing policy also effectively completes the task. Indeed, existing approaches for specifying process rewards (``reward shaping'') typically require side knowledge about the nature of the task \citep{trott2019keeping,wang2024rl} or significant human hand-engineering or supervision \citep{andrychowicz2020learning, smith2022walk}, greatly limiting their application to domains such as robotics where side information is often not available and hand-engineering rewards is time-consuming and expensive.

\begin{figure*}[t]
  \centering
  \includegraphics[width=\textwidth]{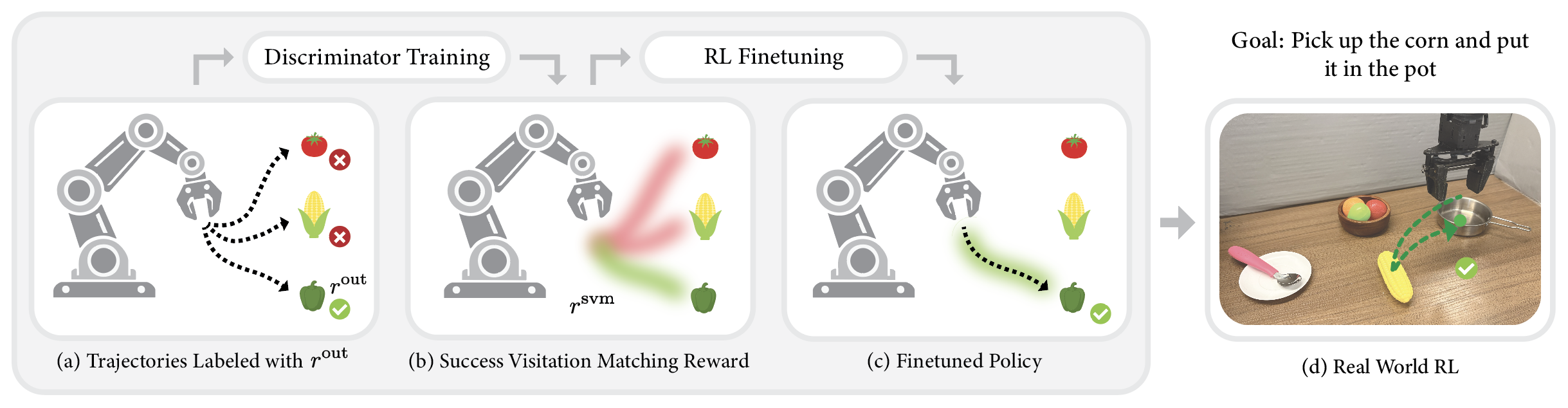}
  \caption{(a) Given trajectories labeled with sparse outcome reward $r^{\mathrm{out}}$, we train a discriminator to distinguish between state-actions in successful vs. unsuccessful trajectories. (b) This yields a dense process reward---the success visitation matching reward $r^{\mathrm{svm}}$---that rewards visiting state-actions likely to be in successful trajectories. (c) Finetuning a policy with RL on a combination of the process reward and outcome reward enables fast RL improvement. (d) Our approach scales to real-world robotic control settings, enabling fast real-world RL.}
  \label{fig:paper_figure}
  \vspace{-1em}
\end{figure*}

In this work, we propose a simple method to transform a sparse outcome reward into a dense process reward that provides intermediate feedback on whether progress is being made towards accomplishing the task. Our approach, which we refer to as \emph{success visitation matching} (\algname) process rewards, is fully automated, does not require additional human engineering or side knowledge, 
and provably ensures the policy that maximizes the process reward also maximizes the original outcome reward. \algname process rewards are motivated by the insight that 
\emph{training a policy to match the state-action visitations of observed trajectories that receive outcome reward of +1 will lead the policy to also achieve outcome reward of +1},
and produces a process reward by computing a step-level signal indicating whether the current state-action is likely under the visitations of successful trajectories or unlikely under the visitations of unsuccessful trajectories, thereby encouraging the learned policy to match successful behaviors and avoid unsuccessful ones. 
As we can estimate the visitation likelihood at all states,
not just states where the task has been completed, this reward provides  dense feedback on whether progress is being made towards the goal,
guiding the learner to task completion. See \Cref{fig:paper_figure} for an overview of our approach.

We first show that \algname process rewards are provably correct, in that an optimal policy for the \algname process reward is also optimal for the original sparse outcome reward. 
Focusing on RL improvement of pretrained robotic control policies, we then show that \algname rewards can enable significantly faster and more stable convergence of standard RL finetuning algorithms in both simulation and the real world. In particular, \algname rewards enable roughly $2\times$ faster converge of RL finetuning on VLAs ($\pi_0$ \citep{black2024pi_0}), and significantly faster convergence of RL finetuning on real-world multi-task diffusion policies, in many cases enabling learning on tasks where RL with only sparse outcome rewards fails to learn. Furthermore, instantiating \algname rewards simply requires training a discriminator over observed successful and unsuccessful trajectories, and using the log-probability induced by the discriminator to shape the reward, making it an easy-to-implement addition to existing RL training pipelines.

\iftoggle{arxiv}{}{\vspace{-0.5em}}
\section{Related Work}
\iftoggle{arxiv}{}{\vspace{-0.5em}}

We highlight the most relevant related work here; see Appendix \ref{sec:additional_related} for additional related work.

\textbf{Reward shaping for RL.}
Designing rewards to enable more efficient RL improvement---reward shaping---has a long history, going back to the seminal work of \citet{ng1999policy}. \arxiv{A variety of approaches to reward shaping have been proposed, based on the sources of information available, and it has also been shown theoretically that reward shaping can reduce the sample complexity of RL \citep{gupta2022unpacking}.} Approaches include utilizing human feedback to design a reward \citep{christiano2017deep,palan2019learning,park2022surf,hejna2023few}, adapting the reward based on uncertainty to induce exploration \citep{osband2016deep,tang2017exploration,pathak2017curiosity,burda2018exploration}, and shaping the reward based on a given distance-to-goal function \citep{trott2019keeping}.
These approach all require access to information we do not assume (human feedback or distance functions) or target different objectives (exploration). 
Several other approaches to reward shaping have considered a  setting similar to ours, where the learner is not given access to any information other than a success signal. In particular, \citet{ma2024highly, adamczyk2025bootstrapped} rely on an outcome reward, and utilize the $Q$-function or future success probability to shape the reward, \citet{memarian2021self} apply preference learning over trajectories to obtain a dense reward that is consistent with the ordering on trajectories induced by the outcome reward, \citet{stadie2020learning} treat the reward as a hyperparameter and apply hyperparameter optimization approaches to it, and \citet{ma2024reward} apply RL to design the reward itself.
In addition, \citet{fu2018variational,li2021mural} shape the reward with a success classifier, yet instead of assuming access to an outcome reward, assume access to a set of successful states and, furthermore, these works as well as \citet{durugkar2021adversarial} simply train a success classifier on whether a state is successful (i.e. in effect they aim to train an outcome reward), rather than whether a state-action is likely to lead to a successful state.
We will show that our approach significantly outperforms these methods.\loose

\textbf{Rewards in robotics.}
While many recent works in RL for robotics rely on only sparse outcome rewards \citep{mark2024policy, nakamoto2024steering, luo2024serl, luo2025precise, chen2025conrft, wagenmaker2025steering, ankile2025imitation, yuan2024policy, julg2025refined, xiao2025self},
significant attention has also been given to obtaining denser rewards to enable more efficient RL. Existing work has shown success engineering such rewards by hand \citep{lee2019learning, andrychowicz2020learning, smith2022walk} and learning them from human demonstration videos \citep{chen2021learning, shao2021concept2robot, ma2022vip} or large-scale robot demonstrations \citep{ma2023liv,alakuijala2024video,yang2024rank2reward,chen2025sarm,zhai2025vision,zhang2025rewind}. Other approaches include utilizing internet-pretrained vision-language models (VLMs) to provide both outcome and process rewards \citep{du2023vision,rocamonde2023vision,baumli2023vision,sontakke2023roboclip,lee2024affordance,venuto2024code,yang2024robot,wang2024rl,ma2024vision,venkataraman2024real,tan2025robodopamine,luu2025enhancing,lee2026roboreward,liang2026robometer}. While showing promise, these approaches all rely on additional sources of supervision---large-scale demonstrations, internet pretraining, human annotations or engineering---which are not in general available. In contrast, our approach operates under standard RL assumptions, only assuming access to an outcome reward and no additional sources of supervision.\loose

\textbf{RL for robotics.}
While the focus of this work is on designing more effective rewards rather than more effective RL algorithms, the utility of such rewards in robotics is only made possible by significant progress over the last several years in robotic RL. Early works on RL for robotics include \citet{riedmiller2009reinforcement, levine2016end, levine2018learning}, while more recently we have seen RL successfully applied to learn powerful policies for both locomotion \citep{smith2022legged, smith2022walk} and manipulation \citep{zhu2020ingredients, mendonca2024continuously, luo2024serl, luo2025precise}. With the advent of robot foundation models \citep{team2024octo,kim2024openvla,black2024pi_0,bjorck2025gr00t}, much attention has been given to RL approaches that aim to finetune such pretrained policies \citep{zhang2024grape, mark2024policy, nakamoto2024steering, ren2024diffusion, chen2025conrft, hu2025flare, ankile2025imitation, wagenmaker2025steering, dong2025matters, guo2025improving, lu2025vla, liu2025can, yuan2024policy, julg2025refined, amin2025pi, dong2025expo, xiao2025self}. In this work, we consider, in particular, RL finetuning approaches that modify the sampling process of the pretrained policy \citep{wagenmaker2025steering}, and that learn a residual policy on top of the pretrained policy \citep{johannink2018residual,ankile2025imitation, yuan2024policy, julg2025refined, dong2025expo, xiao2025self}; we will show that \algname improves the performance of both of these RL approaches.


\newcommand{\rout}{r^{\mathrm{out}}}
\newcommand{\rshape}{r^{\mathrm{proc}}}
\newcommand{\pist}{\pi^\star}
\newcommand{\pitilst}{\widetilde{\pi}^\star}
\newcommand{\Qtil}{\widetilde{Q}}
\newcommand{\Exp}{\mathbb{E}}
\newcommand{\cbar}{\bar{c}}
\newcommand{\pitil}{\widetilde{\pi}}
\newcommand{\stil}{\widetilde{s}}
\newcommand{\Qp}{\bar{Q}}
\newcommand{\pisub}{\pi^-}
\newcommand{\Dkl}[2]{\mathrm{KL}(#1 \parallel #2)}
\newcommand{\cJkl}{\mathcal{J}_{\mathrm{vm}}}
\newcommand{\cJdrs}{\mathcal{J}_{\mathrm{vm}}}
\newcommand{\cJ}{\mathcal{J}}
\newcommand{\cM}{\mathcal{M}}
\newcommand{\Vout}{V_{\mathrm{out}}}
\newcommand{\rdrs}{r^{\mathrm{vm}}}
\newcommand{\pipt}{\pi_{\mathrm{pre}}}
\newcommand{\Dpos}{\mathfrak{D}^+}
\newcommand{\Dneg}{\mathfrak{D}^-}
\newcommand{\frakD}{\mathfrak{D}}
\newcommand{\cS}{\mathcal{S}}
\newcommand{\cA}{\mathcal{A}}
\newcommand{\Dst}{f^\star}
\newcommand{\Dhat}{\widehat{f}}
\newcommand{\frakDE}{\mathfrak{D}^E}
\newcommand{\Prob}{\mathbb{P}}
\newcommand{\piplus}{\pi^+}
\newcommand{\pimin}{\pi^-}
\newcommand{\rclip}{\bar{r}}
\newcommand{\bbP}{\mathbb{P}}
\newcommand{\rproc}{\rshape}
\newcommand{\whatp}{\widehat{w}^+}
\newcommand{\whatm}{\widehat{w}^-}
\newcommand{\bbI}{\mathbb{I}}
\newcommand{\clip}{\mathrm{clip}}
\newcommand{\rvm}{r^{\mathrm{svm}}}

\iftoggle{arxiv}{}{\vspace{-0.5em}}
\section{Preliminaries}\label{sec:prelim}
\iftoggle{arxiv}{}{\vspace{-0.5em}}

We consider 
Markov decision processes (MDPs) denoted by a tuple $\cM = (\cS,\cA, P, P_{\mathrm{init}}, H, \rout)$ where $\cS$ is the set of states, $\cA$ the set of actions, $P : \cS \times \cA \rightarrow \triangle_{\cS}$ the transition probabilities, $P_0 \in \triangle_{\cS}$ the initial state distribution, $H$ the horizon, and $\rout : \cS \rightarrow \{ 0, 1 \}$ the reward. 
We let $\rout$ denote the true \emph{outcome reward}, and assume that it is 0 at all states except for states that correspond to task completion (``successful'' states), when it is 1.
Each \emph{episode} proceeds by sampling a state $s_1 \sim P_{\mathrm{init}}$, selecting an action $a_0$, transitioning to state $s_2 \sim P(s_1, a_1)$ and so on for $H$ steps, at which point the episode terminates. We are primarily interested in task completion and so assume that, once a reward of 1 is reached in an episode (a ``successful'' episode), the MDP enters a terminal state that has a reward of 0 for the remainder of the episode.
We let $\pi : \cS \rightarrow \triangle_{\cA}$ denote a policy, a mapping from states to actions. For any $\pi$, we denote $\Exp^\pi[\cdot ]$ the expectation over episodes induced by deploying $\pi$ in our environment, and
$\cJ(\pi) := \Exp^\pi[\sum_{h=1}^{H} \rout(s_h)]$
the expected reward collected by $\pi$ on $\cM$ (equivalently, the probability that $\pi$ succeeds).
Our goal is to learn a policy that maximizes $\cJ(\pi)$, and we say a policy $\piplus$ is an optimal policy if $\cJ(\piplus) = \max_\pi \cJ(\pi)$.

\iftoggle{arxiv}{}{\vspace{-0.5em}}
\section{Success Visitation Matching Process Rewards}\label{sec:method}
\iftoggle{arxiv}{}{\vspace{-0.5em}}

We wish to obtain a process reward that (a) preserves the optimal policies under the outcome reward $\rout$---so a policy which maximizes the process reward also maximizes $\rout$---while (b) providing dense, step-level feedback that guides the learner to successful states. In this section we describe our approach and show that it provably satisfies these desiderata.

\subsection{Visitation Matching Preserves Outcome Reward Maximization}
Our choice of process reward is based on a simple principle: given past observations from our environment, reward the learner for visiting states that previously led to success and penalize the learner for visiting states that previously led to failure.

To formalize this, assume we have access to a collection of episodes from our environment, $\frakD$, and let $\Dpos$ denote the successful episodes (episodes that have non-zero reward) and $\Dneg$ the unsuccessful episodes (episodes that have zero reward) in $\frakD$. Furthermore, let $\whatp_h(s,a)$ (resp. $\whatm_h(s,a)$) denote an estimate of the state-action visitation distribution at step $h$ for episodes in $\Dpos$ (resp. $\Dneg$)---that is, $\whatp_h(s,a)$ is an estimate of the density of state-actions at step $h$ of successful episodes.
Given this, we propose the following reward:
\begin{align}\label{eq:rew_def}
    \rvm_h(s,a) := \rout(s) + \lambda \cdot \clip_\beta \left ( \log \frac{\whatp_h(s,a)}{\whatm_h(s,a)} \right ) \ \ \ \text{for} \ \ \ \clip_\beta(x) := \begin{cases} \beta \cdot \mathrm{sign}(x) & |x| > \beta \\
    x & |x| \le \beta 
    \end{cases}
\end{align}
and $\lambda, \beta > 0$. Here we define $\log 0 := -\infty$ and $\log 0/0 := -\infty$. We refer to $\rvm$ as the \emph{success visitation matching} (\algname) process reward for reasons that will become clear shortly. 

$\rvm$ has two key components. First, it includes the original outcome reward, $\rout(s)$, rewarding the learner for reaching successful states. Second, it includes the difference in log-probabilities of the state-action visitations of previous successful and unsuccessful episodes. 
In particular, note that $\log \frac{\whatp_h(s,a)}{\whatm_h(s,a)}$ is positive when $\whatp_h(s,a) > \whatm_h(s,a)$---when $(s,a)$ is more likely to have been visited in a previous successful than unsuccessful episode---and negative otherwise. In addition to incentivizing the learner to reach successful states, then, $\rvm$ also incentivizes the learner to visit states likely to lead to success and avoid states likely to lead to failure.
The following result shows that, in deterministic environments, any policy that maximizes $\rvm$ also maximizes $\rout$.

\begin{theorem}\label{thm:proc_consistent}
    Assume our environment has deterministic transitions (but potentially stochastic initial state distribution) and $\cS$ and $\cA$ are countable. Let $\frakD$ be any set of episodes collected in our environment and $\whatp/\whatm$ the count-based visitation estimates:
     $\textstyle \whatp_h(s,a) := \frac{1}{|\Dpos|} \cdot \sum_{(s',a') \in \Dpos_h} \bbI \{ (s',a') = (s,a) \},$
    where $\Dpos_h$ is all observations at step $h$ of $\Dpos$, and $\whatm$ is defined similarly with respect to $\Dneg$.
    Then for any $\lambda, \beta > 0$, any policy maximizing $\rvm_h(s,a)$  also maximizes $\rout(s)$.
\end{theorem}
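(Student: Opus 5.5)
The plan is to exploit determinism to reduce everything to a comparison of individual action sequences from a fixed initial state. I will argue that, from any initial state $s_1$ from which success is reachable, every trajectory that maximizes the cumulative $\rvm$ return must be successful. This then lifts to policies. I will treat policies as possibly step-dependent, which is natural since $\rvm_h$ depends on $h$.

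\textbf{Step 1: reduction to trajectories.} With deterministic transitions, a policy induces, for each $s_1$, a distribution over action sequences. Each sequence yields a deterministic state-action trajectory $\tau=(s_1,a_1,\dots,s_H,a_H)$ with return $R(\tau)=\sum_h \rvm_h(s_h,a_h)$. Hence
\[
\max_\pi \Exp^\pi\Big[\textstyle\sum_h \rvm_h\Big]=\Exp_{s_1}\Big[\max_{\tau \text{ from } s_1} R(\tau)\Big],
\]
and a policy attains this maximum iff, for $P_{\mathrm{init}}$-a.e.\ $s_1$, it is supported on return-maximizing trajectories from $s_1$. Countability of $\cS,\cA$ makes "a.e." and the maxima well defined; the number of trajectories from $s_1$ is countable and $R$ takes finitely many clipped values per step, so the max is attained.

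Similarly, $\max_\pi \cJ(\pi)=\Prob(s_1 \text{ is solvable})$, where $s_1$ is solvable if some action sequence from it reaches a successful state. So it suffices to show: if $s_1$ is solvable, every $R$-maximizing trajectory from $s_1$ succeeds.

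\textbf{Step 2: exchange argument.} Fix a solvable $s_1$ and a failing trajectory $\tau$ from it. I will construct a successful trajectory $\tau'$ from $s_1$ with $R(\tau')\ge R(\tau)+1$. The key observation is about pairs with $\whatp_h(s,a)=0$. For such a pair the log ratio is $-\infty$ (including the $0/0$ convention), so the process term equals exactly $-\lambda\beta$, its minimum possible value. Every step's process term always lies in $[-\lambda\beta,\lambda\beta]$.

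There are two cases.
\begin{itemize}[leftmargin=*]
\item \emph{Case (i): $\whatp_h(s_h,a_h)=0$ for all $h$.} Then $R(\tau)=-\lambda\beta H$. Any successful $\tau'$ from $s_1$, which exists by solvability, has $R(\tau')\ge 1-\lambda\beta H$.
\item \emph{Case (ii): some step has $\whatp_h(s_h,a_h)>0$.} Let $h^*$ be the last such step. Then $(s_{h^*},a_{h^*})$ occurs at step $h^*$ of some successful episode $e\in\Dpos$. Define $\tau'$ to follow $\tau$ up to step $h^*$ and then replay the actions of $e$ from step $h^*$ onward. By determinism, $\tau'$ reproduces $e$'s states from step $h^*$ on and hence reaches success, collecting $\rout=1$ once. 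Its rewards at steps $\le h^*$ coincide with those of $\tau$. At steps $>h^*$, $\tau$ collects exactly $-\lambda\beta$ per step, by the choice of $h^*$ and since $\tau$ fails so $\rout=0$. Meanwhile $\tau'$ collects at least $-\lambda\beta$ per step plus the extra $+1$. Hence $R(\tau')\ge R(\tau)+1$.
\end{itemize}
The terminal-state convention after success only helps here. Terminal states visited after success lie in $\Dpos$ only, so they receive $+\lambda\beta$, and in any case they receive at least $-\lambda\beta$.

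\textbf{Step 3: conclusion.} By Step 2, failing trajectories from solvable $s_1$ are strictly suboptimal for $R$. Hence any $\rvm$-maximizing policy succeeds with probability one from a.e.\ solvable $s_1$, so $\cJ(\pi)=\Prob(s_1 \text{ solvable})=\max_\pi\cJ$. Note that the argument uses nothing about $\lambda,\beta$ beyond positivity.

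The main obstacle I expect is Case (ii). The naive splice at the \emph{first} overlap with $\Dpos$ fails, because $e$'s intermediate state-actions may have much worse log ratios than what $\tau$ collected afterwards. Choosing the \emph{last} overlap step is what makes the per-step comparison go through, since every later step of $\tau$ is pinned at the floor $-\lambda\beta$.

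The other point needing care is the reduction in Step 1. Policies are Markov in $s$ while trajectories are sequences, so I will either include $h$ in the state or note that the per-$s_1$ optimal action sequences can be realized by a (possibly step-indexed) policy. If they cannot, I will instead argue directly with the Bellman optimality equations for $\rvm$, applying the same last-overlap exchange at the level of optimal values.
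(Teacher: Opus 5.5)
Your proof is correct, but it is organized differently from the paper's.

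The paper argues state by state. It calls $(s,h)$ success reachable if a successful state can be reached in the remaining $H-h$ steps. It then notes that, by determinism, a non-terminal, non-reachable $(s,h)$ never appears in a successful episode, so from there every policy collects exactly $-\lambda\beta$ per step. Finally it runs a backward induction on $h$, showing that an $r^{\mathrm{svm}}$-maximizer at a reachable $(s,h)$ must move to a reachable next state, by comparing $1-\lambda\beta(H-h)$ against $-\lambda\beta(H-h)$.

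You instead reduce to open-loop trajectories and make a single exchange at the \emph{last} overlap with a successful episode $e$, replaying $e$ from there. Both arguments hinge on the same fact: $\widehat{w}^+_h(s,a)>0$ certifies that success is reachable from $(s,a)$ at step $h$, while $\widehat{w}^+_h(s,a)=0$ pins the process term at $-\lambda\beta$. You use the data episode as an explicit witness rather than an abstract reachability predicate.

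What your route buys:
\begin{itemize}
\item It gives an explicit margin of $1$ and avoids induction.
\item It handles the deviation point cleanly. The paper's inductive comparison counts rewards only from step $h+1$ on, and tacitly needs $r^{\mathrm{svm}}_h(s,a)=-\lambda\beta$ for the action leading to a non-reachable state; this is true by the same argument but is not stated. In your splice, $\tau$ and $\tau'$ agree through step $h^\star$, and $\tau$ sits at the floor afterwards by construction.
\end{itemize}
The paper's route, in turn, reasons directly about decisions at each $(s,h)$, so it needs no policy-to-trajectory reduction.

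Your Step 1 caveat about $h$-dependent policies is necessary, not just convenient. For stationary $\pi:\mathcal{S}\to\triangle_{\mathcal{A}}$ the statement can fail. Take $H=3$, a start state with a self-loop and an edge to the goal, and a dataset whose only successful episode loops once before taking the edge. Then loop-then-edge earns at least $1+2\lambda\beta$, versus $1-3\lambda\beta$ for the direct edge. So every stationary maximizer takes the edge with probability $p^\star<1$ and fails with probability $(1-p^\star)^2>0$. The paper's per-$(s,h)$ argument implicitly uses $h$-dependent policies as well. With them, the dynamic-programming realizability you mention settles Step 1.

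One detail to write out in Case (ii): since $\tau$ fails and $s_{\mathrm{term}}$ is entered only after success, $s_{h^\star}$ is neither a success state nor $s_{\mathrm{term}}$. Hence $e$ succeeds strictly after $h^\star$, which is where $\tau'$ collects its $+1$.
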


We provide the proof of \Cref{thm:proc_consistent} in Appendix \ref{sec:proof}. We emphasize that this result holds for \emph{any} previous set of episodes $\frakD$, regardless of which policy collected them, and that it holds for the \emph{empirical} visitation estimates---it does not require access to oracle estimates of the visitations. \Cref{thm:proc_consistent} relies on the fact that, if we have previously visited some $(s,a)$ and succeeded from that $(s,a)$ (so that $\whatp_h(s,a) >0$), then we will be able to succeed from that $(s,a)$ again. Thus, increasing the reward at such states incentivizes the learner to visit states where future success is likely, which is aligned with the original objective of reaching successful states, preserving policy optimality. Critically, however, this reward now provides a dense progress signal---rather than only rewarding success, the learner is rewarded for \emph{making progress towards success} by visiting states the previously led to success. As we will see, this leads to significant sample efficiency gains without sacrificing final performance. We note that, while \Cref{thm:proc_consistent} requires that our environment has deterministic transitions, as we show in \Cref{sec:exp_real}, our approach scales effectively to real-world settings that are not strictly deterministic. Furthermore, the requirement that $\cS$ and $\cA$ are countable is purely for convenience of the proof, and the result can be extended to uncountable spaces by representing the visitation estimates with appropriate function approximators. \loose

\textbf{Success visitation matching process reward as KL-regularized RL.}
We next show that the policy optimization landscape induced by the \prvm process reward admits a close connection to a form of KL-regularized RL.
Taking $\beta \rightarrow \infty$, so that $\rvm$ is no longer clipped, we have that the expected \prvm reward collected by some policy $\pi$ is:\loose
\begin{align*}
 \textstyle \Exp^\pi[\sum_{h=1}^H \rvm_h(s_h,a_h)]  & \textstyle = \cJ(\pi) + \lambda \cdot \sum_{h=1}^H \Exp^\pi [ \log \frac{\whatp_h(s_h,a_h)}{\whatm_h(s_h,a_h)} ] \\
 & \textstyle = \cJ(\pi) + \lambda \cdot \sum_{h=1}^H \Exp^\pi [ \log \frac{w^\pi_h(s_h,a_h)}{\whatm_h(s_h,a_h)} - \log \frac{w^\pi_h(s_h,a_h)}{\whatp_h(s_h,a_h)} ]
\end{align*}
for $w_h^\pi(s,a) = \mathbb{P}^\pi[s_h = s, a_h = a]$ the state-action visitations for policy $\pi$. However, $\Exp^\pi [ \log \frac{w^\pi_h(s_h,a_h)}{\whatm_h(s_h,a_h)} ] = \Dkl{w^\pi_h}{\whatm_h}$, so
\begin{align*}
    \textstyle \Exp^\pi[\sum_{h=1}^H \rvm_h(s_h,a_h)] = \cJ(\pi) + \lambda \cdot \sum_{h=1}^H [ \Dkl{w^\pi_h}{\whatm_h} - \Dkl{w^\pi_h}{\whatp_h}].
\end{align*}
In other words, without clipping, $\rvm$ can be seen as inducing a policy optimization objective that is a combination of the original outcome reward objective, and a term encouraging the state-action visitations to have low KL divergence between the distribution of successful episodes---to \emph{match} the visitations of successful episodes---and high KL divergence with unsuccessful episodes.
Critically, this provides dense, step-level feedback: rather than only being rewarded for succeeding, the learner is rewarded for staying close to successful episodes and avoiding unsuccessful episodes at each step in an episode. Furthermore, by lightly regularizing this objective with clipping, \Cref{thm:proc_consistent} shows that this objective has the same optimal policy as the original outcome reward objective.\loose

\subsection{Reinforcement Learning with Success Visitation Matching Process Rewards}
Our ultimate goal is to enable fast RL improvement. The above discussion shows that the \algname reward provides a dense reward signal without changing the optimal policy. In this section we show how to estimate $\rvm$ in large state spaces and
present our full approach combining the \algname process reward with RL improvement.

\textbf{Estimating visitation ratios in large state spaces.}
\Cref{thm:proc_consistent} relies on the count-based visitation estimate natural in countable state spaces, yet this visitation estimator does not scale to real-world settings with large state-spaces. Here we show how we can compute $\rvm$ in such more general settings.\loose

Note that $\rvm$ in \eqref{eq:rew_def} only requires knowing the \emph{ratio} of the visitation densities. 
Critically, to estimate the ratio of densities, it often suffices to simply solve a classification problem rather than explicitly estimating the full density. To see this, consider some distributions $P$ and $Q$, and let
\begin{align*}
\Dst & := {\textstyle \argmin_{f}} \ \Exp_{X \sim P}[\log f(X)] + \Exp_{X \sim Q}[\log (1 - f(X))].
\end{align*}
Assuming $f$ is chosen from an expressive enough function class, $\Dst(x) = \frac{P(x)}{P(x) + Q(x)}$, so $\frac{\Dst(x)}{1 - \Dst(x)} = \frac{P(x)}{Q(x)}$. In other words, to estimate the ratio of visitations $\frac{P(x)}{Q(x)}$, it suffices to simply train a discriminator on observations from $P$ and $Q$, and use the ratio of values the discriminator assigns, avoiding the need for explicit density estimation.
In our setting, we can therefore compute $\rvm$ by first fitting
\begin{align}\label{eq:Dhat}
\begin{split}
\Dhat_h & := {\textstyle \argmin_{f}} \ \Exp_{(s,a) \sim \Dpos_h}[\log f(s,a)]  + \Exp_{(s,a) \sim \Dneg_h}[\log (1 - f(s,a))],
\end{split}
\end{align}
and then using $\frac{\Dhat_h(s,a)}{1-\Dhat_h(s,a)}$ in place of $\frac{\whatp_h(s,a)}{\whatm_h(s,a)}$ in \eqref{eq:rew_def}. This allows us to easily scale the computation of $\rvm$ to more general settings by simply training a classifier on past observations.

\textbf{RL with success visitation matching process rewards.}
We present our full approach in \Cref{alg:rdrs_rl}. \Cref{alg:rdrs_rl} alternates between updating the policy with RL to maximize the \algname reward, and refining the discriminator $\Dhat$ on new data. For every episode, it adds the episode collected to the positive examples $\Dpos$ if it contains outcome reward $>0$, and otherwise to the negative examples $\Dneg$, and trains $\Dhat$ to distinguish the state-actions in these datasets. \Cref{alg:rdrs_rl} optionally takes as input an initial policy $\pipt$ (for example, a pretrained policy to finetune) and, if such a policy is given, collects an initial set of rollouts from this policy in order to initialize $\Dhat$. While we will demonstrate \Cref{alg:rdrs_rl} can be effectively instantiated with several commonly used approaches for robotic RL (\dsrl, residual RL, and \textsc{Rlpd}), in principle any RL algorithm can be used.

\begin{algorithm}[t]
\caption{Reinforcement Learning with \prvm Process Reward}
\label{alg:rdrs_rl}
\begin{algorithmic}[1]
\STATE \textbf{input:} $\rvm$ weight $\lambda$, $\rvm$ clipping $\beta$, pretrained $\pipt$ (optional), initial rollouts $N_0$ (optional)\loose
\STATE Collect $N_0$ episodes with $\pipt$, set $\Dpos$ to successful episodes, $\Dneg$ to all others
\STATE Initialize discriminator $\Dhat_h$ on $\Dpos_h$ and $\Dneg_h$, following \eqref{eq:Dhat}
\STATE Initialize $\pi_1$ to $\pi_0$ or random policy
\FOR{$t = 1, 2, 3, \ldots $}
\STATE Run $\pi_t$ for one episode, add episode to $\Dpos$ if it is successful, otherwise to $\Dneg$
\STATE Update $\Dhat$ on $\Dpos$ and $\Dneg$, following \eqref{eq:Dhat}
\STATE Update $\pi_t$ to $\pi_{t+1}$ by maximizing reward
$\begin{aligned}
\rvm_h(s,a) = \rout(s) + \lambda \cdot \clip_\beta \left (  \log \tfrac{\Dhat_h(s,a)}{1-\Dhat_h(s,a)} \right )
\end{aligned}$
\ENDFOR
\end{algorithmic}
\end{algorithm}

\textbf{Connection to adversarial inverse RL.}
Adversarial inverse RL is an imitation learning approach that seeks to learn a reward function which, when maximized with RL, leads to a policy that has visitations matching that of a given set of expert demonstrations $\frakDE$ \citep{finn2016connection, ho2016generative, fu2017learning}.
In particular, the approach of \citet{fu2017learning} trains a discriminator $\Dhat$ as we do in \eqref{eq:Dhat}, yet replacing $\Dpos$ with $\frakDE$, the expert demonstrations, and $\Dneg$ with all episodes collected online. It then trains a policy with RL to maximize the reward
$\log \frac{\Dhat(s,a)}{1 - \Dhat(s,a)}$,
and alternates between updating $\Dhat$ (keeping $\Dpos$ fixed but expanding $\Dneg$ to include all episodes collected during online RL), and the RL policy maximizing this reward.\loose

Note that this form of reward is precisely the form of the \prvm process reward, with the key difference that, instead of being given a set of expert demonstrations, we set $\Dpos$ to observed episodes with positive outcome reward and $\Dneg$ to observed episodes with zero outcome reward. Thus, \prvm can alternatively be seen as applying adversarial inverse RL to obtain a dense process reward in the setting where outcome rewards are observed, and replacing the demonstrations with episodes that obtain positive outcome reward. 
Critically, though, \prvm continues to update $\Dpos$ online, using the observed outcome reward to determine which episodes to place in $\Dpos$, whereas adversarial inverse RL assumes access to a fixed set of positive examples. As we show in \Cref{sec:experiments}, this continual update is necessary to achieving effective performance. 
\Cref{thm:proc_consistent}, furthermore, shows that this approach leads to a policy that also maximizes the original outcome reward.


\iftoggle{arxiv}{}{\vspace{-0.5em}}
\section{Experiments}\label{sec:experiments}
\iftoggle{arxiv}{}{\vspace{-0.5em}}

Finally, we seek to understand whether \prvm rewards enable effective RL improvement in practice, in particular in robotic control domains. We aim to investigate (a) if \prvm process rewards enable faster RL improvement than running RL only on a sparse outcome reward and (b) if the optimal policy under the \prvm reward also maximizes the outcome reward.

\newcommand{\pipre}{\pi_{\mathrm{pre}}}

\subsection{Experiment Details}

We implement RL with \prvm rewards as described in Algorithm~\ref{alg:rdrs_rl}, with the only modification that we train a single discriminator for all steps $h$ together rather than an individual discriminator for each step. For all simulated experiments, results are averaged across three random seeds, and error bars denote one standard error. We provide further details on our experimental setup below; see \Cref{sec:exp_details} for additional details, and \Cref{sec:exp_individual_results} for per-task results from aggregated curves.

\textbf{RL algorithm.} We focus primarily on the RL finetuning setting, and test \prvm with two different commonly utilized RL finetuning algorithms:  \dsrl ~\citep{wagenmaker2025steering} and \resrl ~\citep{johannink2018residual,ankile2025imitation, yuan2024policy}. Both approaches assume access to a pretrained policy $\pipre$. \dsrl applies specifically to settings in which $\pipre$ is a diffusion or flow policy, and learns a ``noise'' policy $\pi_{\dsrl}(z \mid s)$ that selects the input noise to the pretrained policy's denoising process.
\resrl applies to any type of pretrained policy, and learns an additive correction to the action produced by the pretrained policy. That is, at each state the pretrained policy first produces an action $a_0 = \pipre(s)$, and the residual policy $\pi_{\text{res}}(s, a_0)$ then predicts a correction, yielding the final action: $a = a_{0} + \gamma \cdot \pi_{\text{res}}(s, a_{0})$. For both algorithms, we roll out the pretrained policy $N_0$ episodes on the task prior to finetuning, and utilize the rollouts to initialize both the discriminator $\widehat{f}$, as described in Algorithm~\ref{alg:rdrs_rl}, and the replay buffer for RL training. In \Cref{sec:exp_rlpd}, we also apply \prvm to settings where a set of demonstrations are given rather than a pretrained policy, and consider the \textsc{Rlpd} \citep{ball2023efficientonlinereinforcementlearning} algorithm.

\begin{wrapfigure}{r}{0.25\textwidth}
  \vspace{-4mm}
  \centering

  \newcommand{\sceneimgw}{0.48\linewidth}
  \newcommand{\sceneimggap}{0.02\linewidth}

  \includegraphics[width=\sceneimgw]{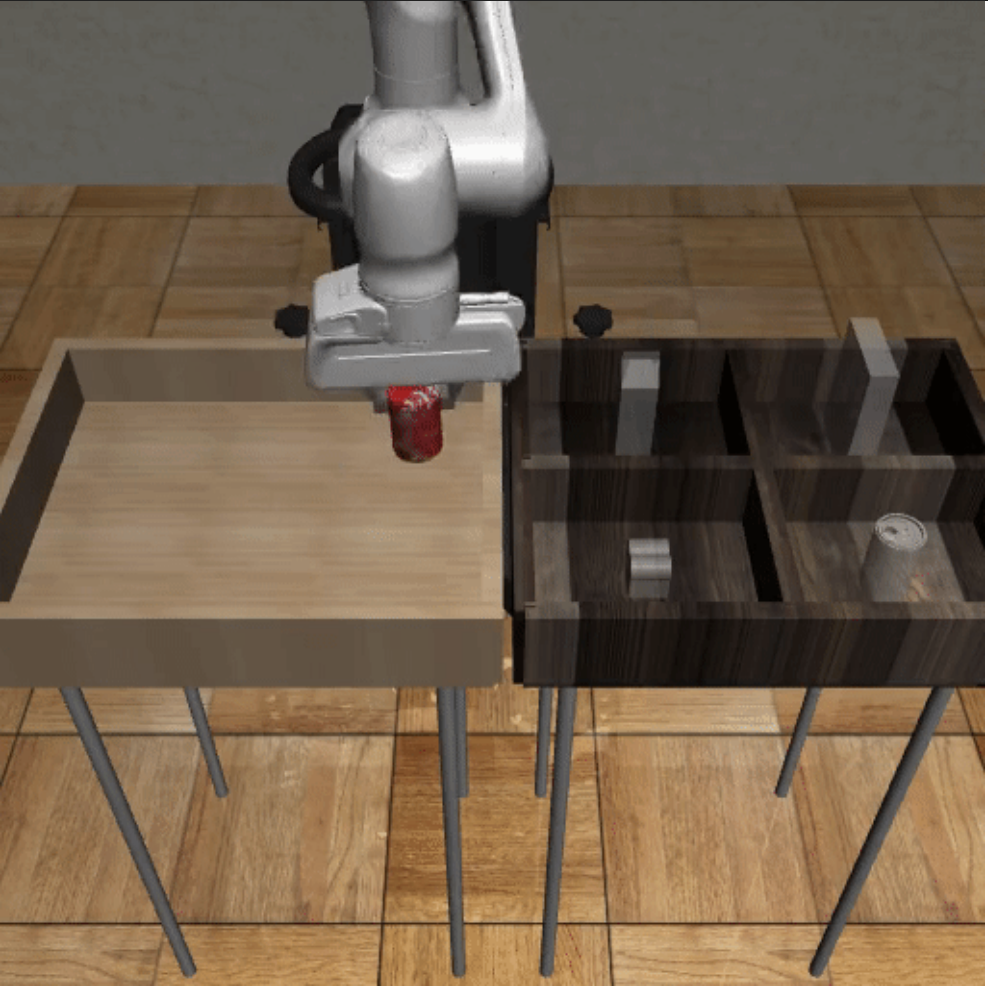}%
  \hspace{\sceneimggap}%
  \includegraphics[width=\sceneimgw]{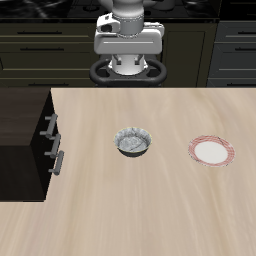}

  \vspace{0.06em}

  \includegraphics[width=\sceneimgw]{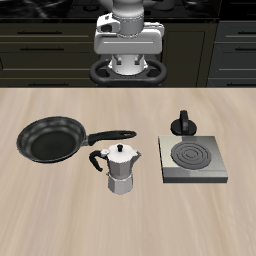}%
  \hspace{\sceneimggap}%
  \includegraphics[width=\sceneimgw]{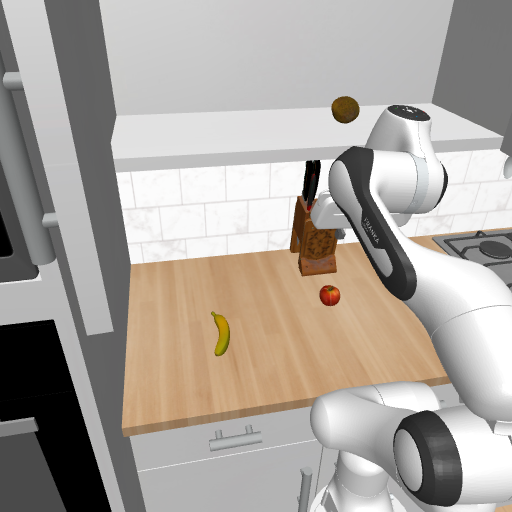}
  \caption{\texttt{Robomimic}, \texttt{LIBERO}, and \texttt{Robocasa} scenes.}
    \vspace{-1.5em}
  
  \label{fig:scenes}
\end{wrapfigure}
\textbf{Environments.} For our RL finetuning experiments, we evaluate our method on the \texttt{LIBERO-90}~\citep{liu2023liberobenchmarkingknowledgetransfer} and  \texttt{RoboCasa}~\citep{nasiriany2024robocasalargescalesimulationeveryday} benchmarks, and in the real world on the WidowX 250 6-DoF robot arm. 
The \texttt{LIBERO-90} benchmark is an image-based simulated robotic manipulation benchmark consisting of 90 total tasks distributed across 20 scenes. We focus primarily on \texttt{Kitchen Scenes 1-3}, comprising a total of 16 tasks. In addition, we evaluate on \texttt{RoboCasa}, an image-based simulated manipulation benchmark with visually diverse, realistic kitchen environments. We use the \texttt{PnPCounterToCab} task suite, where the robot must pick an object from the counter and place it inside a cabinet, and consider 3 such tasks.
For our experiments learning from demonstrations, we consider the \texttt{Robomimic} benchmark \citep{mandlekar2021matterslearningofflinehuman}, focusing on the \texttt{Can} and \texttt{Square} tasks. See \Cref{fig:scenes} for examples of the \texttt{LIBERO}, \texttt{RoboCasa}, and \texttt{Robomimic} scenes used in our experiments.
All three benchmarks provide a sparse reward signal, taking value $+1$ upon successful task completion and $0$ otherwise, which we utilize as the outcome reward. For the WidowX experiments, a human provides a sparse outcome reward on successful completion of the task.\loose

\begin{figure*}[t]
  \centering
  \includegraphics[width=0.85\textwidth]{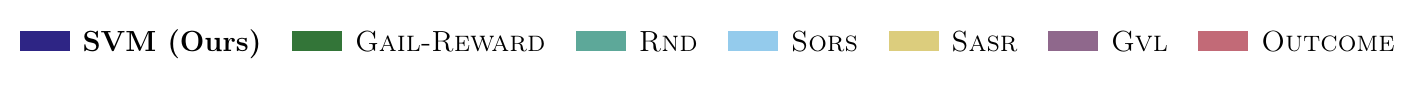}
  \vspace{0.5em}

  \begin{minipage}[t]{0.425\textwidth}
    \centering
    \includegraphics[width=.365\linewidth]{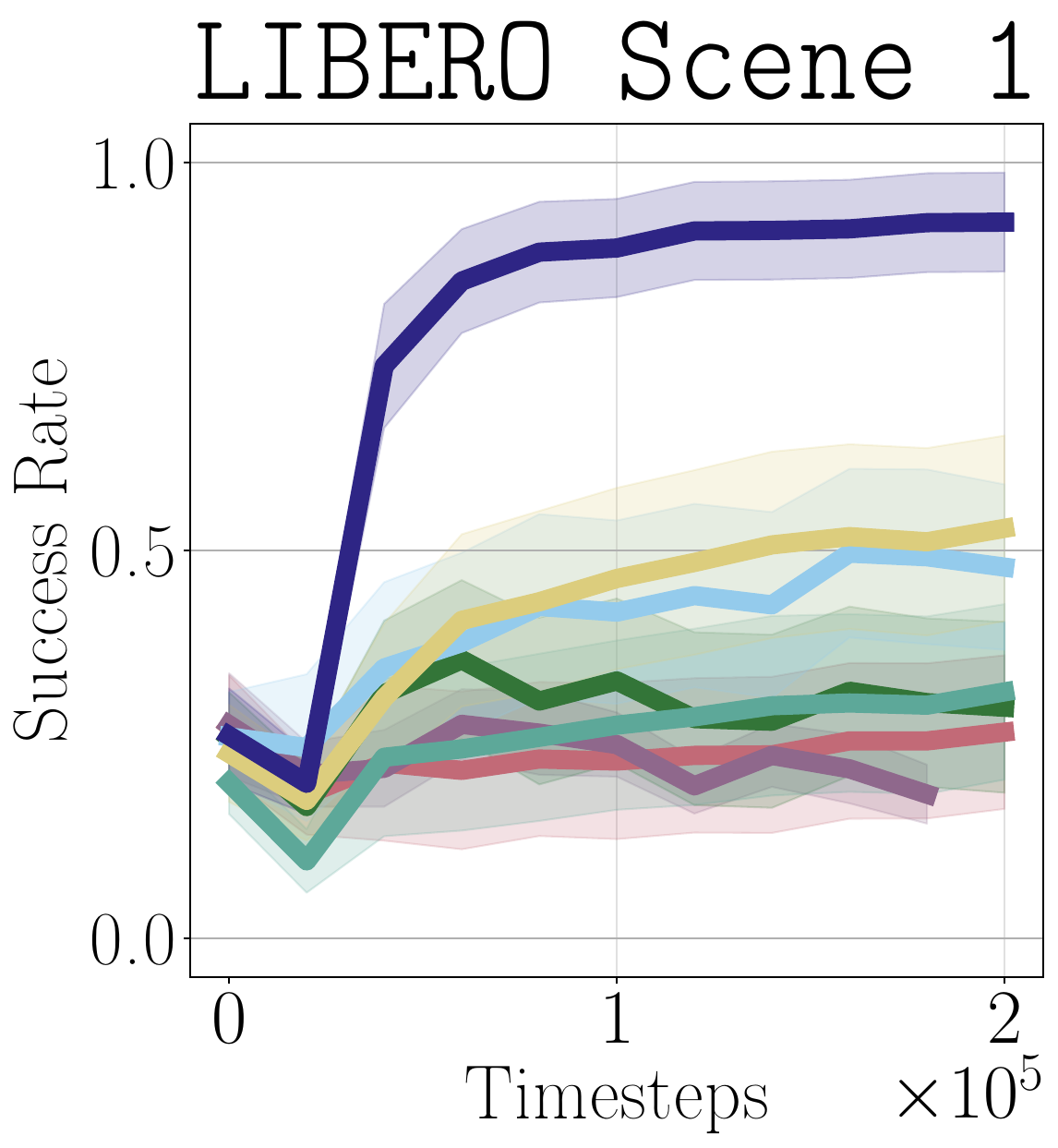}%
    \includegraphics[width=.305\linewidth]{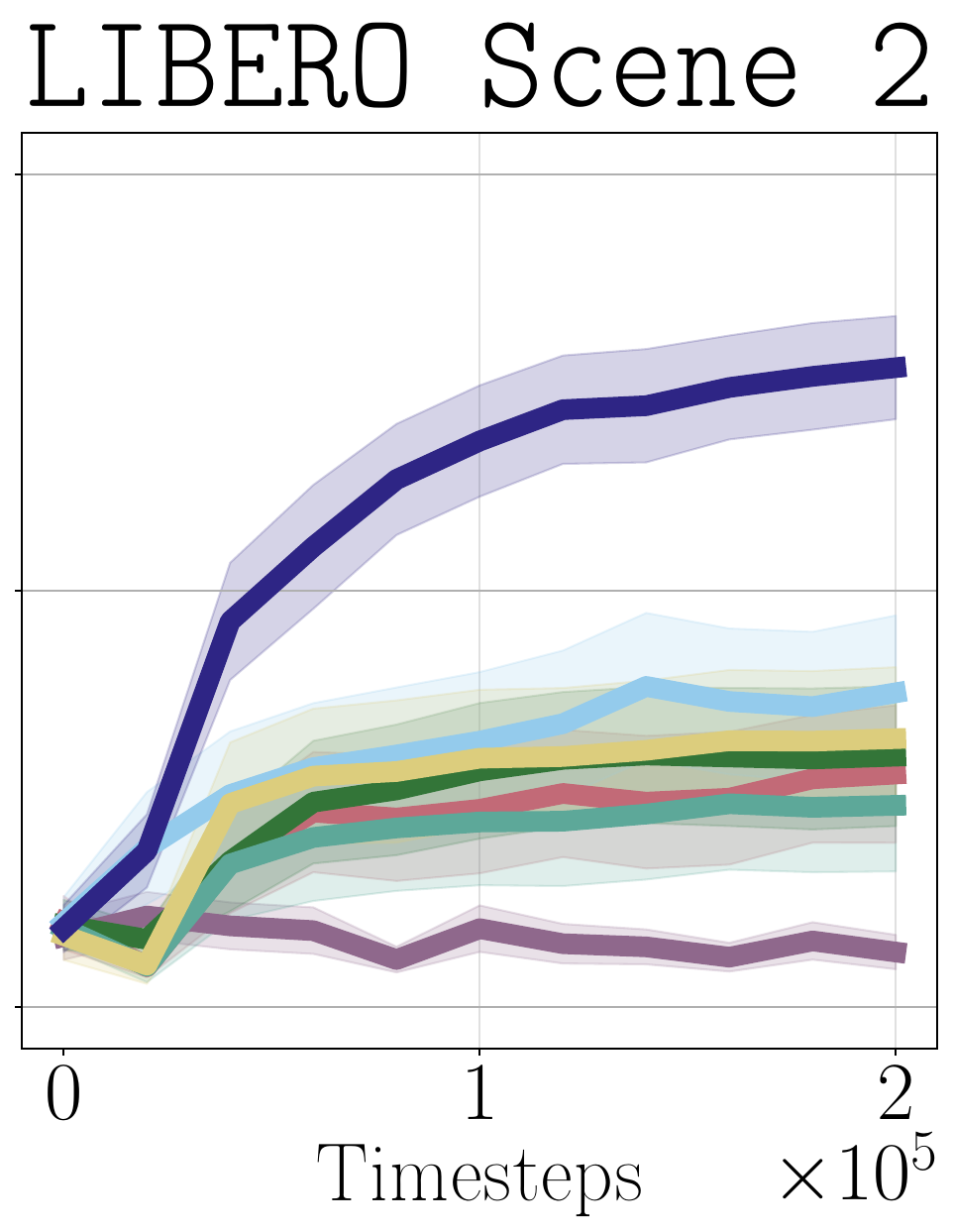}%
    \includegraphics[width=.305\linewidth]{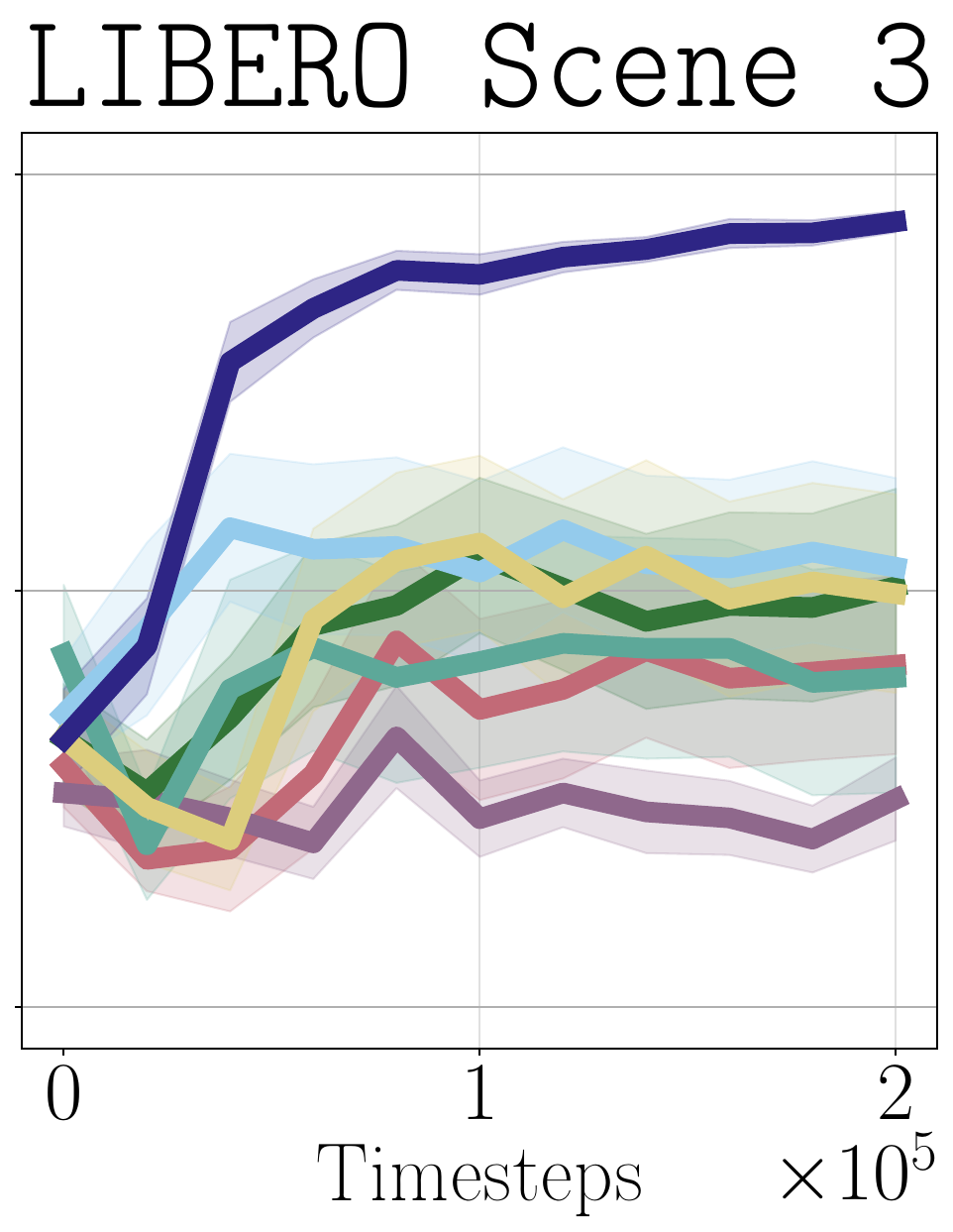}
    \caption{Aggregated results of RL finetuning with \dsrl and \prvm on \texttt{LIBERO-90 Scenes 1-3} (16 tasks).}
    \label{fig:libero_dsrl}
  \end{minipage}
  \hfill
  \begin{minipage}[t]{0.555\textwidth}
    \centering
    \includegraphics[width=.28\linewidth]{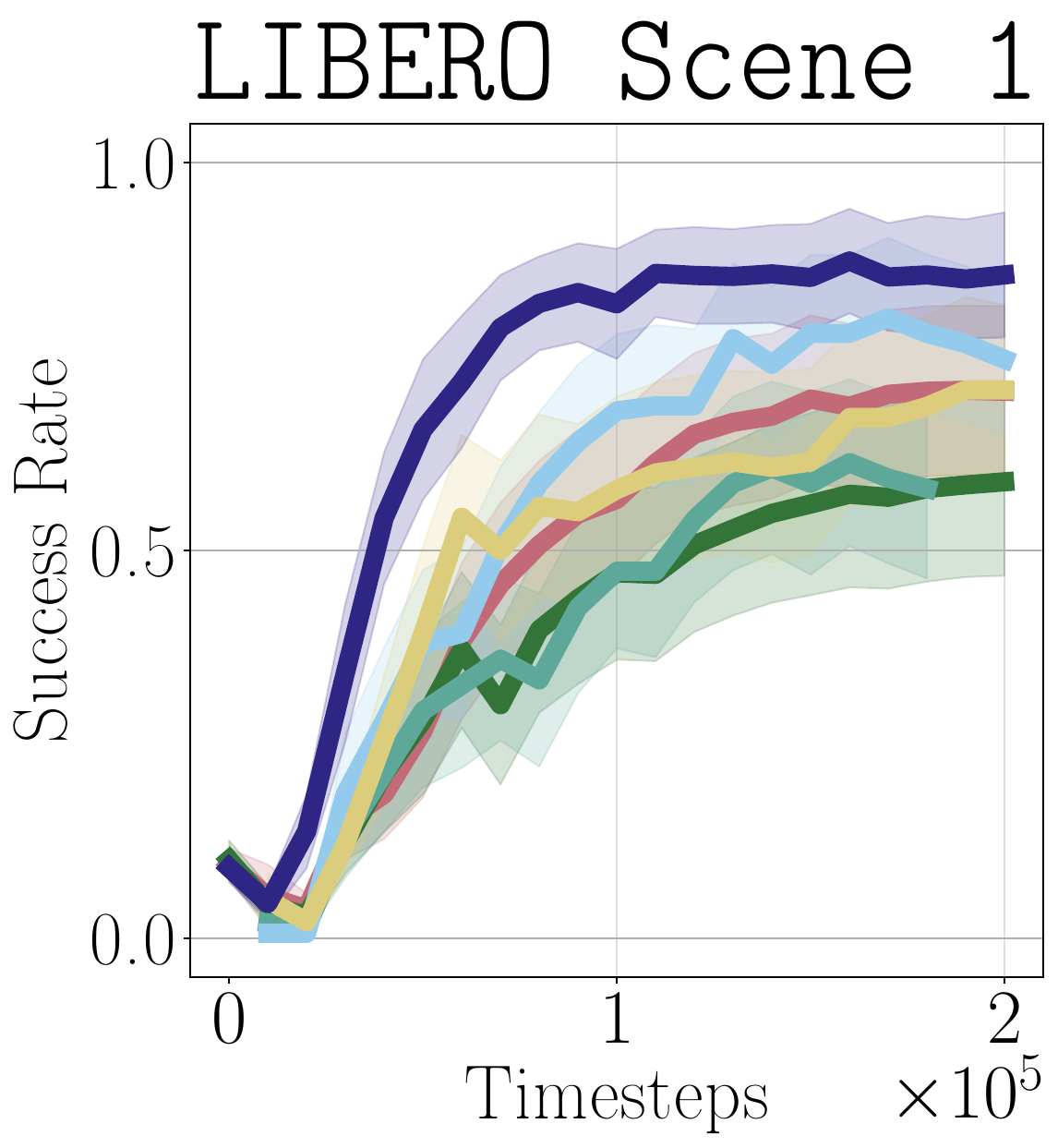}%
    \includegraphics[width=.235\linewidth]{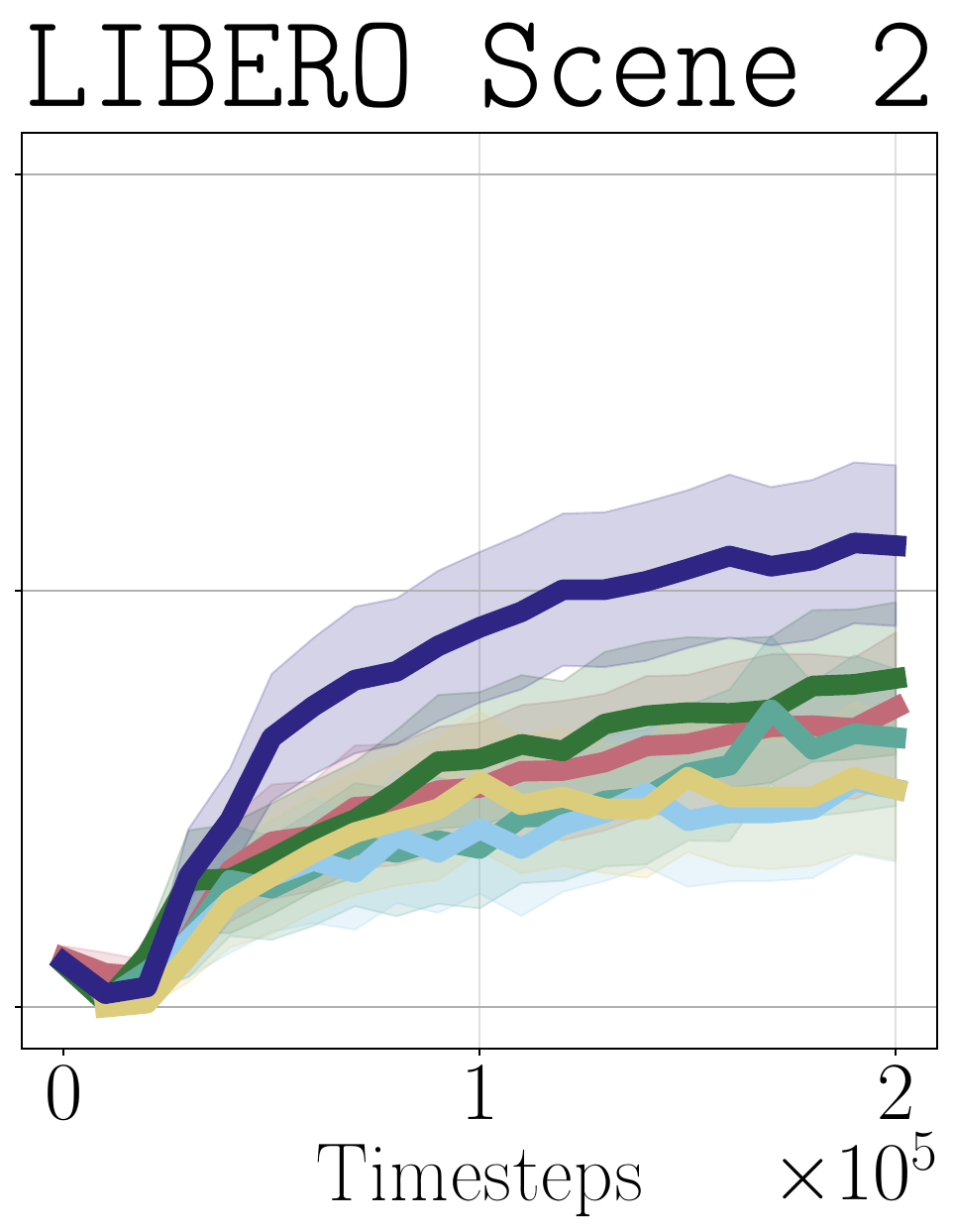}%
    \includegraphics[width=.235\linewidth]{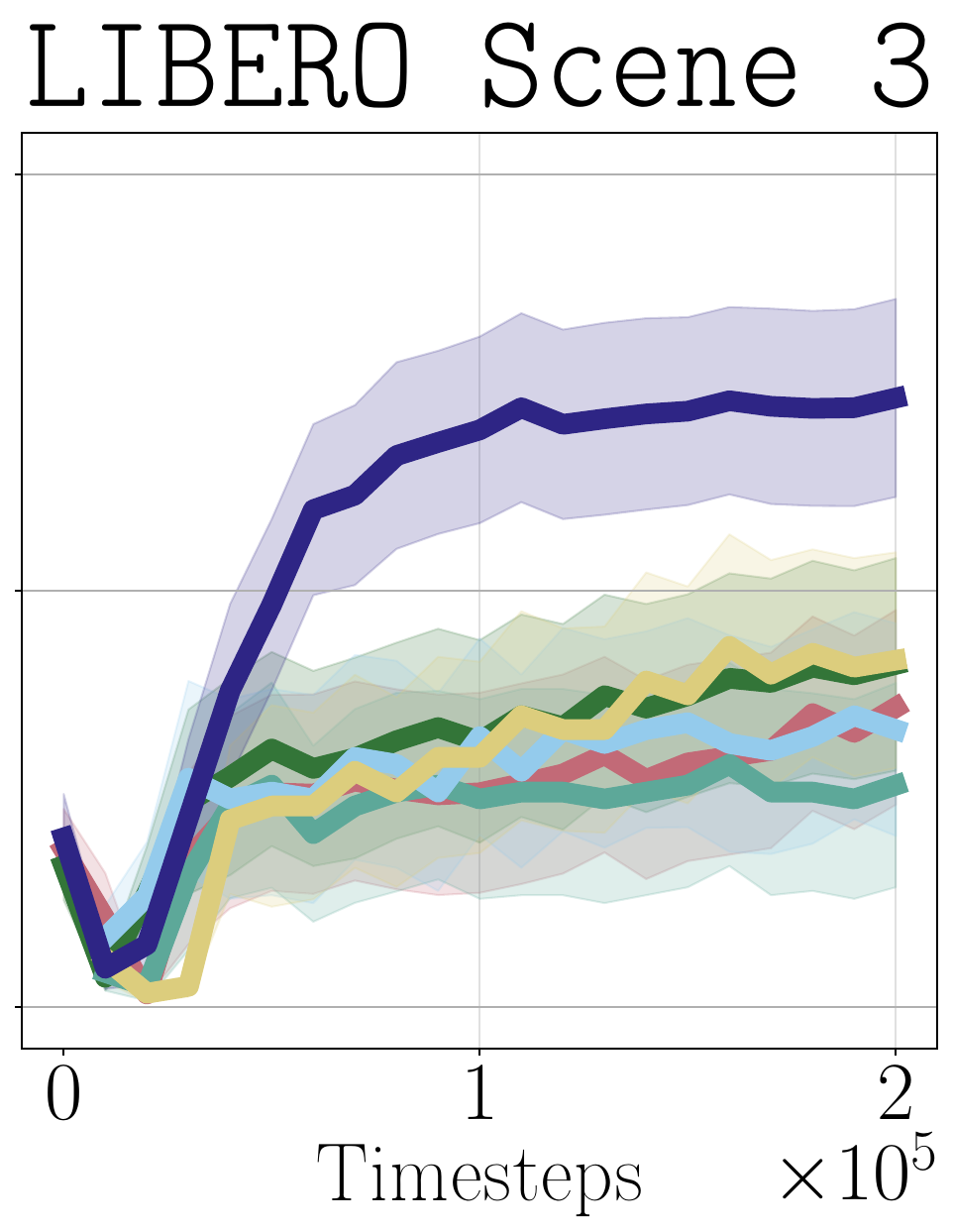}%
    \includegraphics[width=.243\linewidth]{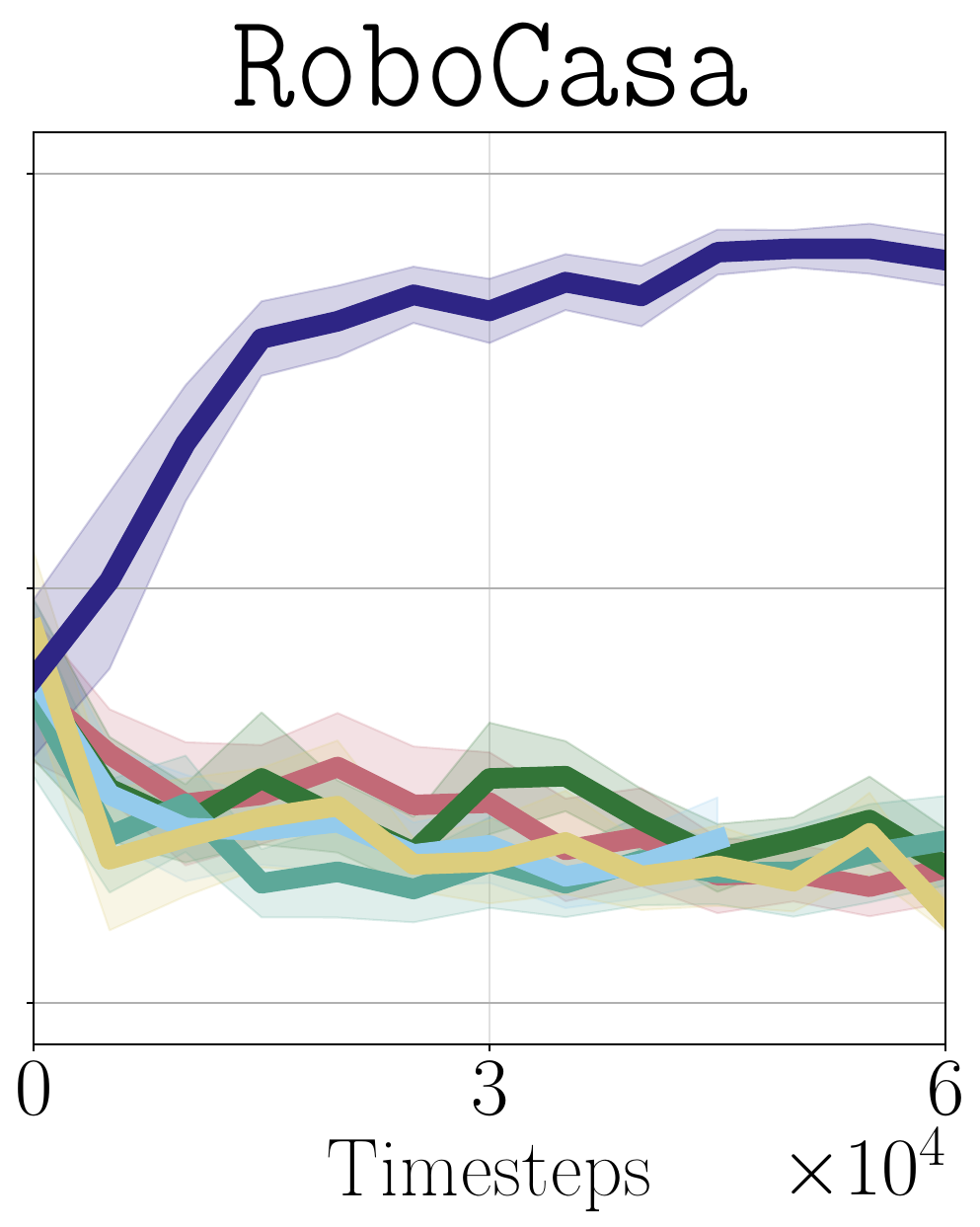}
    \caption{Aggregated results of RL finetuning with \resrl and \prvm on \texttt{LIBERO-90 Scenes 1-3} (16 tasks) and \texttt{Robocasa} (3 tasks).}
    \label{fig:libero_residual}
  \end{minipage}
  \vspace{-1.5em}
\end{figure*}

\newcommand{\gail}{\textsc{Gail}\xspace}
\newcommand{\gailr}{\textsc{Gail-reward}\xspace}

\textbf{Baselines.}
We compare \prvm rewards to several other reward shaping methods, focusing primarily on approaches that only require access to an outcome reward and no other  information:
\begin{itemize}[leftmargin=*]
    \item \textsc{Sors}~\citep{memarian2021self}: Trains a reward function using the Bradley-Terry preference model to produce an ordering on trajectories consistent with the discounted return received in each trajectory. 
    \item \textsc{Sasr}~\citep{ma2024highly}: Estimates how likely a state is to be in a successful trajectory by applying kernel density estimation to the observed trajectories. Utilizes the estimated density to shape the reward. 
     \item \gail~\citep{ho2016generative}: Uses a discriminator to distinguish a provided set of expert demonstrations from policy rollouts. Note that \gail does not assume access to an outcome reward. 
     We consider two variants of \gail. In \Cref{sec:exp_rlpd}, where we assume access to a set of expert demonstrations, we run \gail exactly as described in \citet{ho2016generative}, and do not utilize the outcome reward at all. In other results, where we do not assume access to a set of demonstrations, we adapt \gail to our setting by making two modifications: (i) we treat successful transitions from the initial $N_0$ policy rollouts as expert demonstrations and all online policy rollouts as negative examples, and (ii) we combine the \gail reward with the outcome reward during finetuning. This approach, then, can be seen as a variant of \prvm rewards, but where instead of updating the positive examples online with the successful trajectories collected, we use only the successful trajectories in the initial rollouts, simulating the role of a fixed set of demonstrations. We refer to this variant as \gailr.   
    \item \textsc{Rnd}~\citep{burda2018exploration}: Provides an intrinsic reward based on prediction error in a fixed random network in order to induce exploration.
    \item \textsc{Gvl}~\citep{ma2024vision}: Uses a frozen VLM (\texttt{gemini-3.1-pro-preview}) to estimate task progress from shuffled video frames; we use the predicted task progress rate as the reward. Due to budget constraints, we only run this baseline on \dsrl \texttt{LIBERO} settings.
    \item \textsc{Outcome}: Utilizes the outcome reward with no shaping.
\end{itemize}
For all baselines (with the exception of \gail, as described above), we maximize the shaped reward combined with the outcome reward.

\subsection{\prvm Process Rewards Enable Efficient RL Finetuning}

We first consider the performance of RL with \prvm rewards in the RL finetuning regime on the \texttt{LIBERO} and \texttt{RoboCasa} benchmarks. For the \texttt{LIBERO} tasks, we set $\pipre$ to a diffusion transformer policy~\citep{dasari2024ingredients} pretrained with behavioral cloning on the entire \texttt{LIBERO-90} suite using the provided demonstrations. For the \texttt{RoboCasa} tasks, we pretrain a BC-Transformer following the official codebase~\citep{nasiriany2024robocasalargescalesimulationeveryday} on the \texttt{PnPCounterToCab} task suite using the provided human demonstrations.

Figures \ref{fig:libero_dsrl} and \ref{fig:libero_residual} report the RL finetuning performance on \texttt{LIBERO} (aggregated by scene) and \texttt{RoboCasa} (aggregated by task). Across all three \texttt{LIBERO} kitchen scenes and both RL finetuning algorithms (\dsrl and \resrl), \prvm rewards consistently improve both the final converged success rate and the sample efficiency, as compared to prior reward shaping methods and performance without shaping. In particular, we see that with \dsrl, \prvm leads to a significantly higher final success rate after $2 \cdot 10^5$ timesteps and much faster convergence than other approaches.
With \resrl, \prvm rewards either require $2 \times$ or more fewer samples to converge, or achieves a significantly higher success rate than baselines. Furthermore, for both RL finetuning approaches, the final performance achieved by \prvm rewards consistently reaches over 80\%, demonstrating that \prvm rewards do not significantly diminish the ability of the final policy to maximize the outcome reward, while leading to much faster convergence.

These trends continue to hold on \texttt{RoboCasa}---across all three tasks, \prvm substantially improves RL finetuning, quickly achieving a success rate of approximately $90\%$, while other methods fail to enable RL improvement at all. Note that we only consider \resrl on \texttt{RoboCasa} as the official \texttt{RoboCasa} policy is not a diffusion or flow policy, which is required for \dsrl.\loose

\begin{figure*}[t]
  \centering
  \includegraphics[width=0.7\textwidth]{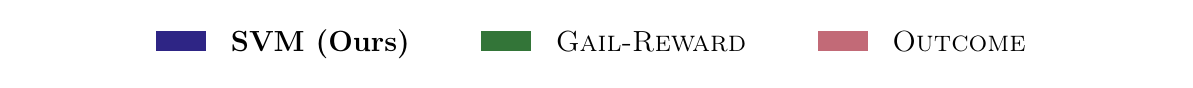}
  \vspace{0.2em}

  \begin{minipage}[t]{0.45\textwidth}
   \centering
    \includegraphics[width=.368\linewidth]{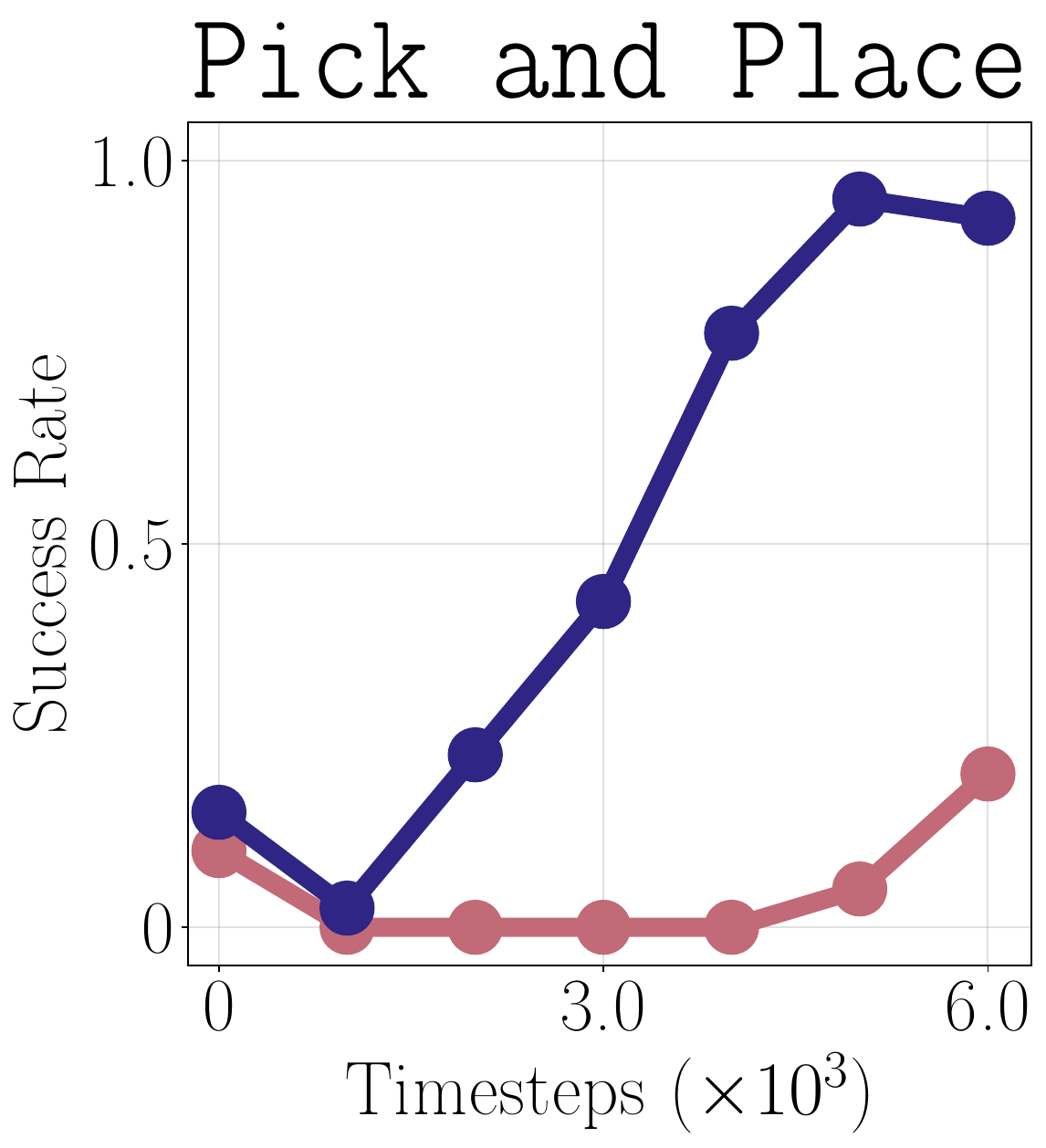}\hfill
    \includegraphics[width=.31\linewidth]{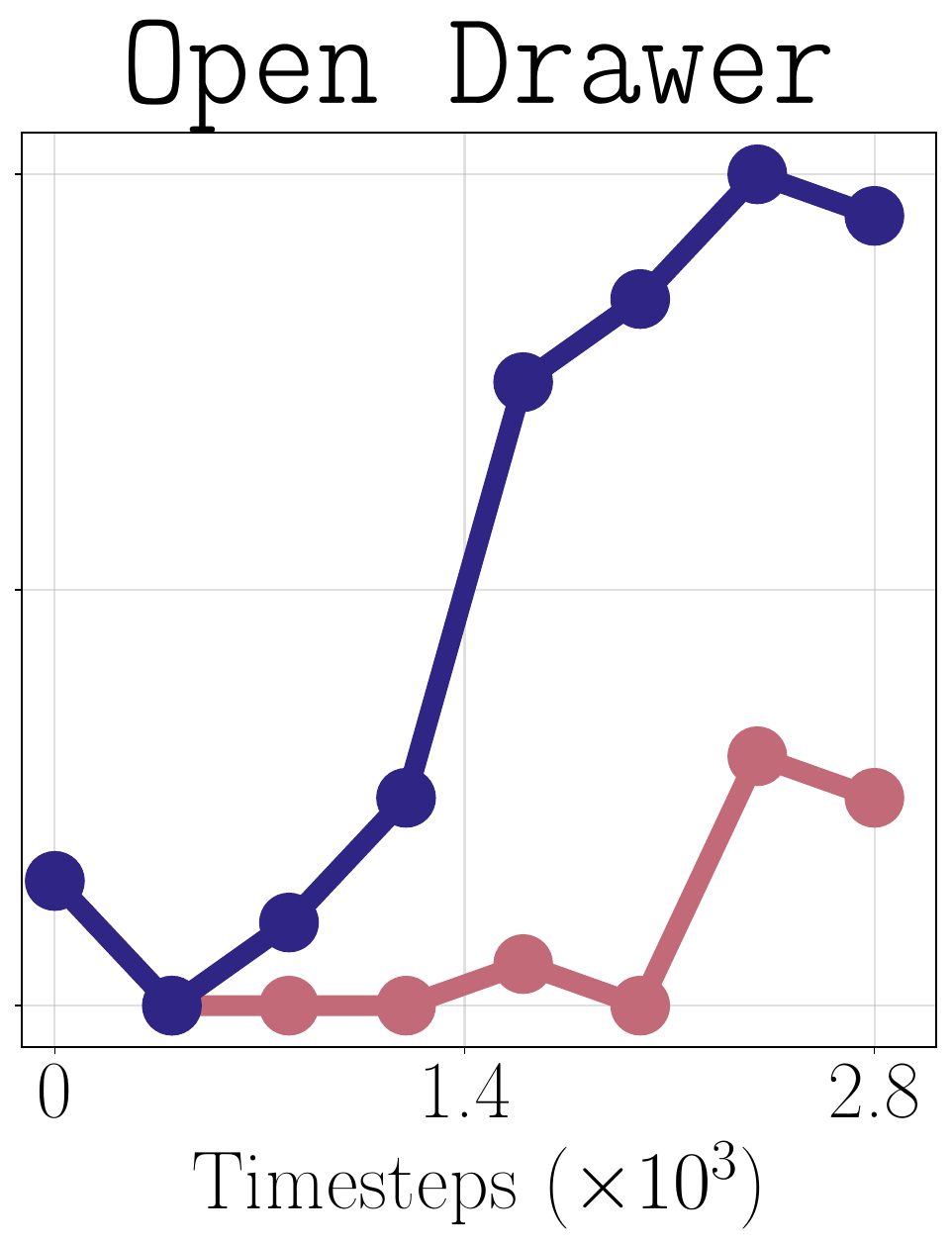}\hfill
    \includegraphics[width=.31\linewidth]{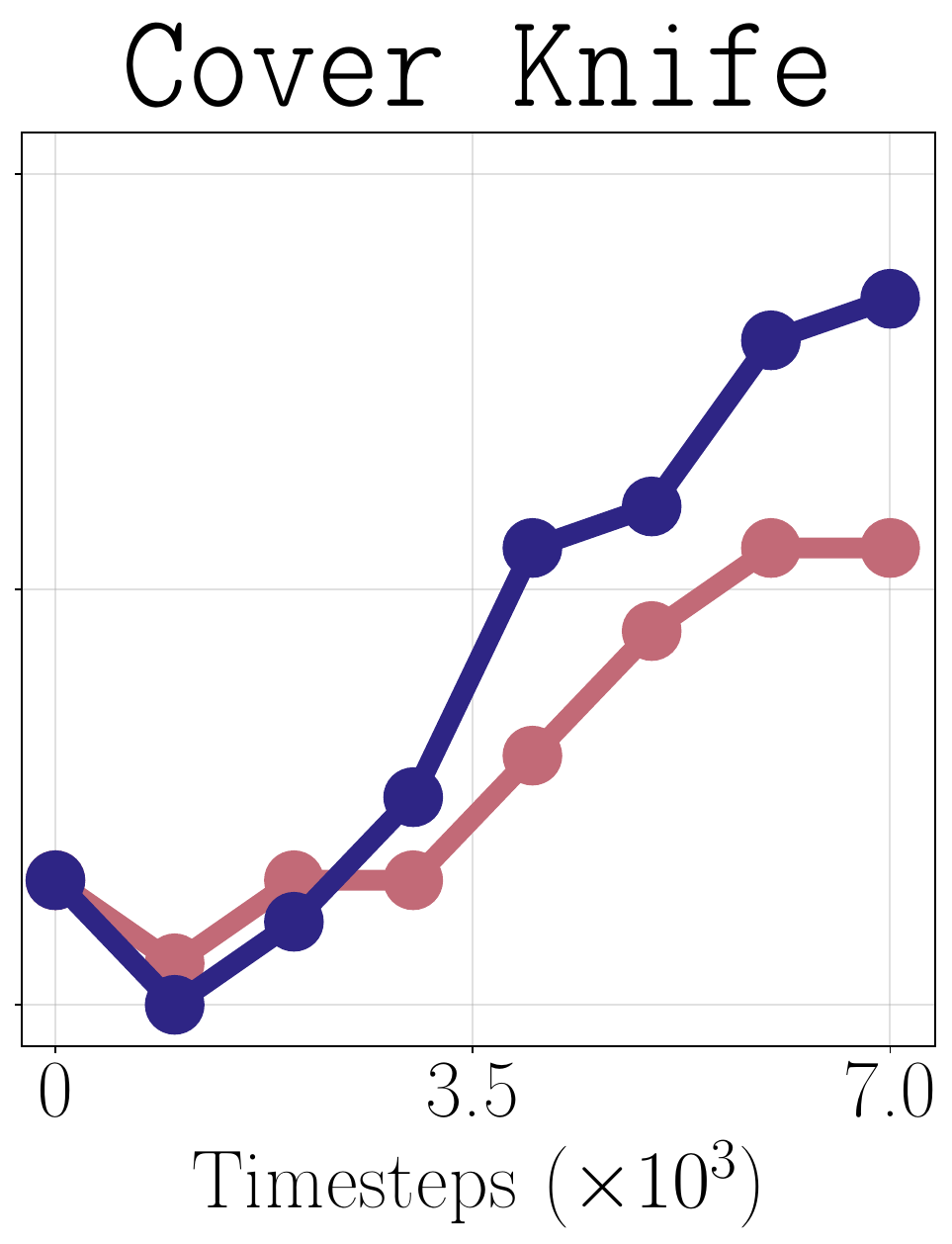}
    \phantomcaption
    \caption{RL finetuning with \dsrl and \prvm reward on 3 real-world tasks on WidowX robot arm (Figure \ref{fig:widowx_exp_scene}).}
    \label{fig:widowx_exp}
  \end{minipage}
  \hfill
  \begin{minipage}[t]{0.205\textwidth}
    \centering
    \includegraphics[width=0.823\linewidth]{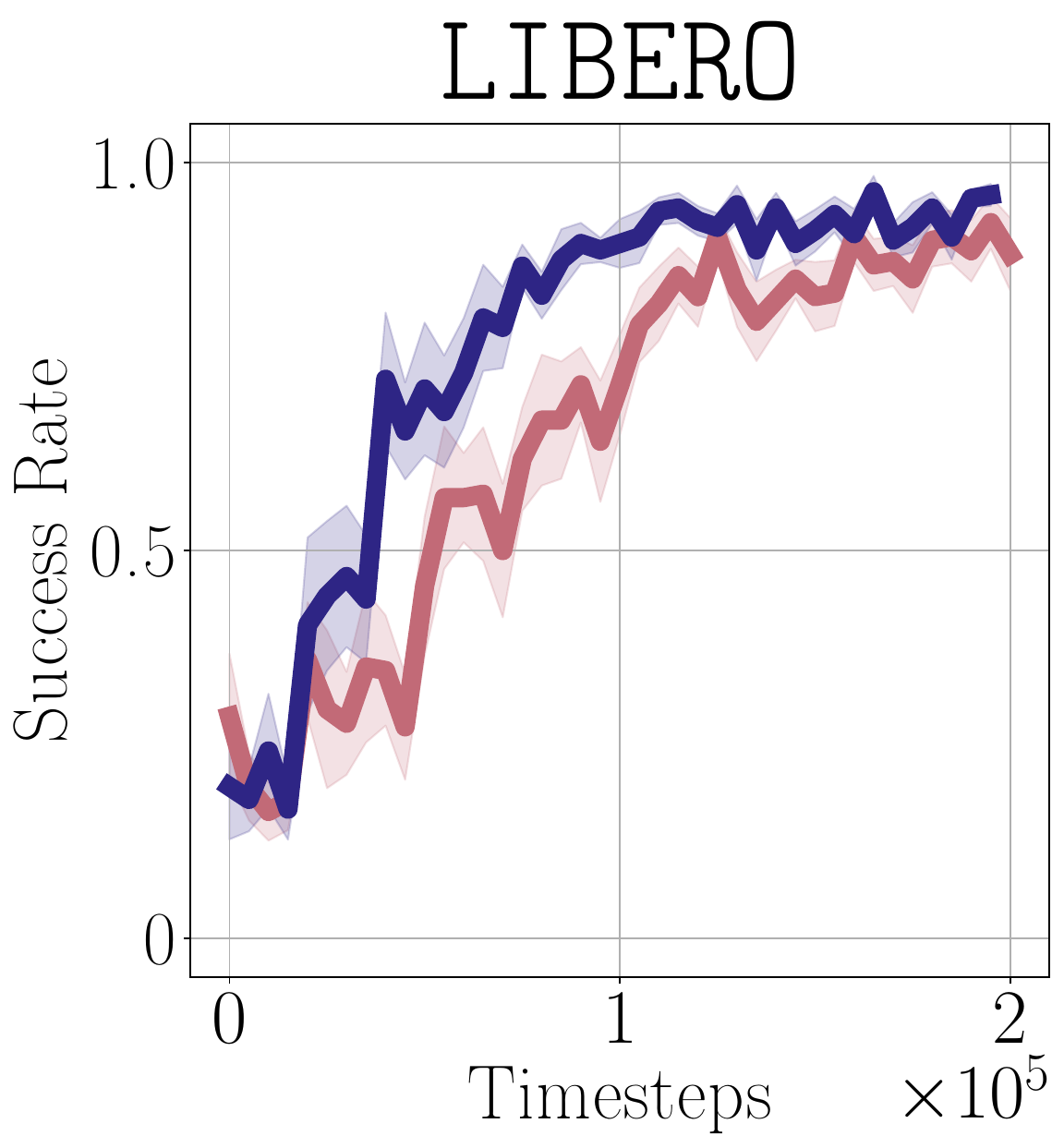}
  \caption{
    RL finetuning of $\pi_0$ with \dsrl and \prvm.
  }
  \label{fig:dsrl_pi0_exp}
  \end{minipage}
  \hfill
  \begin{minipage}[t]{0.32\textwidth}
  \includegraphics[width=0.535\linewidth]{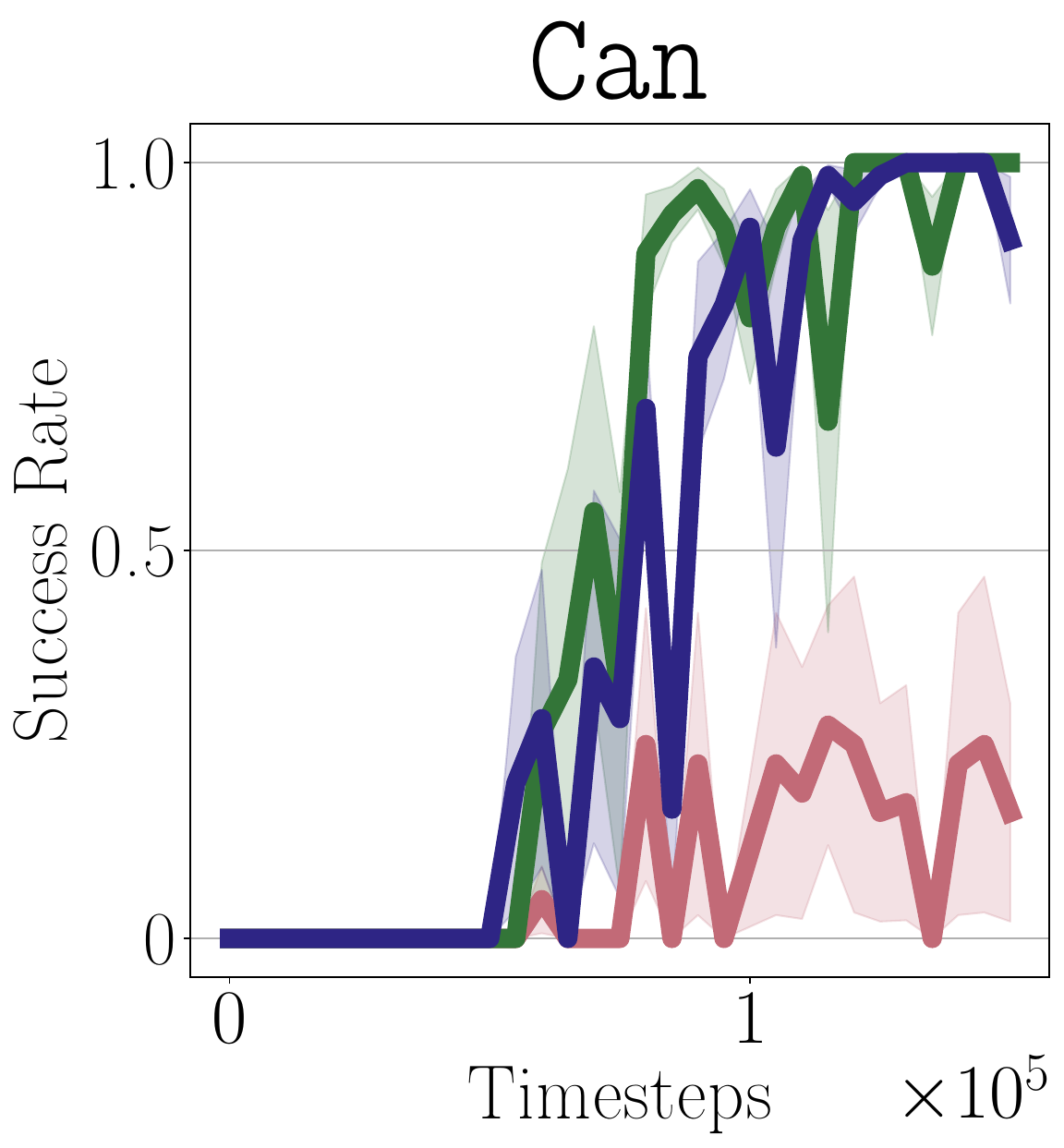}\hfill
  \includegraphics[width=0.45\linewidth]{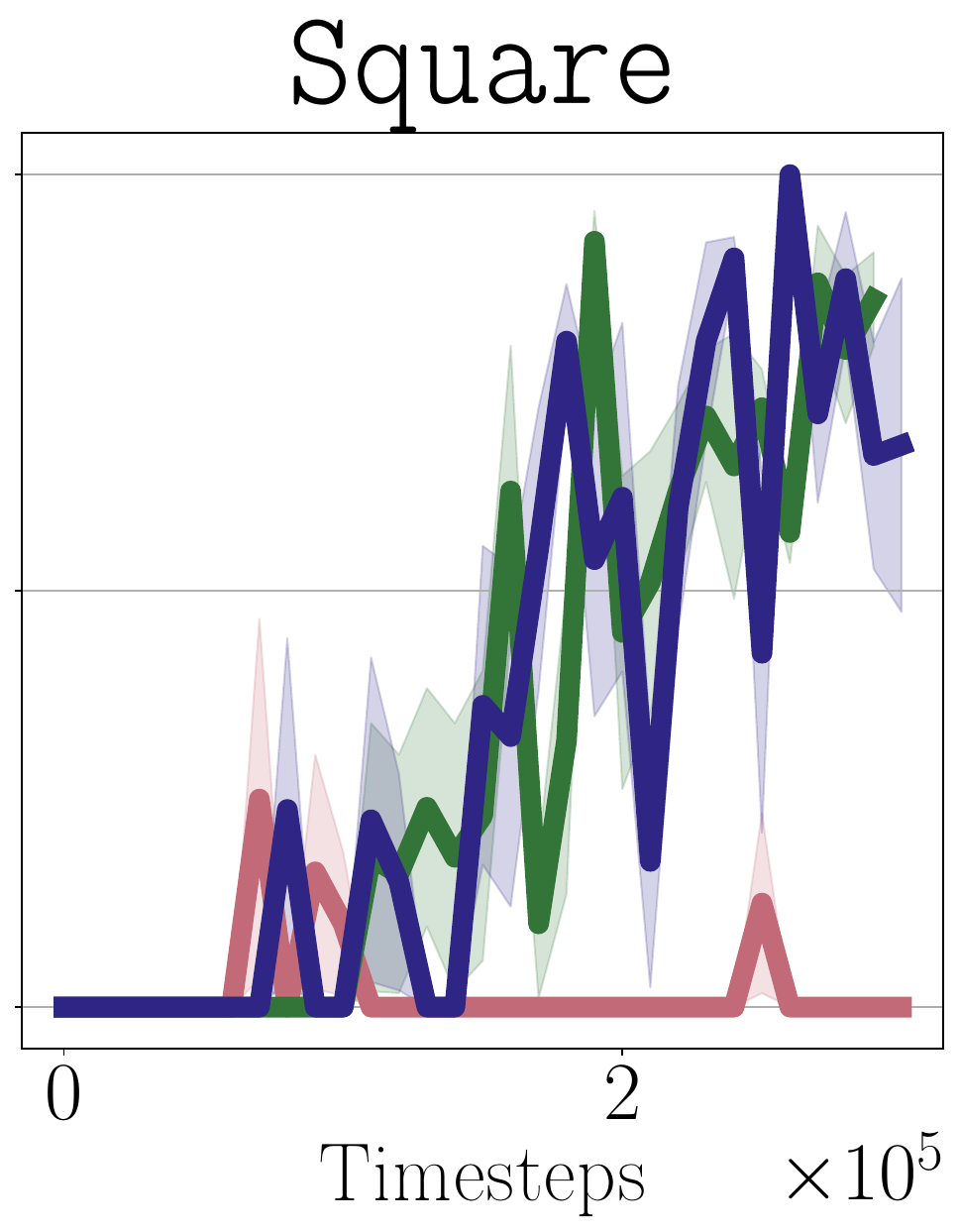}
  \caption{RL from demonstrations with \textsc{RLPD} and \prvm process reward on \texttt{Robomimic}.}
  \label{fig:robomimic_exp}
  \end{minipage}

  \vspace{-1.5em}
\end{figure*}

\iftoggle{arxiv}{}{\vspace{-0.25em}}
\subsection{\prvm Process Rewards Enable Efficient Real-World RL for Robotic Control Policies}\label{sec:exp_real}
\iftoggle{arxiv}{}{\vspace{-0.25em}}

We next demonstrate that \prvm scales to real-world robotic RL finetuning settings. 
For the pretrained policy, we train a multi-task diffusion policy on the BridgeData V2 dataset ~\cite{walke2024bridgedatav2datasetrobot}, which contains over 60{,}000 trajectories on a diverse set of tasks and scenes. 
\begin{wrapfigure}{r}{0.48\textwidth}
  \centering
  \vspace{-1em} 

  \raisebox{0.75em}{%
    \includegraphics[width=0.32\linewidth]{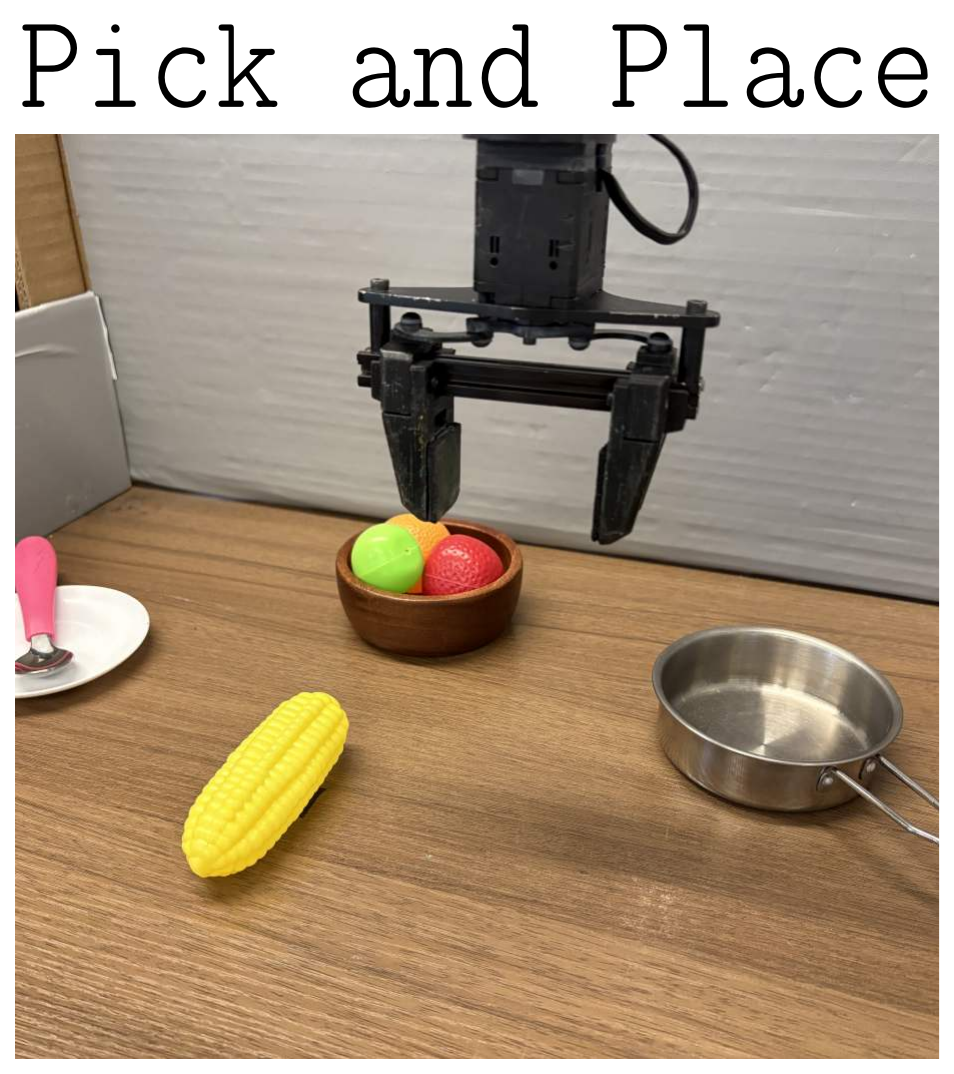}
  }\hfill
  \raisebox{0.75em}{%
    \includegraphics[width=0.32\linewidth]{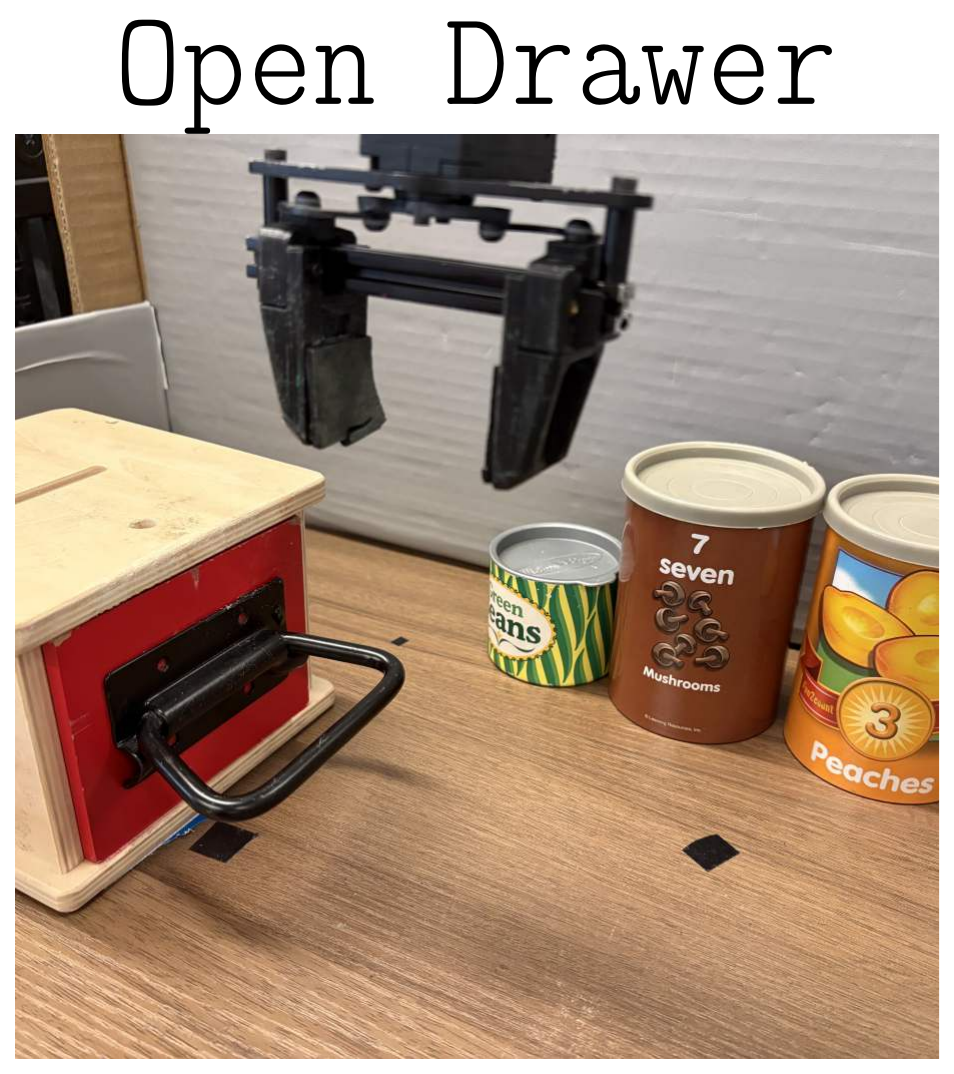}
  }\hfill
  \raisebox{0.75em}{%
    \includegraphics[width=0.32\linewidth]{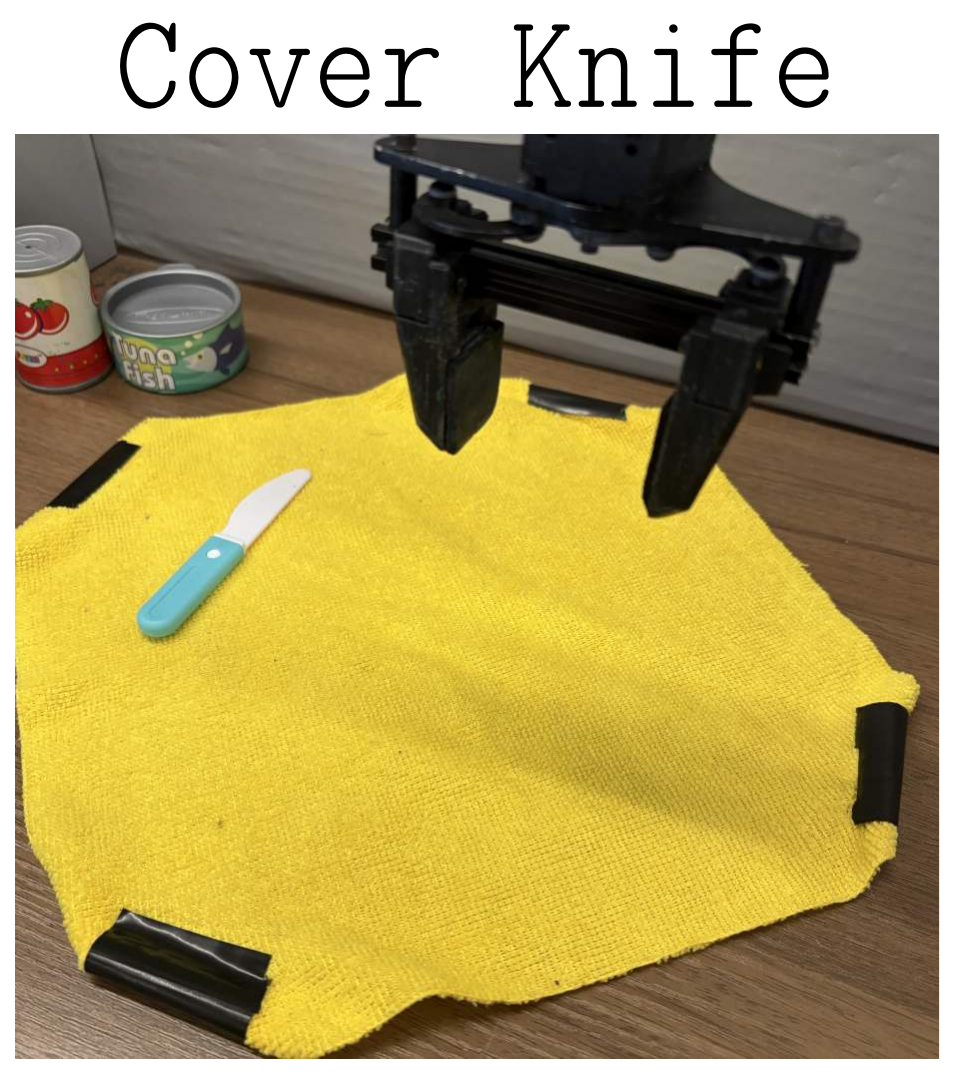}
  }
  \vspace{-2em}
  \caption{
    Visualization of real-world tasks on WidowX robotic arm.
    \texttt{Pick and Place}: pick up the corn and place it in the silver pot.
    \texttt{Open Drawer}: open the red drawer.
    \texttt{Cover Knife with Cloth}: lift the cloth and cover the knife.
  }
  \label{fig:widowx_exp_scene}

  \vspace{-1em} 
\end{wrapfigure}
We run \dsrl on three real-world tasks on the WidowX robotic arm: \texttt{Pick and Place}, \texttt{Open Drawer}, and \texttt{Cover Knife with Cloth}. Figure~\ref{fig:widowx_exp_scene} shows the scene setup and Figure \ref{fig:widowx_exp} the RL finetuning results, where each point is averaged across 20 evaluation rollouts. Across all tasks, the \prvm process reward substantially improves finetuning performance, enabling the policy to reach over $80\%$ success within a much smaller number of environment steps than running RL with only the outcome reward and, on \texttt{Pick and Place} and \texttt{Open Drawer}, enabling online improvement when training only on the outcome reward fails to learn. These results demonstrate the scalability of \prvm rewards, and its utility in enabling efficient real-world robotic RL.

\iftoggle{arxiv}{}{\vspace{-0.25em}}
\subsection{Finetuning VLAs with \prvm Process Rewards}
\iftoggle{arxiv}{}{\vspace{-0.25em}}

Next, we demonstrate that \prvm can improve the efficiency of running RL finetuning on pretrained generalist policies. Here, we consider $\pi_0$~\cite{black2024pi_0}, a 3.3B-parameter VLA with a VLM-based backbone and a flow-based action head. In particular, we use the publicly available \texttt{LIBERO} finetune of $\pi_0$. 
Following \cite{wagenmaker2025steering}, we consider RL finetuning on four \texttt{LIBERO-90} tasks for which $\pi_0$ achieves success in the 10-30\% range---tasks that leave room for RL improvement---and compare running RL with \prvm rewards to running RL with only outcome rewards.
Figure~\ref{fig:dsrl_pi0_exp} shows the finetuning performance of $\pi_0$ with \dsrl, aggregated across these tasks. We see that the \prvm process reward again yields substantial gains in sample efficiency, requiring roughly $2\times$ fewer environment steps to reach $90\%$ success, as compared to RL without reward shaping.

\iftoggle{arxiv}{}{\vspace{-0.25em}}
\subsection{RL from Demonstrations with \prvm Process Rewards}\label{sec:exp_rlpd}
\iftoggle{arxiv}{}{\vspace{-0.25em}}
While we have focused primarily on the finetuning regime, here we seek to understand whether \prvm rewards can also lead to improvement when running RL from scratch given a set of successful demonstrations. We focus on the \texttt{Robomimic} benchmark ~\cite{mandlekar2021matterslearningofflinehuman}, in particular the \texttt{Can} and \texttt{Square} tasks, and utilize \textsc{Rlpd} ~\cite{ball2023efficientonlinereinforcementlearning} as the base algorithm. 
We provide $200$ successful demonstrations, which are placed in the initial replay buffer.
We compare RL performance with the \prvm process reward to \gail and RL on only the outcome reward. As noted previously, here we run \gail exactly as described in \citet{ho2016generative}, utilizing the given demonstrations as positive examples and ignoring the outcome reward. For \prvm, we utilize the demonstrations to initialize $\Dpos$.

Our results are given in Figure~\ref{fig:robomimic_exp}. We see that \prvm leads to substantial gains over running RL on only the outcome reward, enabling the agent to successfully learn to solve the task. However, the performance of \prvm is essentially identical to that of \gail. We hypothesize that in this case, the rewards obtained by \gail and \prvm are approximately equivalent---since the RL policy is trained from scratch and starts with success rate of 0\%, the discriminator learned by \prvm and \gail is trained on similar sets of positive and negative examples. Furthermore, due to the simplicity of the \texttt{Robomimic} tasks, simply observing a handful of successful examples is likely sufficient to correctly classify a successful example without an outcome reward. Nonetheless, these results demonstrate that \prvm is an effective approach for reward shaping in RL regimes where we have access to demonstrations.\loose

\subsection{Understanding \prvm Process Rewards}\label{sec:ablations}

\begin{figure*}[t]
\centering
\begin{minipage}[t]{0.32\textwidth}
    \centering
    \begin{minipage}[t]{0.6\linewidth}
        \centering
        \includegraphics[width=\linewidth]{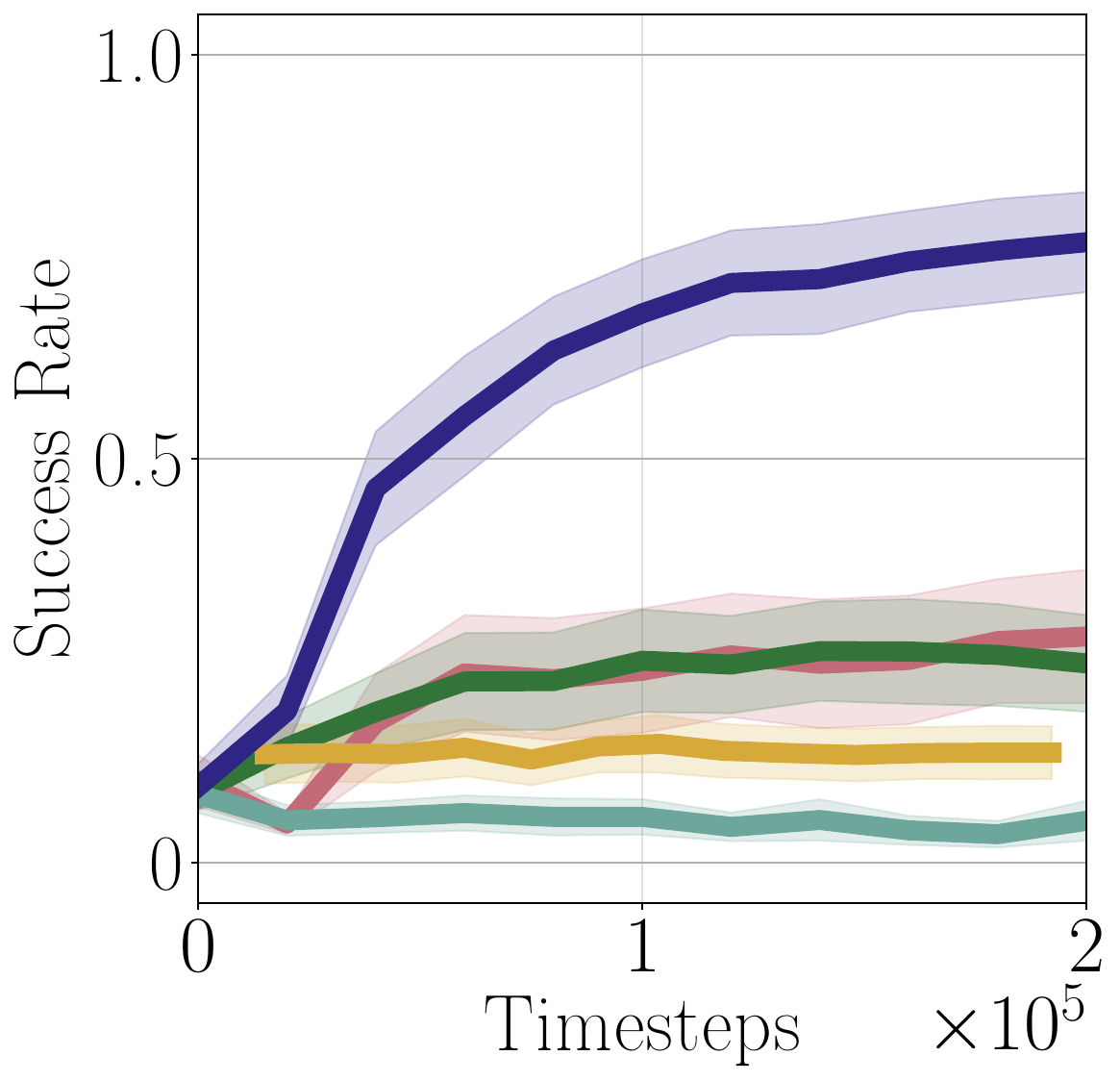}
    \end{minipage}
    \hfill
    \begin{minipage}[t]{0.22\linewidth}
        \centering
        \raisebox{0.35cm}{%
          \makebox[\linewidth][c]{\hspace*{-6mm}\includegraphics[height=3.4cm]{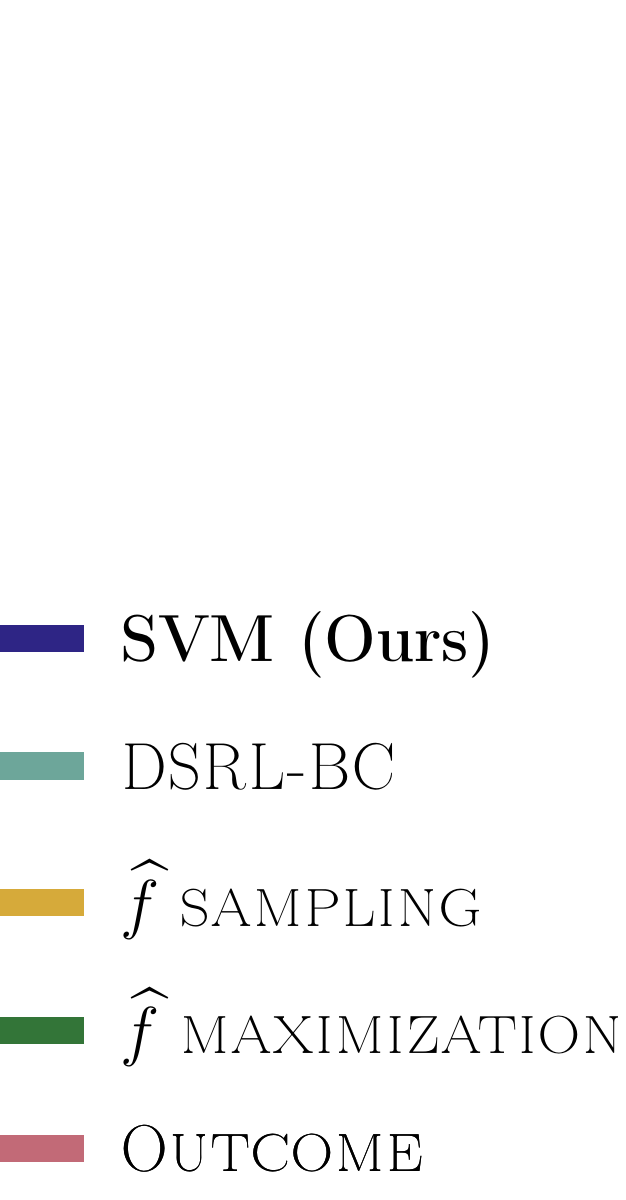}}%
        }
    \end{minipage}
    \caption{Ablation of policy extraction approach.}
    \label{fig:policy_extraction_ablation}
\end{minipage}
\hfill
\begin{minipage}[t]{0.32\textwidth}
    \centering
    \begin{minipage}[t]{0.6\linewidth}
        \centering
        \includegraphics[width=\linewidth]{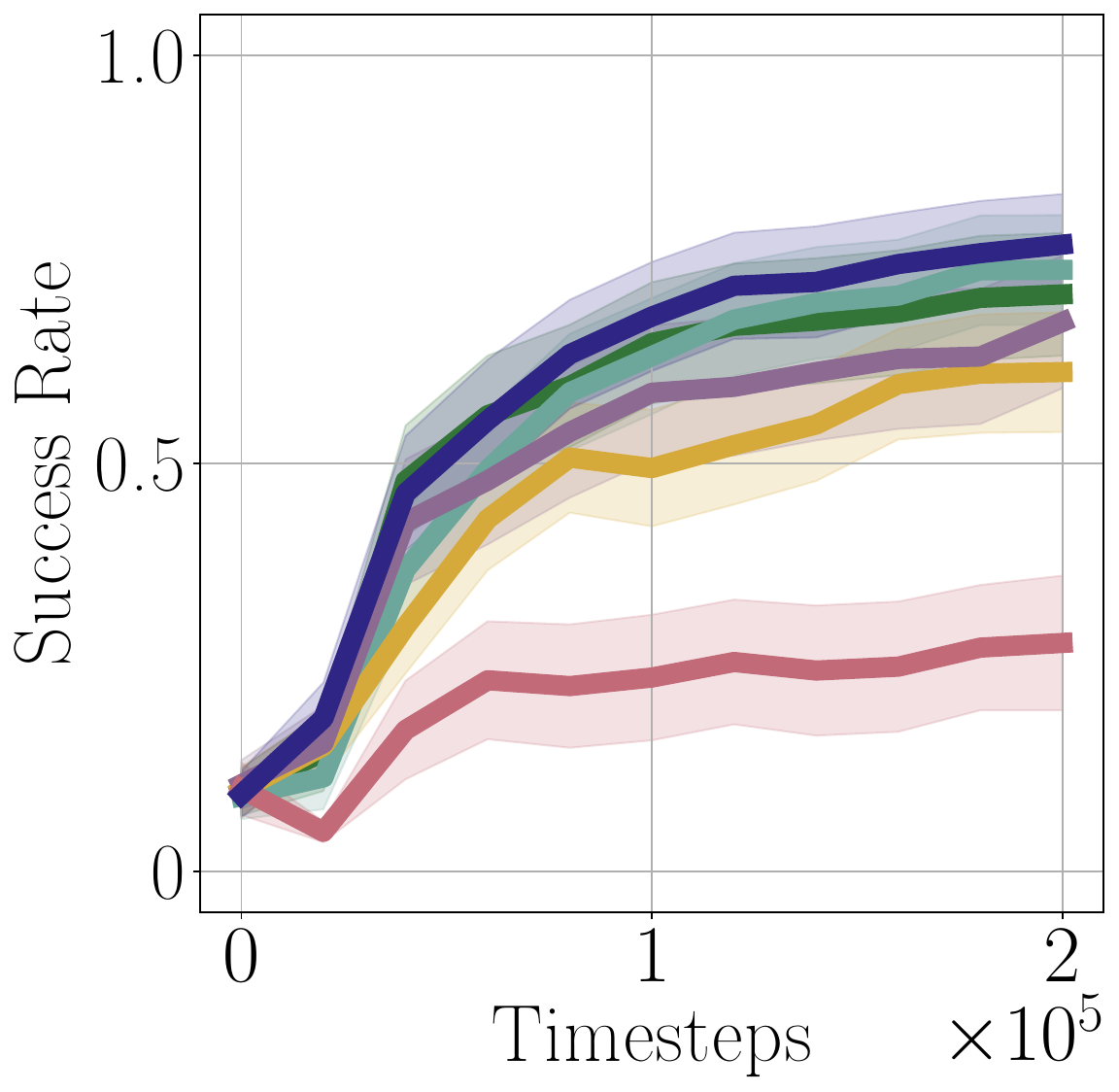}
    \end{minipage}
    \hfill
    \begin{minipage}[t]{0.22\linewidth}
        \centering
        \raisebox{0.35cm}{%
          \makebox[\linewidth][c]{\hspace*{-7mm}\includegraphics[height=3.4cm]{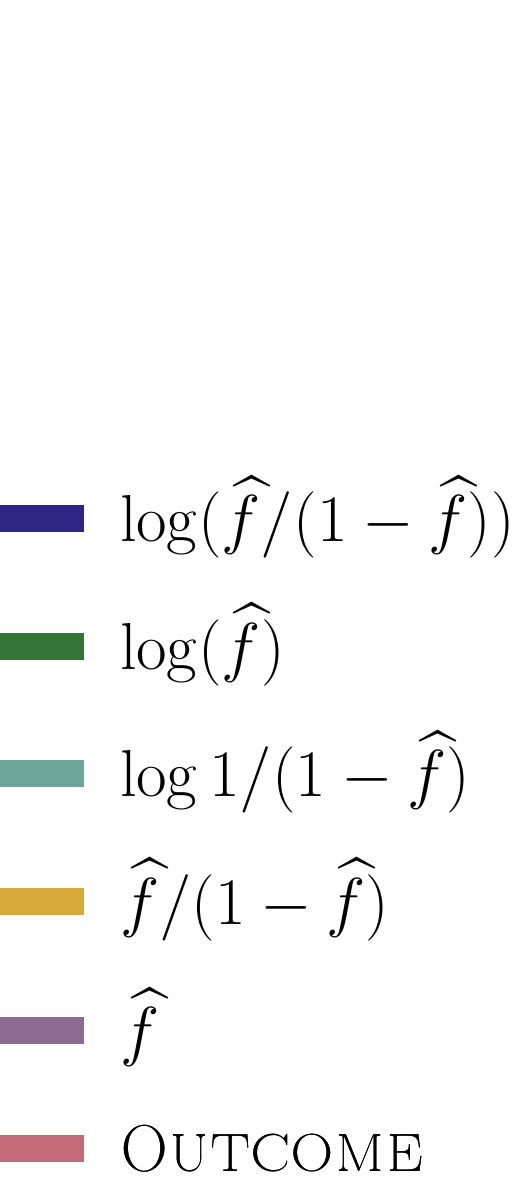}}%
        }
    \end{minipage}
    \caption{Ablation of functional form of \prvm.}
    \label{fig:reward_form_exp}
\end{minipage}
\hfill
\begin{minipage}[t]{0.32\textwidth}
    \centering
    \begin{minipage}[t]{0.6\linewidth}
        \centering
        {\hspace*{-1mm}\includegraphics[width=\linewidth]{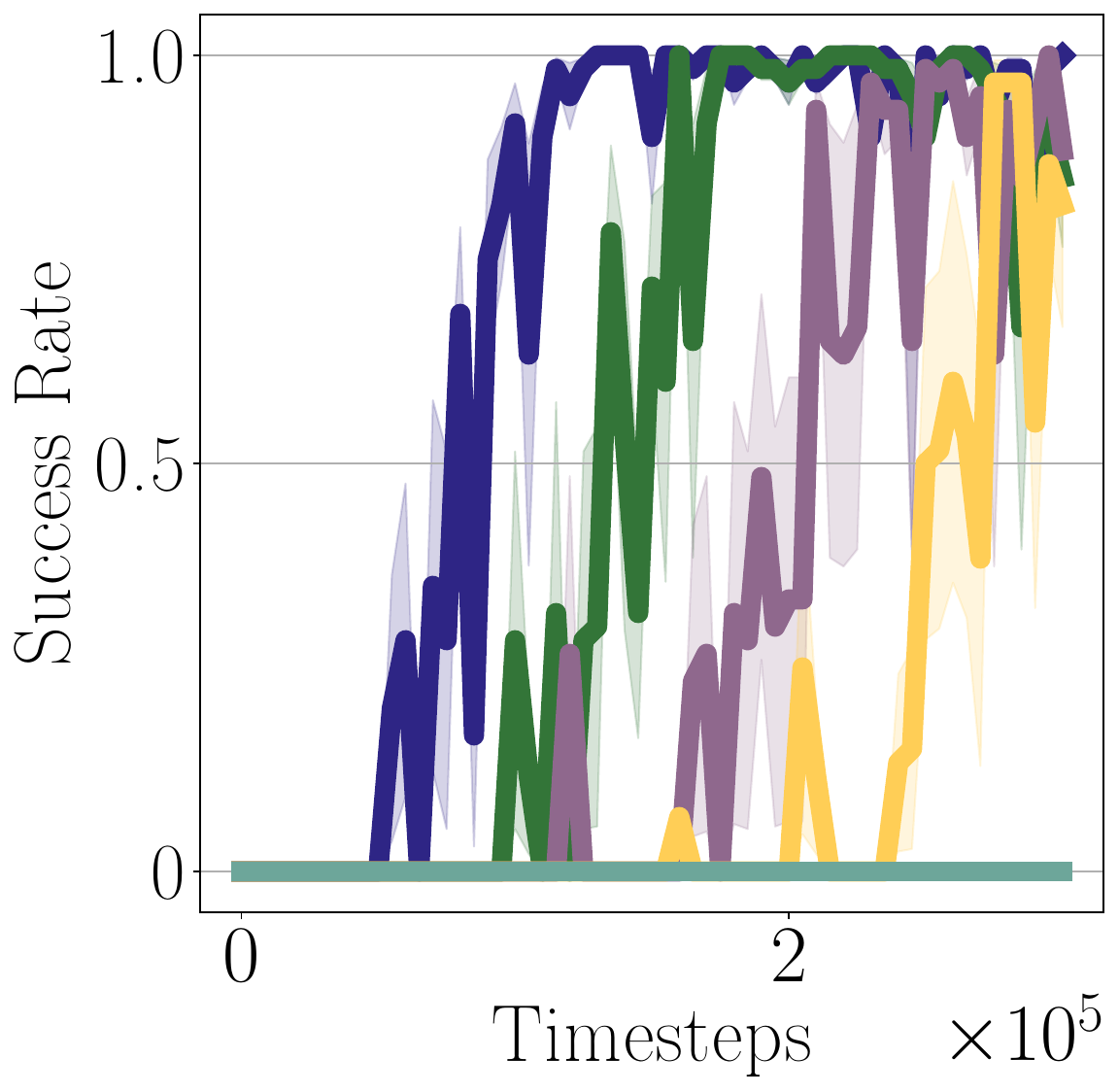}}
    \end{minipage}
    \hfill
    \begin{minipage}[t]{0.22\linewidth}
        \centering
        \raisebox{0.35cm}{%
          \makebox[\linewidth][c]{\hspace*{-10mm}\includegraphics[height=3.4
          cm]{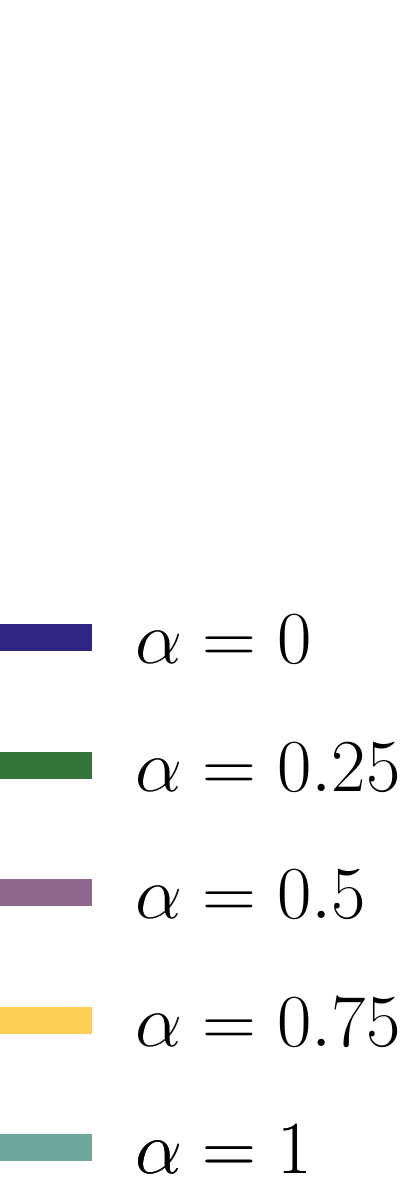}}%
        }
    \end{minipage}
    \caption{Ablation of $\%$ successful trajectories in $\Dneg$.}
    \label{fig:ablation_mixture}
\end{minipage}
\vspace{-1em}
\end{figure*}

Finally, we ablate several of the key design choices in \prvm and seek to understand how it leads to improved RL performance. Please see \Cref{sec:additional_ablations} for additional ablations.

\textbf{How can we most effectively extract a policy from $\fhat$?}
Our approach utilizes the learned discriminator $\fhat$ to define a process reward that we then learn to maximize with RL. However, other approaches for extracting a policy from $\fhat$ could be considered. Here we ablate our choice of policy extraction approach, and consider three alternate policy extraction approaches: 
\begin{itemize}[leftmargin=*]
    \item First, instead of running standard RL, we consider training a policy to maximize $\fhat$ directly. In particular, rather than training the \dsrl actor $\pi_{\textsc{Dsrl}}$ to maximize a learned $Q$-function, we train it to maximize $\fhat$, thereby encouraging the actor to directly maximize the estimated probability of an action leading to a success. We refer to this approach as ``$\fhat$ maximization''.
    \item Second, we utilize $\fhat$ to filter actions produced by the base policy $\pipre$---at each step in deployment sampling $N$ actions from $\pipre$, and then executing the action with the maximum value of $\fhat$. We note that this strategy is quite similar to the approach proposed in \citet{attarian2026update}, and we refer to it as ``$\fhat$ sampling''.\loose
    \item Finally, rather than training a discriminator $\fhat$ at all, we instead simply add a behavioral cloning term over $\frakD^+$ to the \dsrl actor loss. This encourages the \dsrl actor to increase the probability of re-producing the actions that were taken in $\frakD^+$ and led to success. We refer to this as ``\dsrl-BC''.
\end{itemize}
For the first two approaches, we train $\fhat$ as in \Cref{alg:rdrs_rl}, but use the alternate approach to policy extraction described above in place of the RL training proposed in \Cref{alg:rdrs_rl}.
We compare these approaches to our proposed approach of running RL with the \prvm process reward, and also compare to running RL with only the outcome reward, using \dsrl as our RL algorithm. We illustrate our results on \texttt{LIBERO Kitchen Scene 2} (all 7 tasks) in \Cref{fig:policy_extraction_ablation}. As these results illustrate, running RL with \prvm process rewards is significantly more effective than other approaches to policy extraction given $\fhat$ or successful episodes.

\textbf{How does the functional form of the process reward impact performance?} 
\prvm trains a discriminator $\fhat$ to define the process reward $\log \fhat/(1-\fhat)$. While we show in \Cref{sec:method} that this form of the reward corresponds to (a clipped version of) the KL-divergence between policy visitations, simply increasing $\fhat$ itself should encourage the policy to visit states likely to be in successful trajectories. 
Given this, we ablate several monotone transformations of the discriminator probability $\fhat$ to determine whether the exact form is critical. Here we consider \texttt{LIBERO Kitchen Scene 2} (all 7 tasks) and utilize \dsrl as the RL algorithm.
As shown in Figure~\ref{fig:reward_form_exp}, $\log \fhat/(1-\fhat)$ in general performs the best, yet all forms of process reward lead to significant gains over only maximizing outcome reward.\loose

\begin{figure}[t]
  \centering

  \begin{minipage}[t]{0.80\textwidth}
    \vspace{0pt}
    \centering
    \newcommand{\imgw}{0.155\linewidth}
    \newcommand{\heatmapimggap}{0.006\linewidth}

    \begin{minipage}[t]{\imgw}
      \centering
      \includegraphics[width=\linewidth]{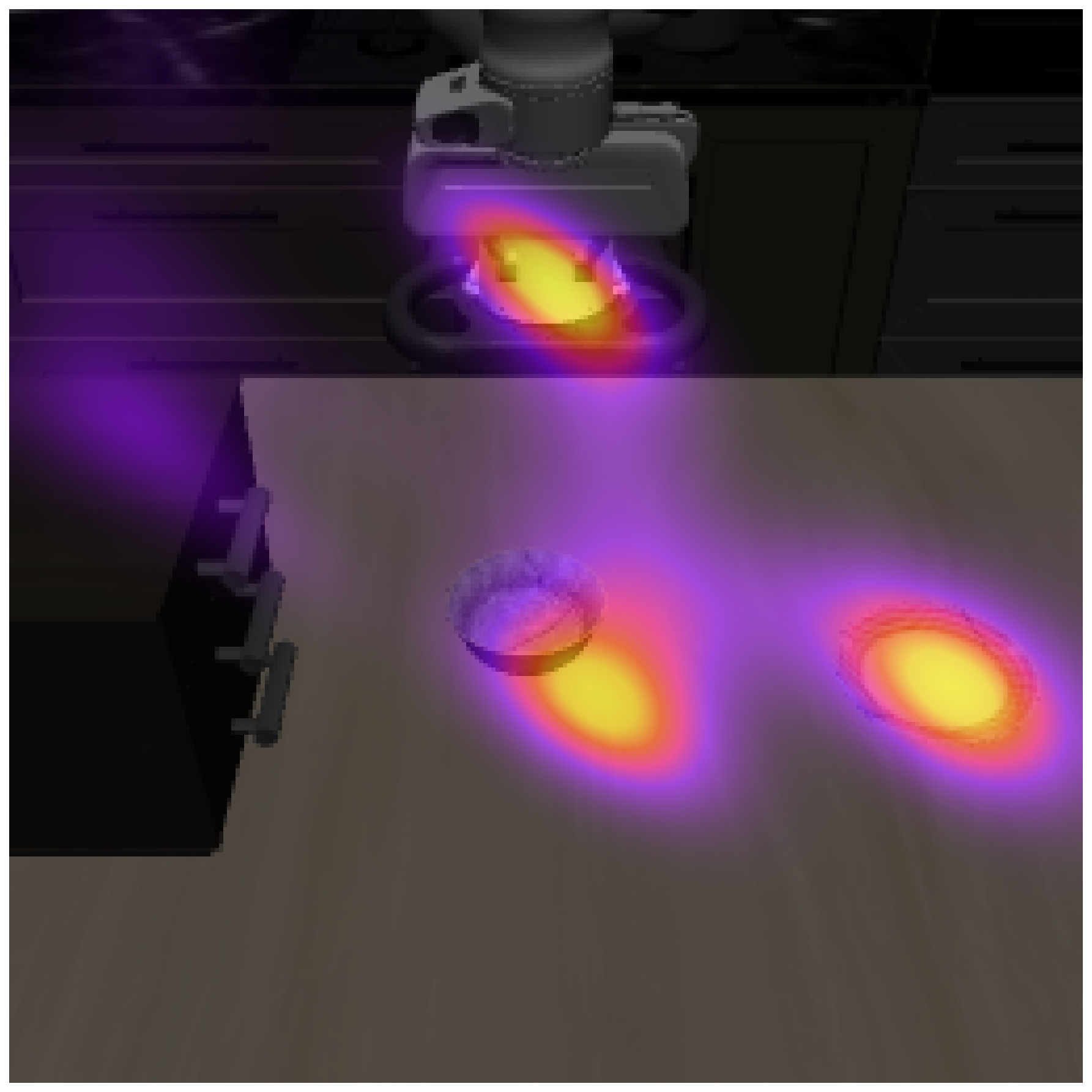}
      \subcaption*{{\scriptsize\textbf{\prvm}}}
    \end{minipage}%
    \hspace{\heatmapimggap}%
    \begin{minipage}[t]{\imgw}
      \centering
      \includegraphics[width=\linewidth]{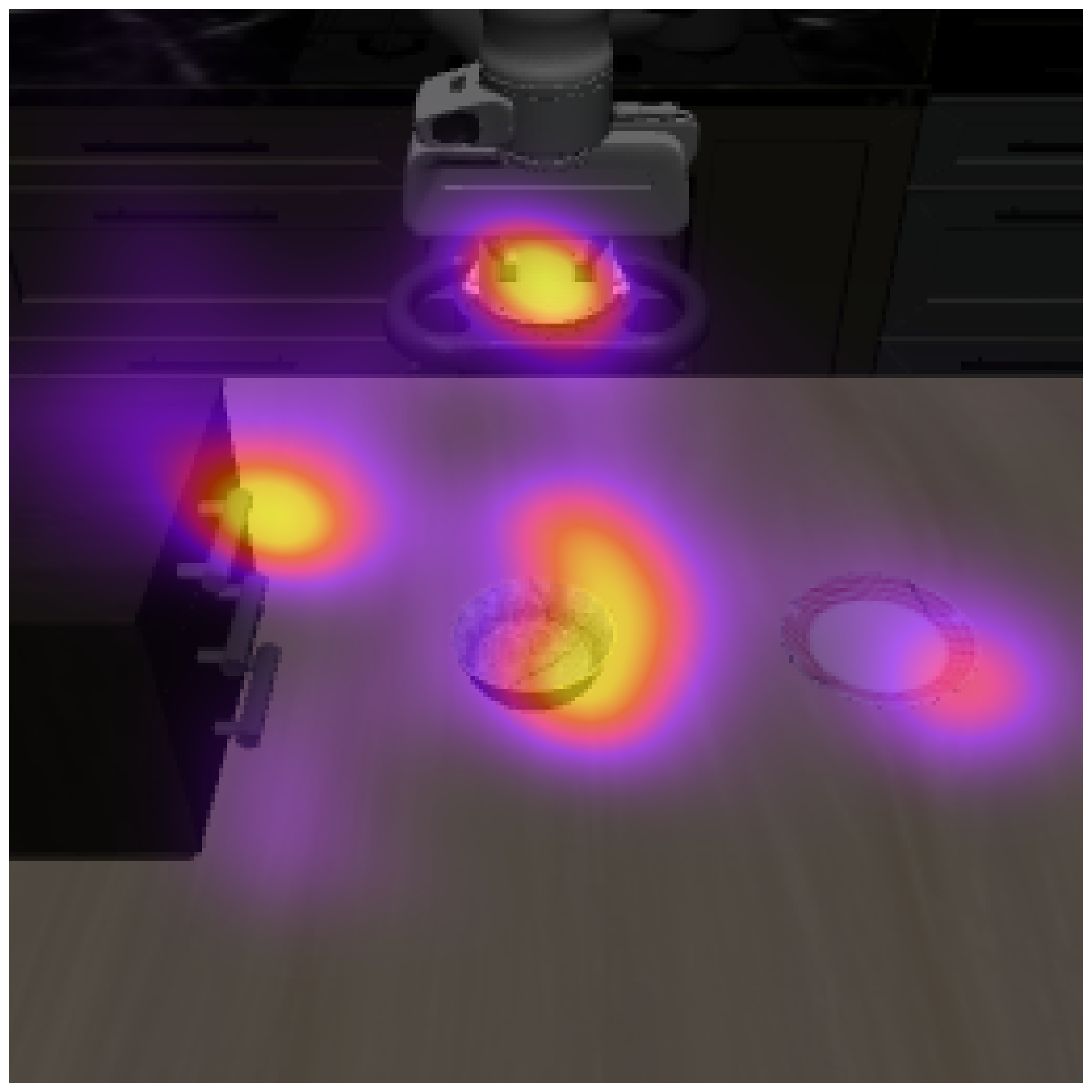}
      \subcaption*{{\scriptsize\textsc{Gail-reward}}}
    \end{minipage}%
    \hspace{\heatmapimggap}%
    \begin{minipage}[t]{\imgw}
      \centering
      \includegraphics[width=\linewidth]{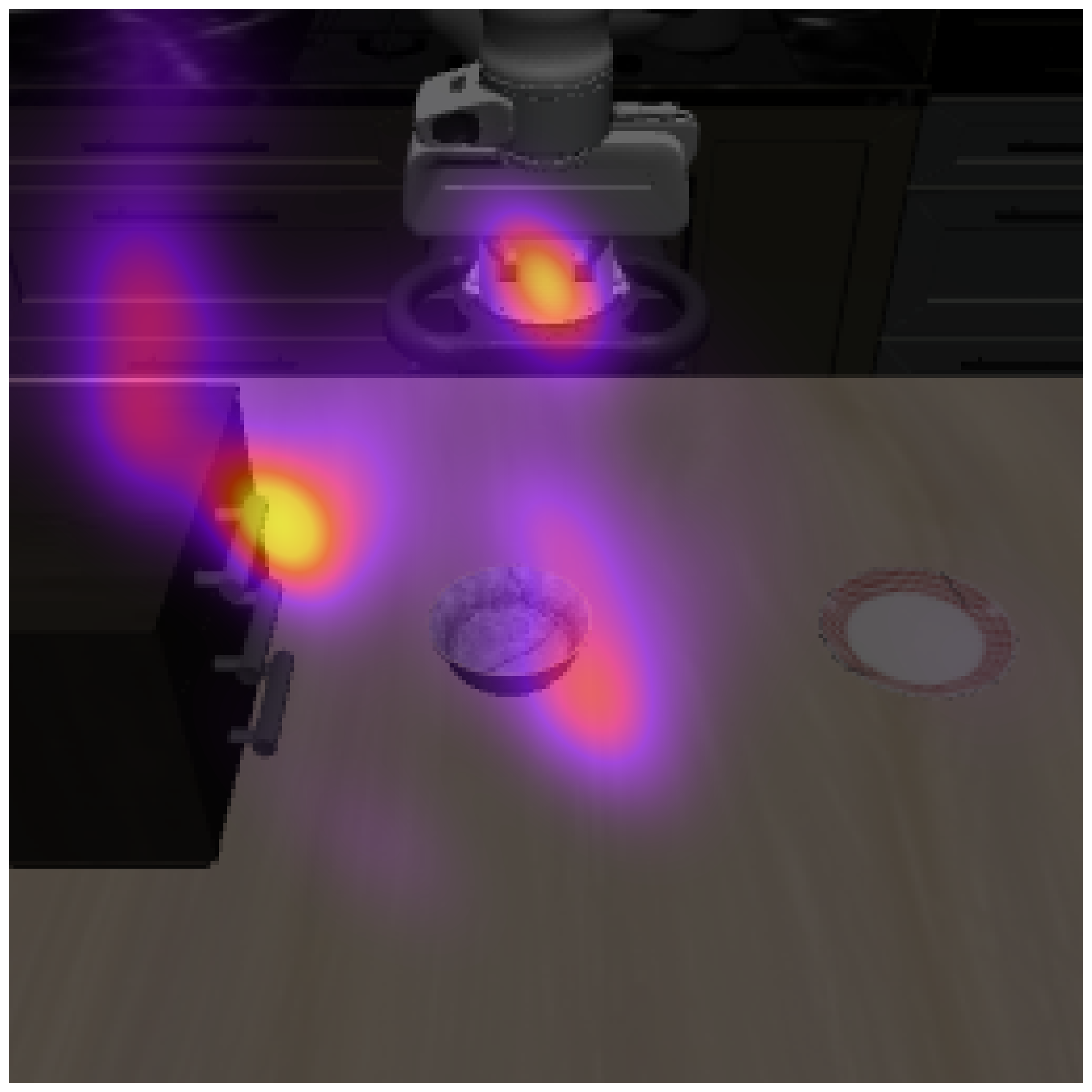}
      \subcaption*{{\scriptsize\textsc{Rnd}}}
    \end{minipage}%
    \hspace{\heatmapimggap}%
    \begin{minipage}[t]{\imgw}
      \centering
      \includegraphics[width=\linewidth]{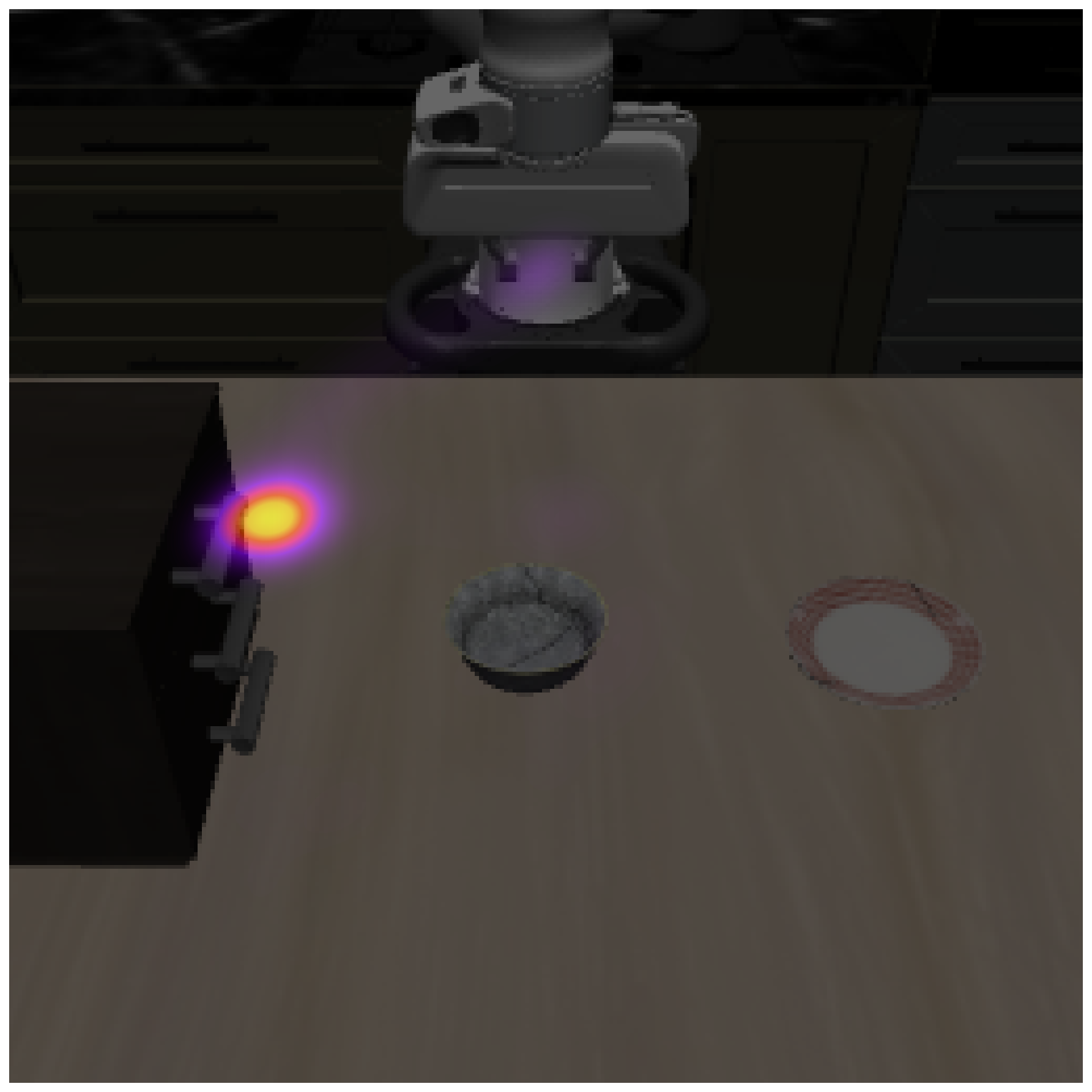}
      \subcaption*{{\scriptsize\textsc{SORS}}}
    \end{minipage}%
    \hspace{\heatmapimggap}%
    \begin{minipage}[t]{\imgw}
      \centering
      \includegraphics[width=\linewidth]{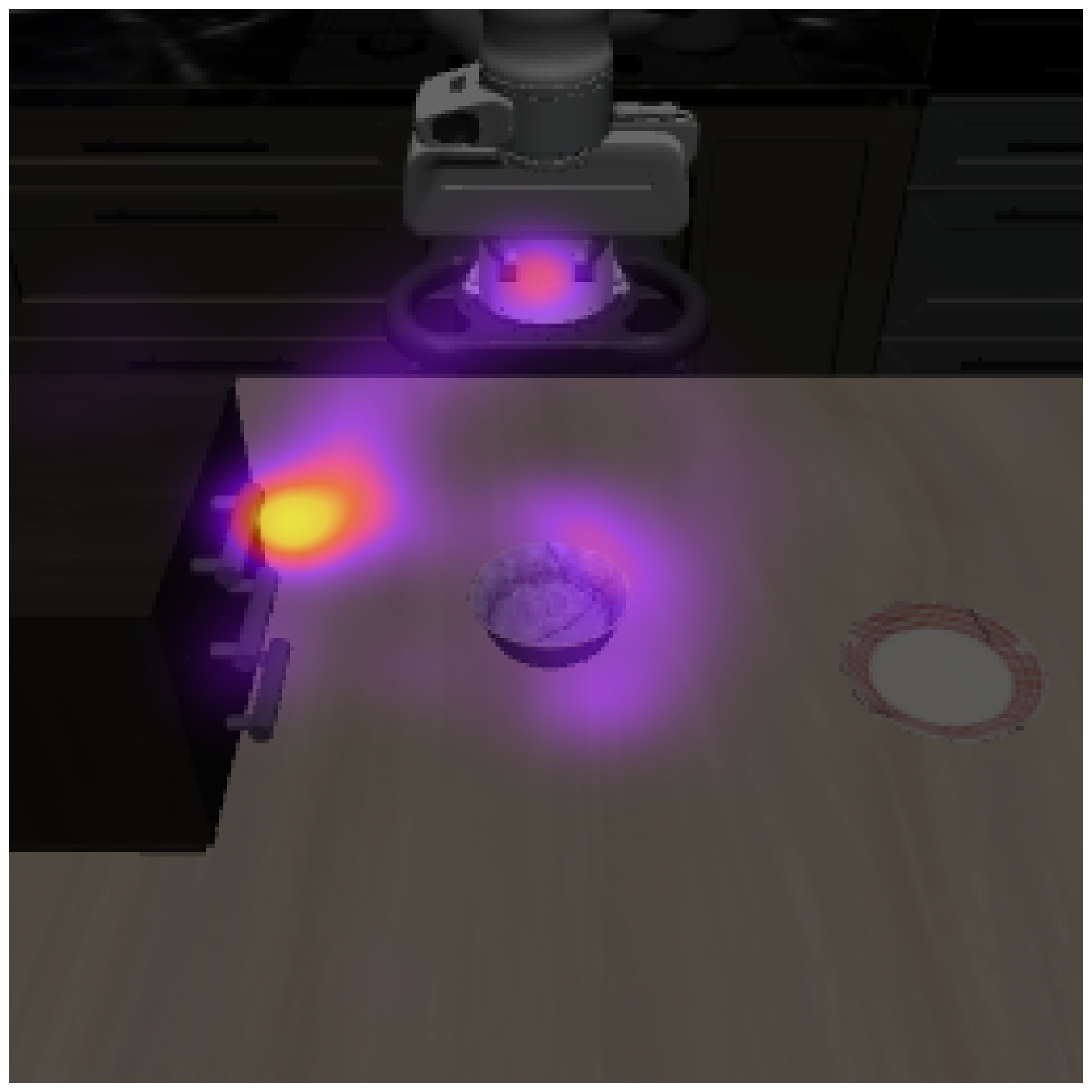}
      \subcaption*{{\scriptsize\textsc{SASR}}}
    \end{minipage}%
    \hspace{\heatmapimggap}%
    \begin{minipage}[t]{\imgw}
      \centering
      \includegraphics[width=\linewidth]{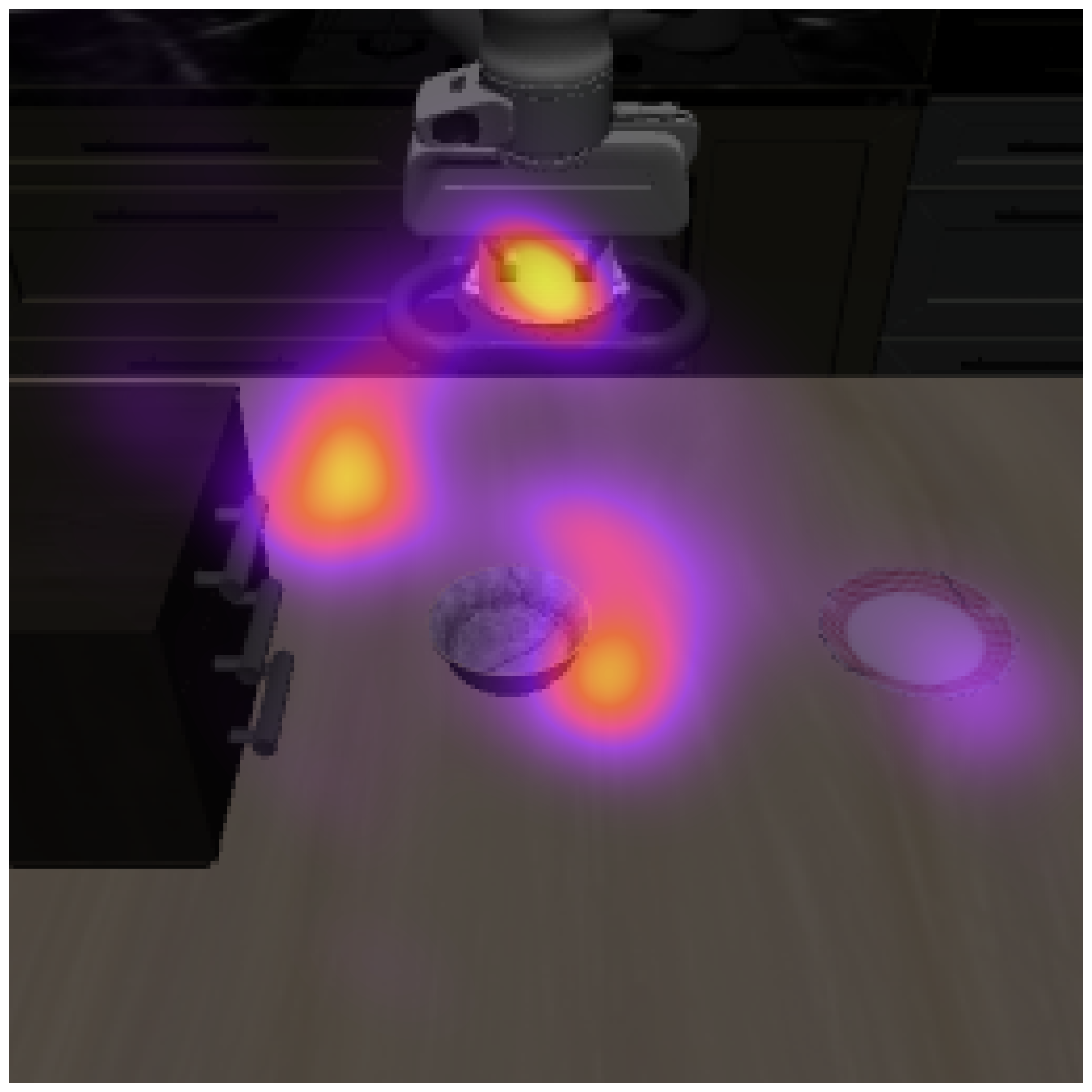}
      \subcaption*{{\scriptsize\textsc{Outcome}}}
    \end{minipage}
    \caption{State visitation heatmaps for different process reward methods at step 20000, estimated from $100$ training trajectories.}
    \label{fig:reward_heatmap}
  \end{minipage}
  \hfill
  \begin{minipage}[t]{0.17\textwidth}
    \vspace{0pt}
    \centering
    {\vspace*{-1.0mm}\includegraphics[width=\linewidth]{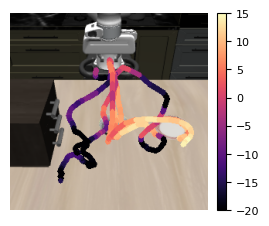}}
    \vspace{-0.5em}
    \caption{\prvm reward at step $20{,}000$.}
    \label{fig:reward_plot}
  \end{minipage}
  \vspace{-1em}
\end{figure}

\textbf{How does negative feedback impact \prvm performance?}
While it is obvious that rewarding the learner for visiting states that are likely to lead to success (the $\whatp$ term in \eqref{eq:rew_def}) should mitigate the credit assignment problem, it is not immediately clear that penalizing the learner for visiting states in unsuccessful trajectories (the $\whatm$ term in \eqref{eq:rew_def}) is necessary. Here we test whether this form of negative feedback is important to the performance of \prvm rewards. To test this, rather than setting $\Dneg$ in \eqref{eq:Dhat} to only negative trajectories, we set it to include proportion $\alpha$ successful trajectories, and $1-\alpha$ unsuccessful trajectories. Thus, $\alpha$ interpolates between explicitly penalizing the learner for visiting states in unsuccessful trajectories, and simply encouraging the learner to visit positive states relative to a neutral baseline. We consider different choice of $\alpha$, running on \texttt{Robomimic Can} in the RL from demonstrations setting of \Cref{sec:exp_rlpd}, and provide our results in Figure~\ref{fig:ablation_mixture}. As can be seen, negative feedback does significantly impact performance---the more the learner is penalized for visiting states in unsuccessful trajectories, the faster \prvm rewards enable improvement.

\textbf{Visualizing \prvm process rewards.} 
Finally, we seek to visualize  the process reward obtained
from \prvm, and how the trajectories it induces behave.
We consider the task \texttt{Put the black bowl on the plate}, which requires picking up the bowl at the center of the scene and setting it on the plate to the right. Figure~\ref{fig:reward_plot} visualizes the estimated \prvm reward at step $20{,}000$ computed along several trajectories. We see that the learned reward assigns high values to state-action pairs that lead to correct behavior (moving the bowl in the center to the plate on the right): approaching the bowl is rewarded, whereas moving toward irrelevant objects (e.g., the drawer) is penalized. 
In Figure~\ref{fig:reward_heatmap} we illustrate the state visitation heatmap, comparing \prvm to the baseline reward shaping approaches and only training on outcome reward. 
We see that the visitations induced by \prvm rapidly concentrate around regions corresponding to successful trajectories, while other approaches either lead to less focused trajectories (that visit successful regions but also visit unsuccessful regions), or trajectories that focus on unsuccessful regions.\loose

\iftoggle{arxiv}{}{\vspace{-0.5em}}
\section{Conclusion}
\iftoggle{arxiv}{}{\vspace{-0.5em}}
In this work, we have shown that by augmenting a sparse outcome reward with a reward that incentivizes staying close to previous successful trajectories and avoiding previous unsuccessful trajectories, we can obtain much faster RL convergence without changing the optimal policy. A primary limitation of our theoretical contribution is the assumption that the environment is deterministic. While many environments of interest are deterministic, and our experimental results show that even in non-deterministic real-world settings our approach leads to large improvements, extending the theoretical results to non-deterministic settings is of interest. In addition, while we have focused primarily on robotic control settings, our approach could apply equally well to settings such as RLVR for LLMs, which is an interesting direction for future work.

\newpage
\subsection*{Acknowledgments}
This research was partly supported by DARPA ANSR, AFOSR FA9550-22-1-0273, and ONR N00014-25-1-2060. This research used the Savio computational cluster resource provided by the Berkeley Research Computing program at UC Berkeley.

\bibliographystyle{plainnat}
\bibliography{bibliography}

\newpage
\appendix


\newcommand{\piproc}{\pi^{\mathrm{proc}}}

\section{Additional Related Work}\label{sec:additional_related}

\textbf{Process rewards for LLMs.}
With the recent success of the ``RL with verifiable rewards'' paradigm in language domains \cite{guo2025deepseek,team2025kimi}, significant attention has been given to designing process rewards in order to enable more efficient RL for LLM reasoning. 
Early works in this direction learn process reward from dense human or AI feedback \cite{lightman2023let,luo2023wizardmath}. More recently, works such as \cite{yuan2024free, cui2025process} learn ``implicit'' process rewards, by training an outcome reward model, and utilizing the log probabilities this induces as a dense reward signal. Most similar to our approach are works such as \cite{snell2024scaling, luo2024improve, wang2024math, setlur2024rewarding, choudhury2025process, qu2025optimizing}, which utilize a learned classifier or advantage function that models the future success probability from a given state, providing the learner with a signal of progress as a dense source of supervision. While conceptually similar to our approach, these approaches are not, in general, implementable in robotic settings, as they require the ability to roll out many episodes from each state to compute success probabilities; in robotic settings, where we cannot reset at will, such approaches cannot be instantiated. Furthermore, the functional form for the process reward these works propose differs significantly from our proposed reward and, as we show in the following, our form outperforms those proposed in the LLM literature.

\newcommand{\sterm}{s_{\mathrm{term}}}
\newcommand{\frakDplus}{\mathfrak{D}^+}
\newcommand{\frakDmin}{\mathfrak{D}^-}
\newcommand{\Np}{N^+}
\newcommand{\Nm}{N^-}
\newcommand{\aplus}{a^+}
\newcommand{\Vtil}{\widetilde{V}}

\section{Proof of \Cref{thm:proc_consistent}}\label{sec:proof}
For \Cref{thm:proc_consistent}, we make one additional modification to the environment definition. As stated in \Cref{sec:prelim}, if a state with $\rout(s) = 1$ is reached, the environment transitions to a terminal state and remains there for the rest of the episode, getting a reward of 0. We denote this state as $\sterm$ and assume that $\sterm$ can only be transitioned to after a reward of 1 has been obtained, so that $\sterm$ is only reached on successful episodes. Note that an environment can be trivially modified to satisfy this criteria.

We say an episode is ``successful'' if it has a reward of 1, and ``unsuccessful'' otherwise. 
Let $\Np := |\frakDplus|$ and $\Nm := |\frakDmin|$ the number of successful and unsuccessful episodes, respectively, and $\Np_h(s,a) := \sum_{(s',a') \in \frakDplus_h} \bbI \{ s' = s, a' = a \}$ (similarly $\Nm_h(s,a)$).
Given this notation, the reward stated in \Cref{thm:proc_consistent} can be rewritten as:
\begin{align*}
\rvm_h(s,a) := \rout(s) + \lambda \cdot \clip \left ( \log \frac{\Np_h(s,a) / \Np}{\Nm_h(s,a) / \Nm}, \beta \right )
\end{align*}
for some $\lambda, \beta > 0$.
We also adopt the convention that $\log 0/0 = -\infty$.
For notational convenience, we say an episode starts at $h = 1$ and finishes at $h = H$. We say that $(s,h)$ is ``success reachable'' if there exists some $s'$ with $\rout(s') = 1$ such that $s'$ can be reached from $s$ in at most $H - h$ steps.

\begin{proof}[Proof of \Cref{thm:proc_consistent}]
Let $\Vtil$ denote the value function under $\rvm$ (that is, $\Vtil_h(s) = \Exp^\pi[\sum_{h'=h}^H \rvm_h(s_h,a_h)]$).
To show this result, we aim to show that for any $(s,h)$, for $h \in \{ 1, \ldots, H \}$, any policy that maximizes $\rvm$ from $(s,h)$ will also maximize $\rout$. We consider two cases, and assume that $s \neq \sterm$ for simplicity, since if $s = \sterm$ by assumption the episode has already succeeded so the actions that the policy plays do not affect its optimality. 

\paragraph{Case 1: $(s,h)$ is not success reachable.} 
If $(s,h)$ is not success reachable and $s \neq \sterm$, then it is not possible to reach a successful state from $(s,h)$, by the definition of success reachability. Thus, any policy achieves the same outcome reward from $(s,h)$, so its trivially true that a policy maximizing $\rvm$ from $(s,h)$ also maximizes $\rout$.

Furthermore, no successful state would have been reached before $(s,h)$ (since all successful states immediately transition to $\sterm$ and remain there). 
Thus, $\Np_h(s,a) = 0$
The definition of success reachability also implies $\rout(s) = 0$. Thus, $\rvm_h(s) = -\lambda \beta$ (using the convention that $\log 0/0 = -\infty$ to handle the case when $\Nm_h(s) = 0$). 
By the definition of success reachability, once we are in $(s,h)$ that is not success reachable and $s \neq \sterm$, then we will be in states that are not success reachable for all subsequent steps in the episode. 
Altogether this implies that, if we are in $(s,h)$ that is not success reachable and $s \neq \sterm$, then we will have $\Vtil_h^\pi(s) = -\lambda \beta (H - h + 1)$ for all $\pi$. 

\paragraph{Case 2: $(s,h)$ is success reachable.}
Consider some $(s,h)$ that is success reachable. We now prove by backward induction that any trajectory starting from $(s,h)$ that maximizes $\rvm$ must be successful (reach a state with $\rout(s) = 1$).
For the base case, with $h = H$, by definition of success reachability $\rout(s) = 1$, so the trajectory is successful.

Assume now that for all $(s,h+1)$ that are success reachable, any trajectory starting from $(s,h+1)$ that maximizes $\rvm$ is successful. Consider some $(s,h)$ that is success reachable. By definition of success reachability, either $\rout(s) = 1$, or there exists some $\aplus$ such that for $s' = P(s,\aplus)$, $(s',h+1)$ is success reachable. In the former case we are already done since the trajectory is already successful, so assume we are in the latter case. By the inductive assumption, any trajectory starting from $(s',h+1)$ which maximizes $\rvm$ succeeds. Thus, since $\rvm_h(s) \ge -\lambda \beta$, and we will visit a successful state in the episode, any trajectory maximizing $\rvm$ that starts from $(s',h+1)$ must have reward at least $1 - \lambda \beta (H - h)$. 
Consider some $a$ such that $s'' = P(s,a)$ is not success reachable. As shown above, $\Vtil_{h+1}^\pi(s'') = -\lambda\beta (H - h)$ for any $\pi$. Thus, taking action $\aplus$ and then playing a policy that maximizes $\rvm$ will lead to a reward of at least $1 - \lambda \beta (H-h)$, while taking an action $a$ that leads to a state which is not success reachable will lead to a reward of $-\lambda \beta (H - h)$. 

It follows that either $\rout(s) = 1$, or any policy that maximizes $\rvm$ from $(s,h)$ must take an action that ensures the next state is also success reachable. By the inductive assumption, for all $(s,h+1)$ that are success reachable, any trajectory starting from $(s,h+1)$ that maximizes $\rvm$ is successful. It follows that any trajectory that maximizes $\rvm$ from $(s,h)$ is successful, which proves the inductive hypothesis.

\paragraph{Concluding the proof.} Thus, if $(s,1)$ is success reachable, any trajectory maximizing $\rvm$ from $(s,1)$ succeeds. Under the original outcome reward, this trajectory achieves a reward of 1. As this is the maximum possible reward, it follows that any trajectory maximizing $\rvm$ from $(s,1)$ maximizes the original outcome reward. This proves the result.
\end{proof}

\section{Additional Experimental Results}\label{sec:exp_individual_results}

In this section we provide additional experimental results, including base policy success rates, individual per-task learning curves, and additional ablations. See \Cref{fig:all_scenes_for_exp} for a visualization of all simulated experimental environments we consider.

\begin{figure*}[h]
  \centering
  \newcommand{\sceneimgw}{0.1355\textwidth}

  \includegraphics[width=\sceneimgw]{images/scene_images/robomimic_can_scene.png}%
  \hfill
  \includegraphics[width=\sceneimgw]{images/scene_images/libero_kitchen_scene_1.png}%
  \hfill
  \includegraphics[width=\sceneimgw]{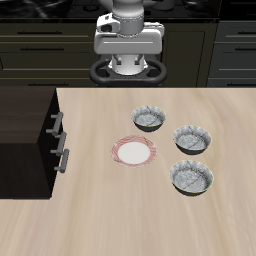}%
  \hfill
  \includegraphics[width=\sceneimgw]{images/scene_images/libero_kitchen_scene_3.png}%
  \hfill
  \includegraphics[width=\sceneimgw]{images/scene_images/robocasa_banana_scene.png}%
  \hfill
  \includegraphics[width=\sceneimgw]{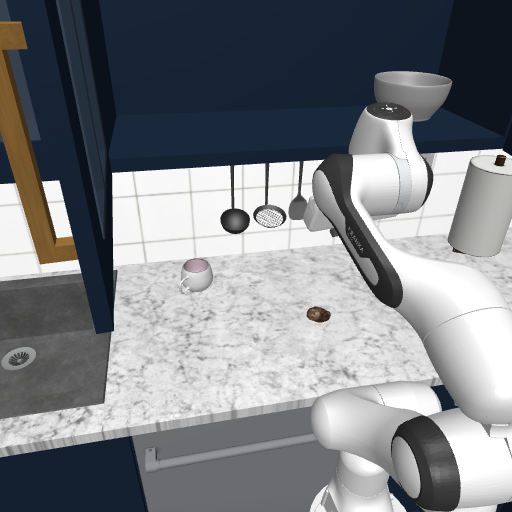}%
  \hfill
  \includegraphics[width=\sceneimgw]{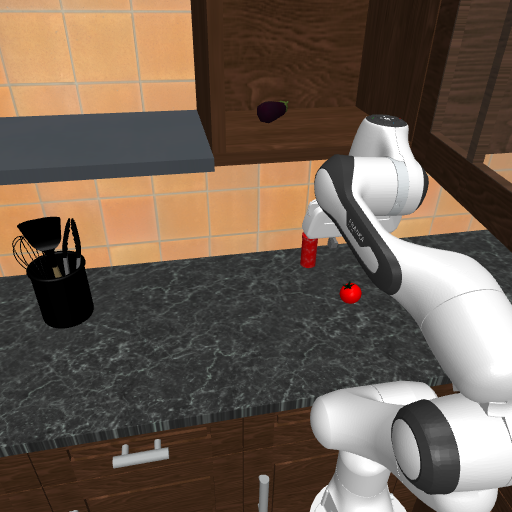}

  \caption{\texttt{Robomimic}, \texttt{LIBERO}, and \texttt{Robocasa} scenes.}
  \label{fig:all_scenes_for_exp}
\end{figure*}

\subsection{Additional information on LIBERO tasks}

For our \dsrl and \resrl experiments, we evaluate on \texttt{LIBERO Kitchen Scene 1--3}, covering tasks $6$--$21$. Specifically, Scene 1 contains tasks $6$--$10$, Scene 2 contains tasks $11$--$17$, and Scene 3 contains tasks $18$--$21$. We also provide the initial success rate of the diffusion transformer base policy and the $\pi_0$ base policy used in our experiments in Table~\ref{tab:libero_success_rate} and~\ref{tab:pi0_libero_success_rate}.

\begin{table}[h]
\centering
\begin{tabular}{c c c c c c c c c}
\hline
Task ID        & 6 & 7 & 8 & 9 & 10 & 11 & 12 & 13  \\ 
\hline
Success Rate   & 0.05 & 0.52 & 0.40 & 0.10 & 0.20 & 0.32 & 0.02 & 0.08\\ 
\hline
\end{tabular}\\
\vspace{3mm}
\begin{tabular}{c c c c c c c c c}
\hline
Task ID        & 14 & 15 & 16 & 17 & 18 & 19 & 20 & 21  \\ 
\hline
Success Rate   & 0.01 & 0.03 & 0.22 & 0.02 & 0.44 & 0.15 & 0.55 & 0.26\\ 
\hline
\end{tabular}
\vspace{3mm}
\caption{Average success rate of the base policy before RL finetuning on each \texttt{LIBERO} task.}
\label{tab:libero_success_rate}
\end{table}

\begin{table}[H]
\centering
\begin{tabular}{c c c c c}
\hline
Task ID        & 20 & 22 & 38 & 79  \\ 
\hline
Success Rate   & 0.30 & 0.30 & 0.10 & 0.10 \\ 
\hline
\end{tabular}
\vspace{3mm}
\caption{Average success rate of the $\pi_0$ base policy before RL finetuning on each \texttt{LIBERO} task.}
\label{tab:pi0_libero_success_rate}
\end{table}

\subsection{Individual results for DSRL and Residual RL on Libero}

We provide the individual task results for \dsrl and \resrl on Libero $\texttt{Kitchen Scene 1-3}$ in Figure~\ref{fig:full_dsrl_libero} and ~\ref{fig:full_resrl_libero}. Colors correspond to the same methods as in the main text.

\begin{figure}[H]
  \centering
  \begin{subfigure}[t]{0.98\textwidth}
    \centering
    \includegraphics[width=0.98\linewidth]{images/legend.pdf}
  \end{subfigure}
  \vspace{0.15em}

  \begin{subfigure}[t]{0.13\textwidth}
    \centering
    \includegraphics[width=\linewidth,
      height=2.75cm,
      keepaspectratio]{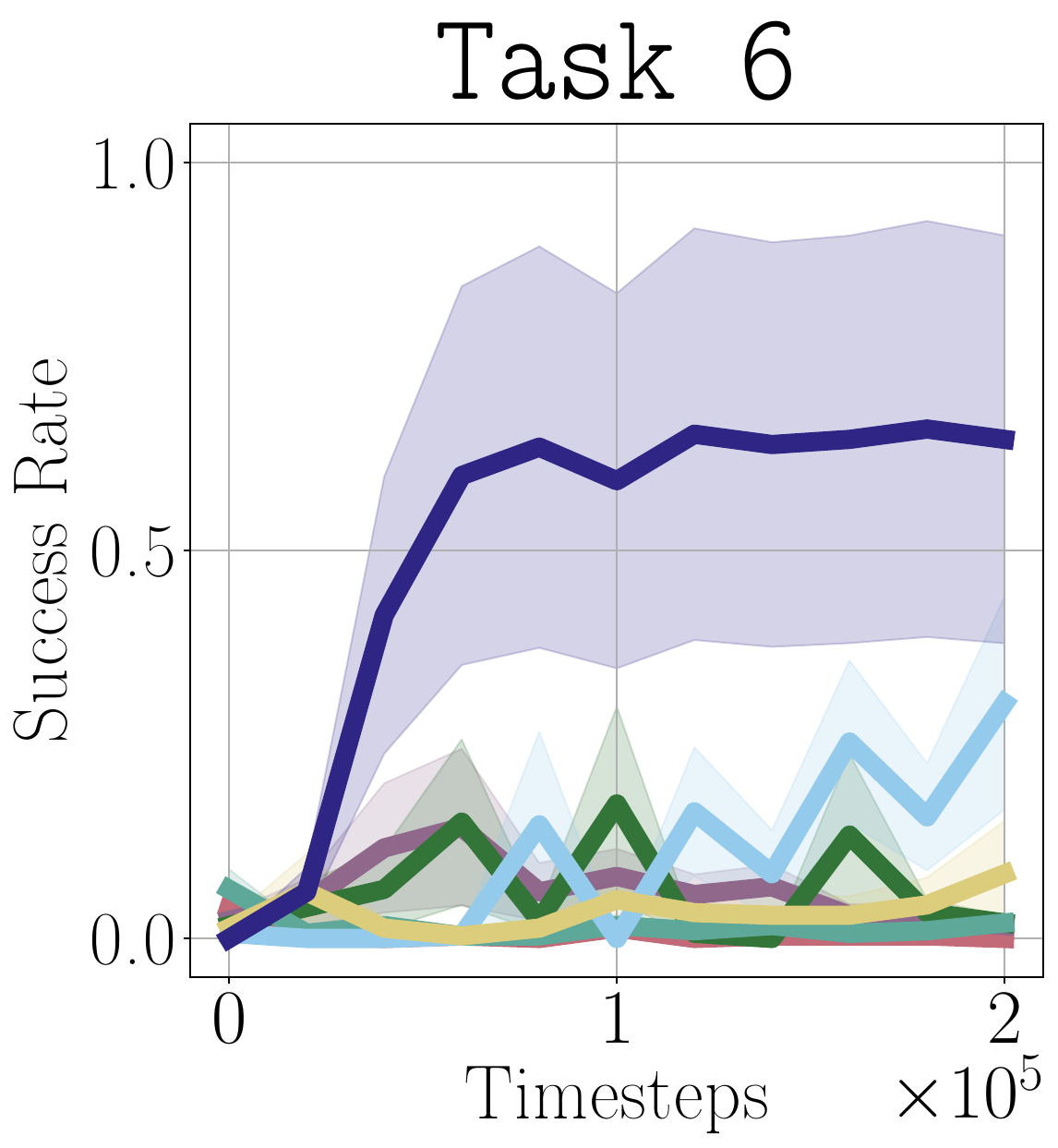}
  \end{subfigure}\hspace{0.0025\textwidth}
  \begin{subfigure}[t]{0.111\textwidth}
    \centering
    \includegraphics[width=\linewidth,
      height=2.75cm,
      keepaspectratio]{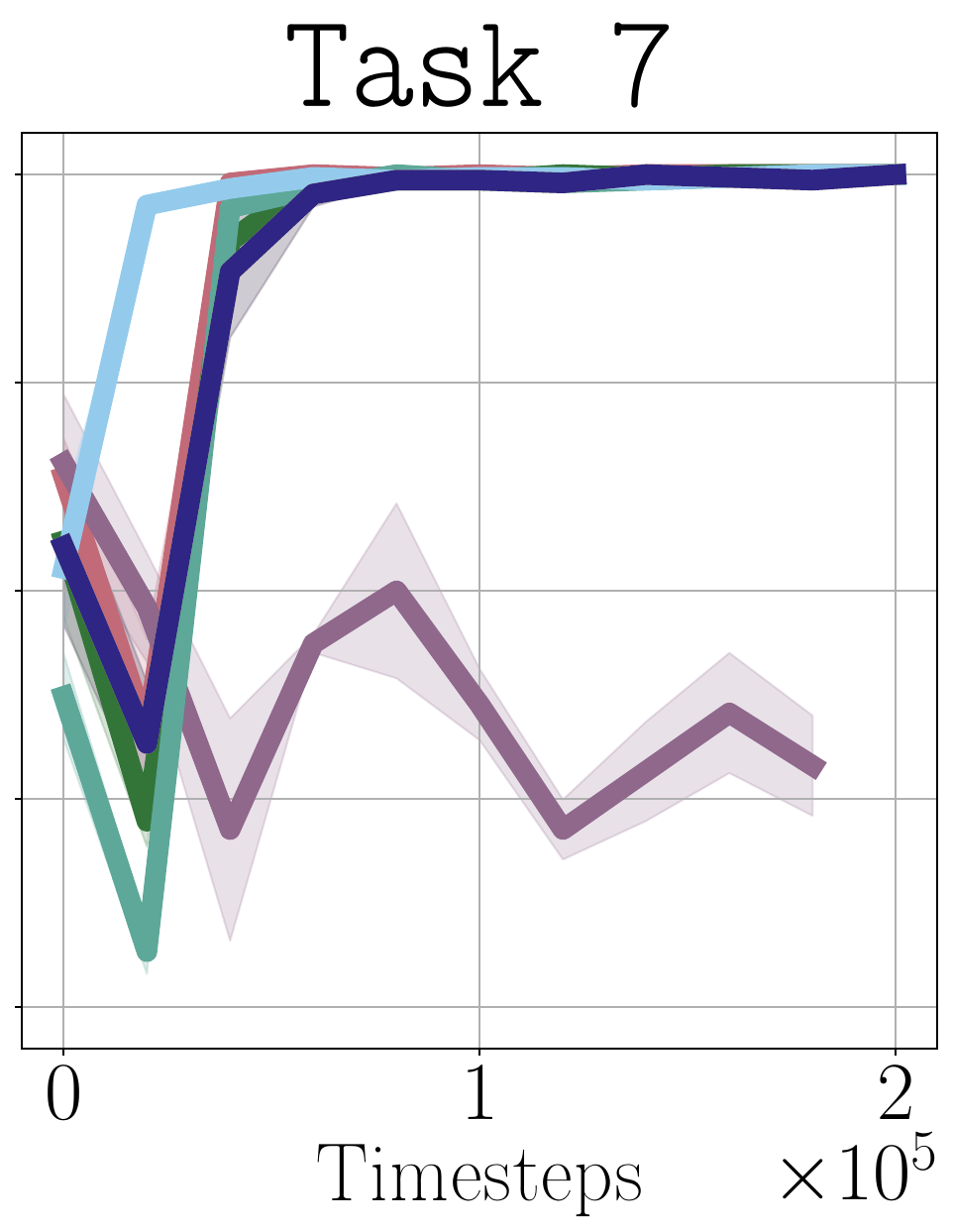}
  \end{subfigure}\hspace{0.0025\textwidth}
  \begin{subfigure}[t]{0.111\textwidth}
    \centering
    \includegraphics[width=\linewidth,
      height=2.75cm,
      keepaspectratio]{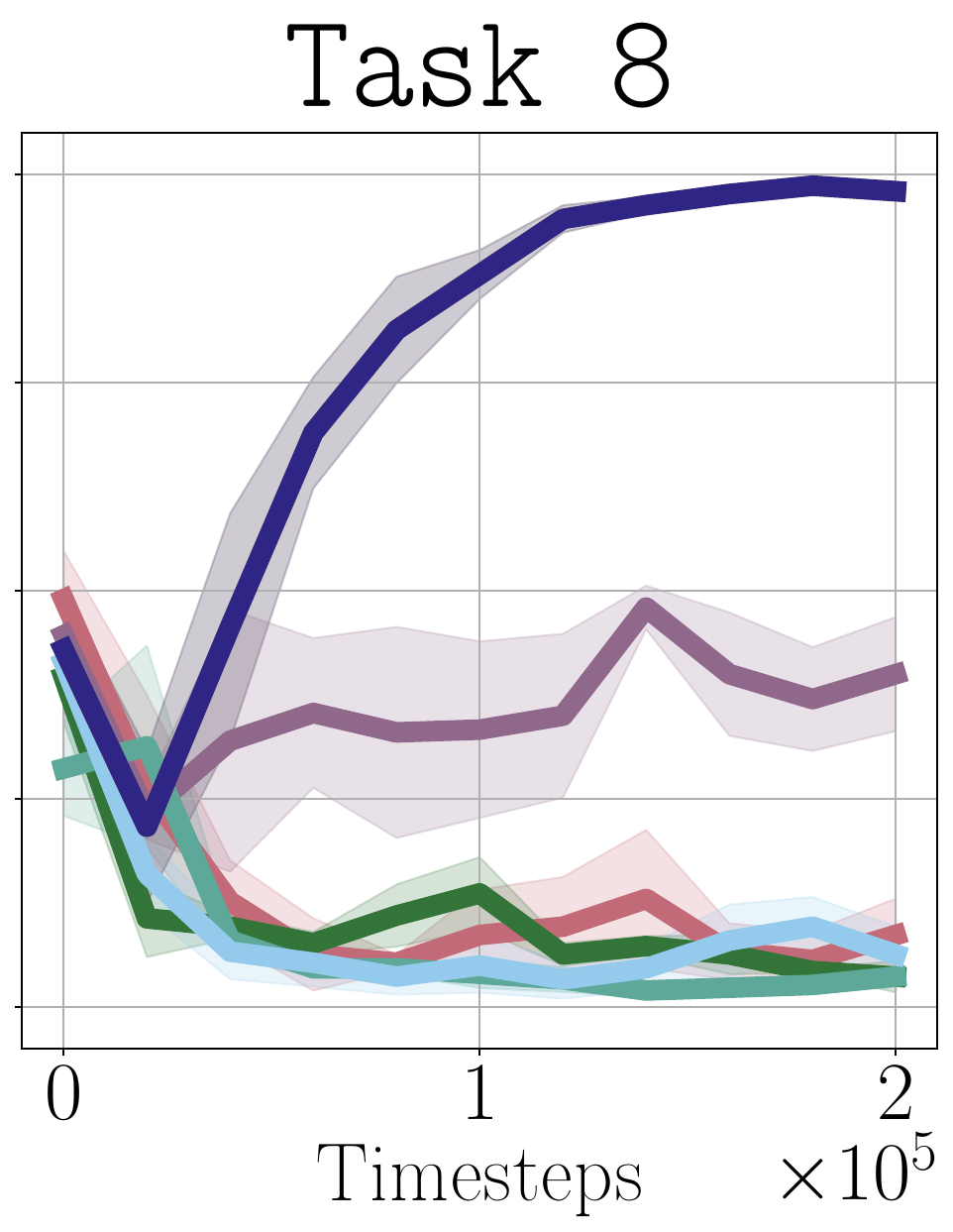}
  \end{subfigure}\hspace{0.0025\textwidth}
  \begin{subfigure}[t]{0.111\textwidth}
    \centering
    \includegraphics[width=\linewidth,
      height=2.75cm,
      keepaspectratio]{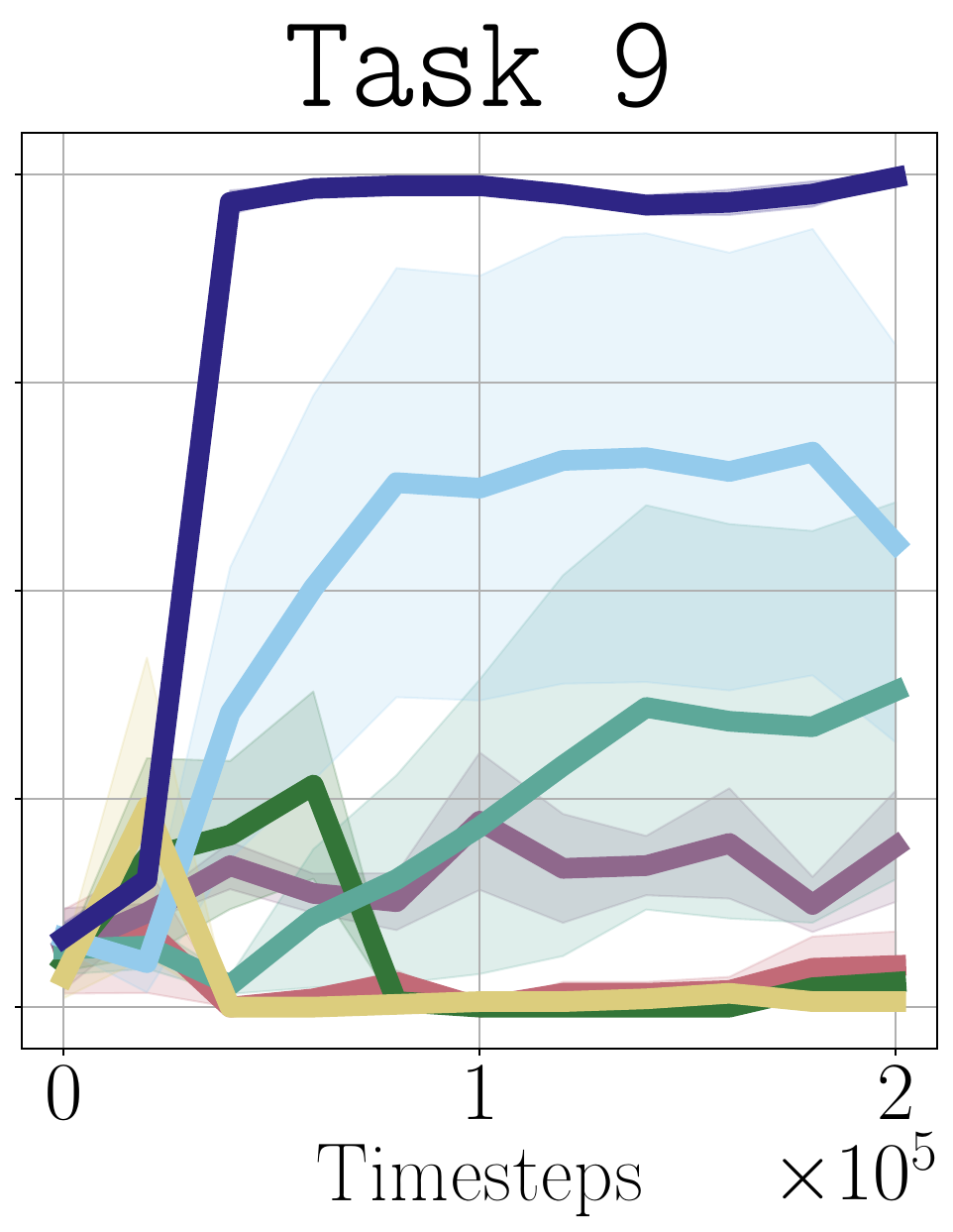}
  \end{subfigure}\hspace{0.0025\textwidth}
  \begin{subfigure}[t]{0.111\textwidth}
    \centering
    \includegraphics[width=\linewidth,
      height=2.75cm,
      keepaspectratio]{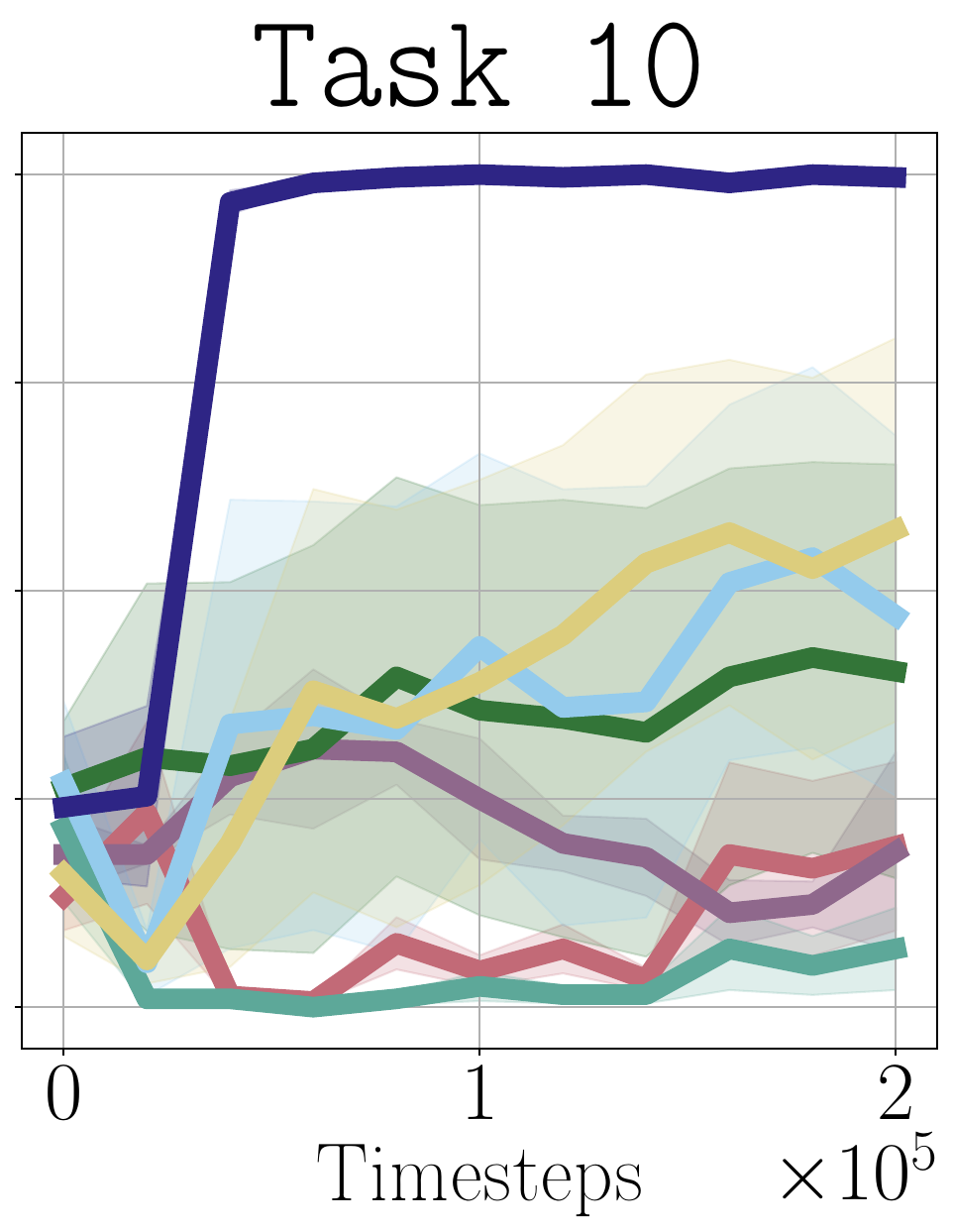}
  \end{subfigure}\hspace{0.0025\textwidth}
  \begin{subfigure}[t]{0.111\textwidth}
    \centering
    \includegraphics[width=\linewidth,
      height=2.75cm,
      keepaspectratio]{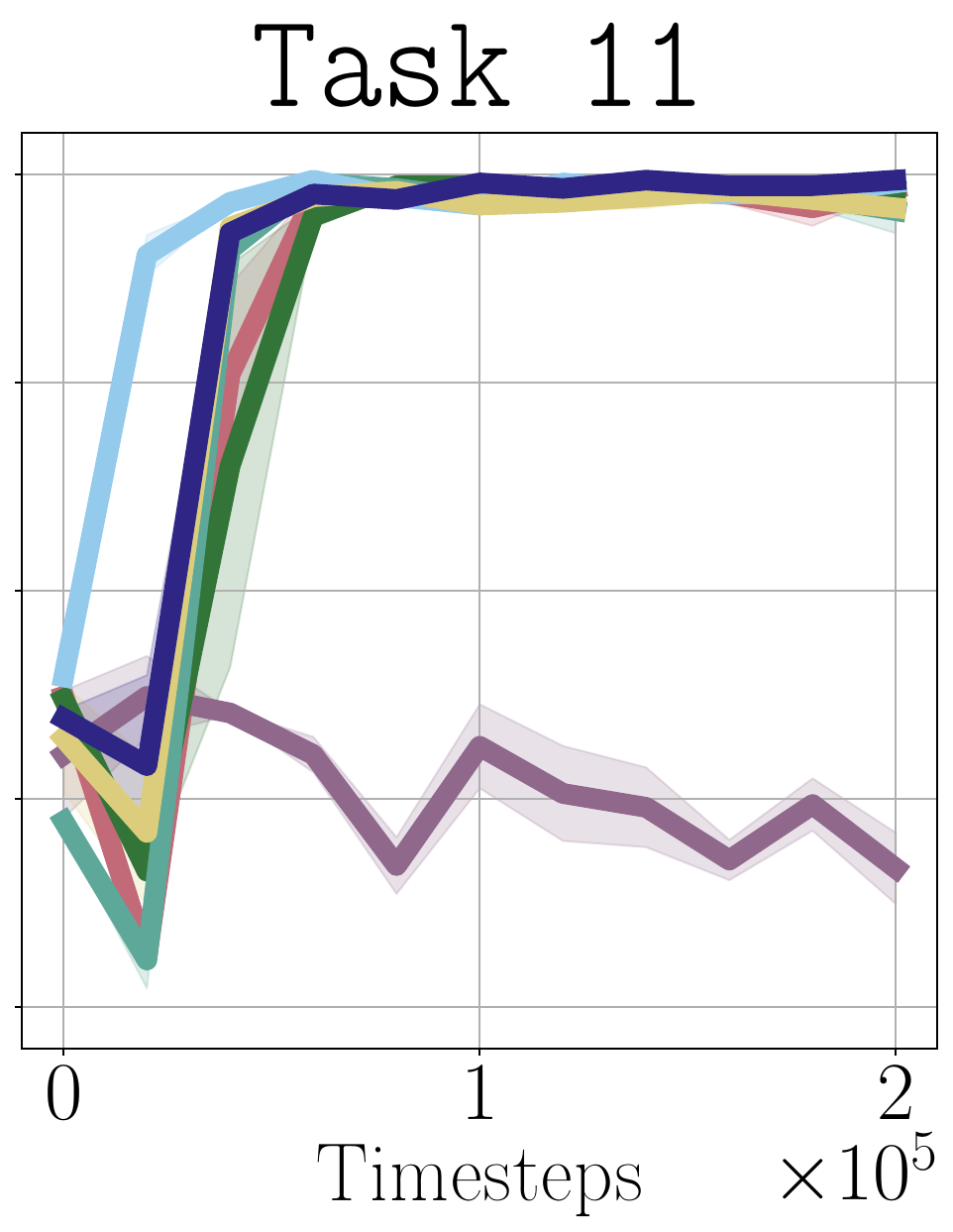}
  \end{subfigure}\hspace{0.0025\textwidth}
  \begin{subfigure}[t]{0.111\textwidth}
    \centering
    \includegraphics[width=\linewidth,
      height=2.75cm,
      keepaspectratio]{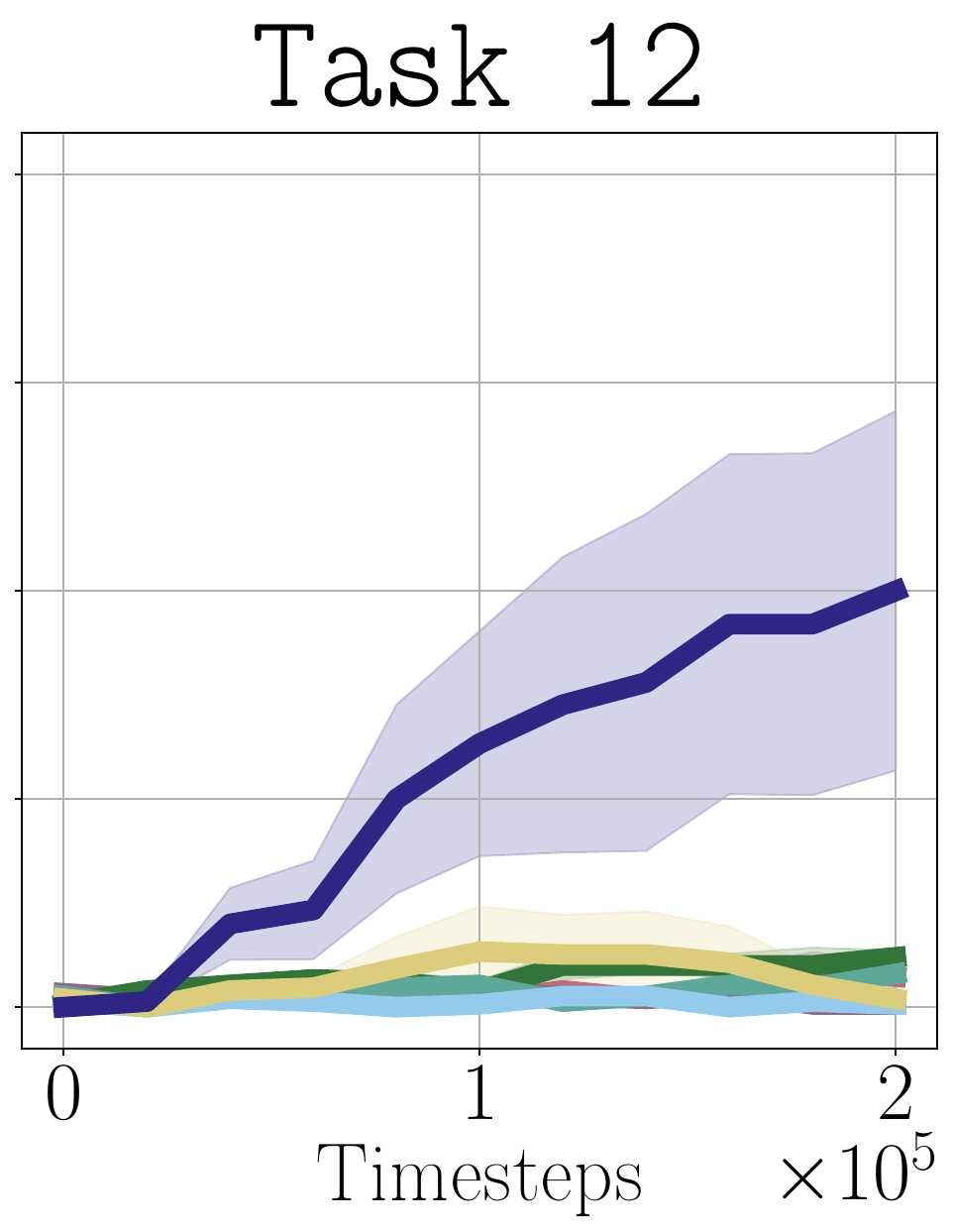}
  \end{subfigure}\hspace{0.0025\textwidth}
  \begin{subfigure}[t]{0.111\textwidth}
    \centering
    \includegraphics[width=\linewidth,
      height=2.75cm,
      keepaspectratio]{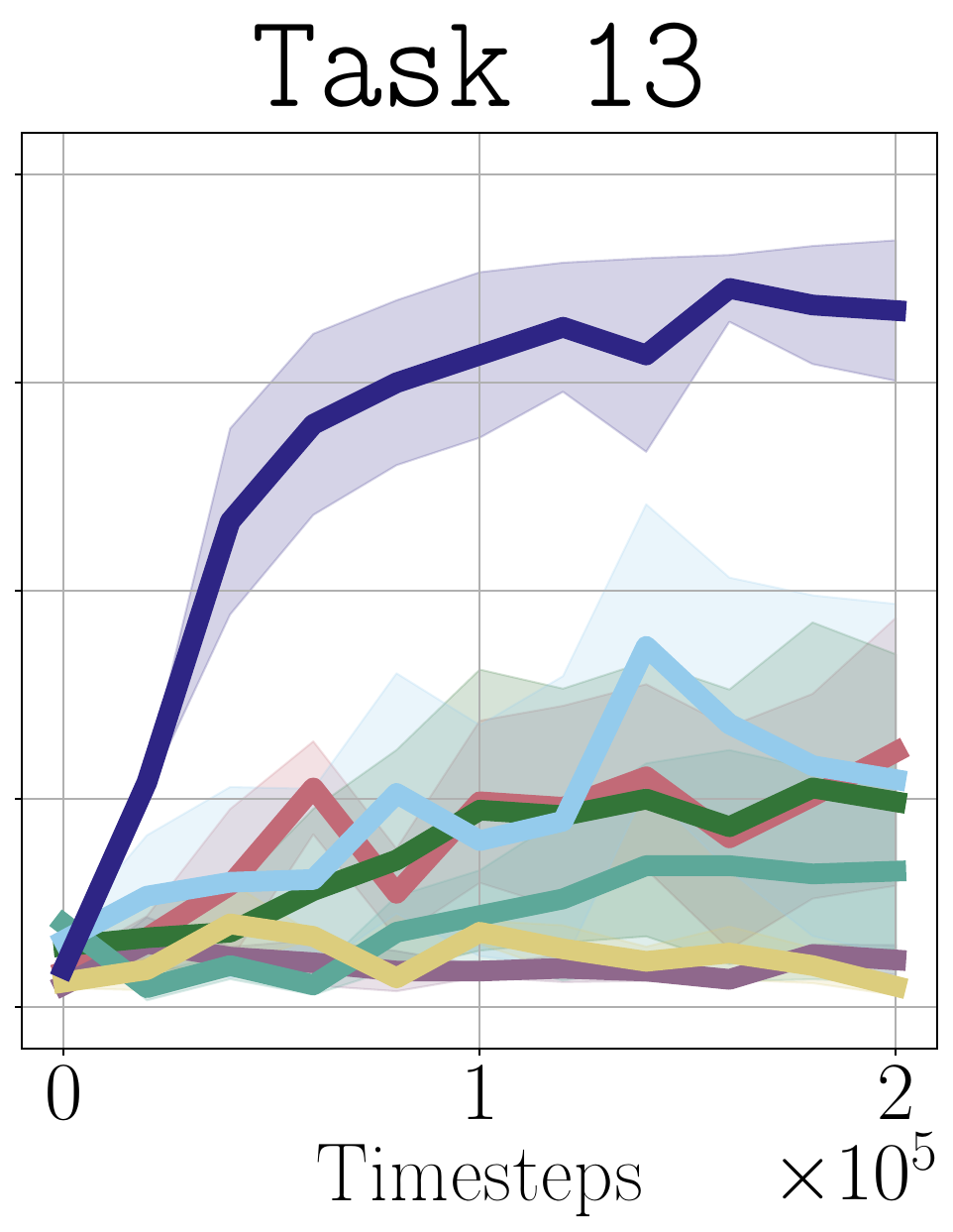}
  \end{subfigure}

  \vspace{0.15em}

  \begin{subfigure}[t]{0.13\textwidth}
    \centering
    \includegraphics[width=\linewidth,
      height=2.75cm,
      keepaspectratio]{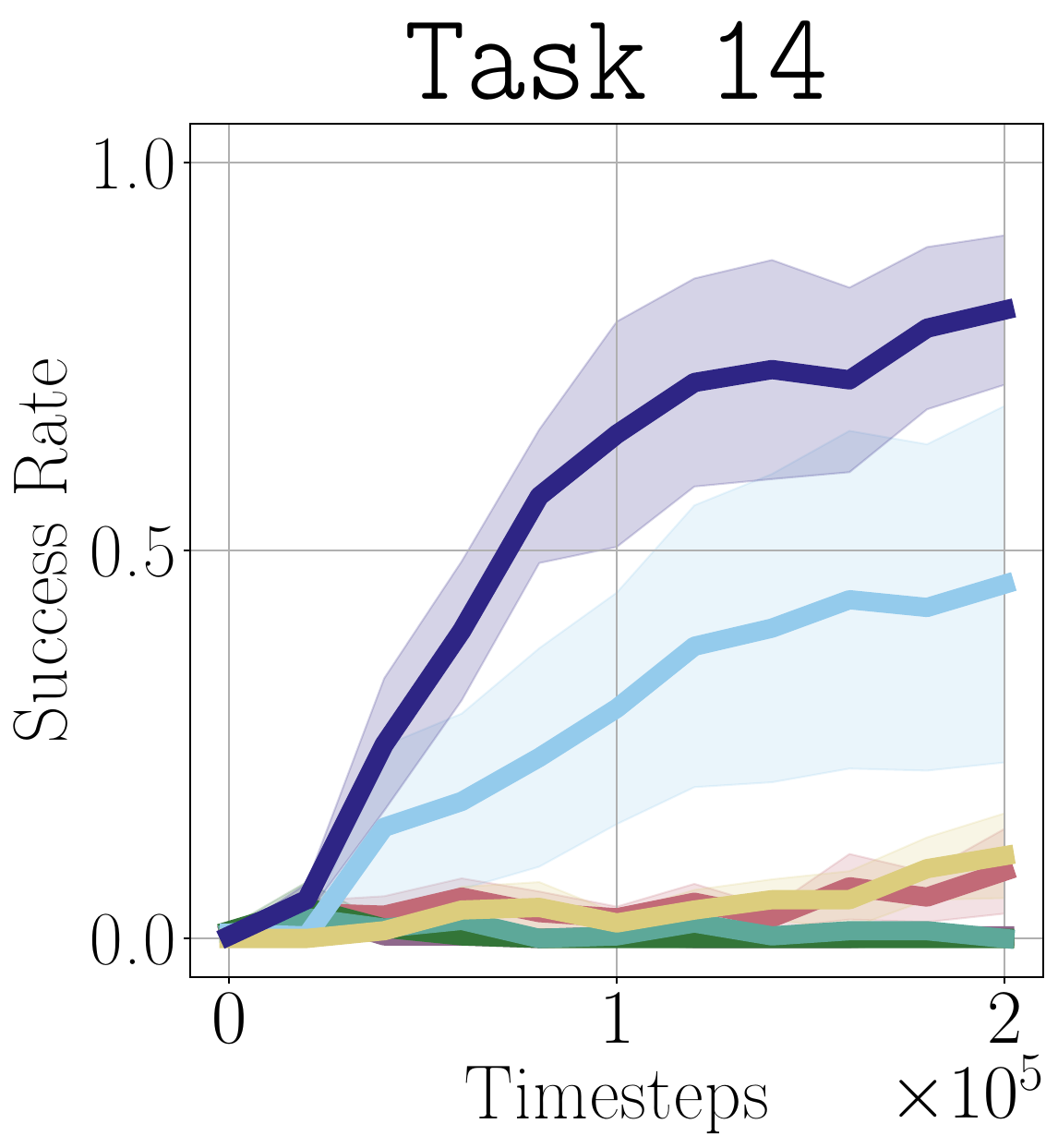}
  \end{subfigure}\hspace{0.0025\textwidth}
  \begin{subfigure}[t]{0.111\textwidth}
    \centering
    \includegraphics[width=\linewidth,
      height=2.75cm,
      keepaspectratio]{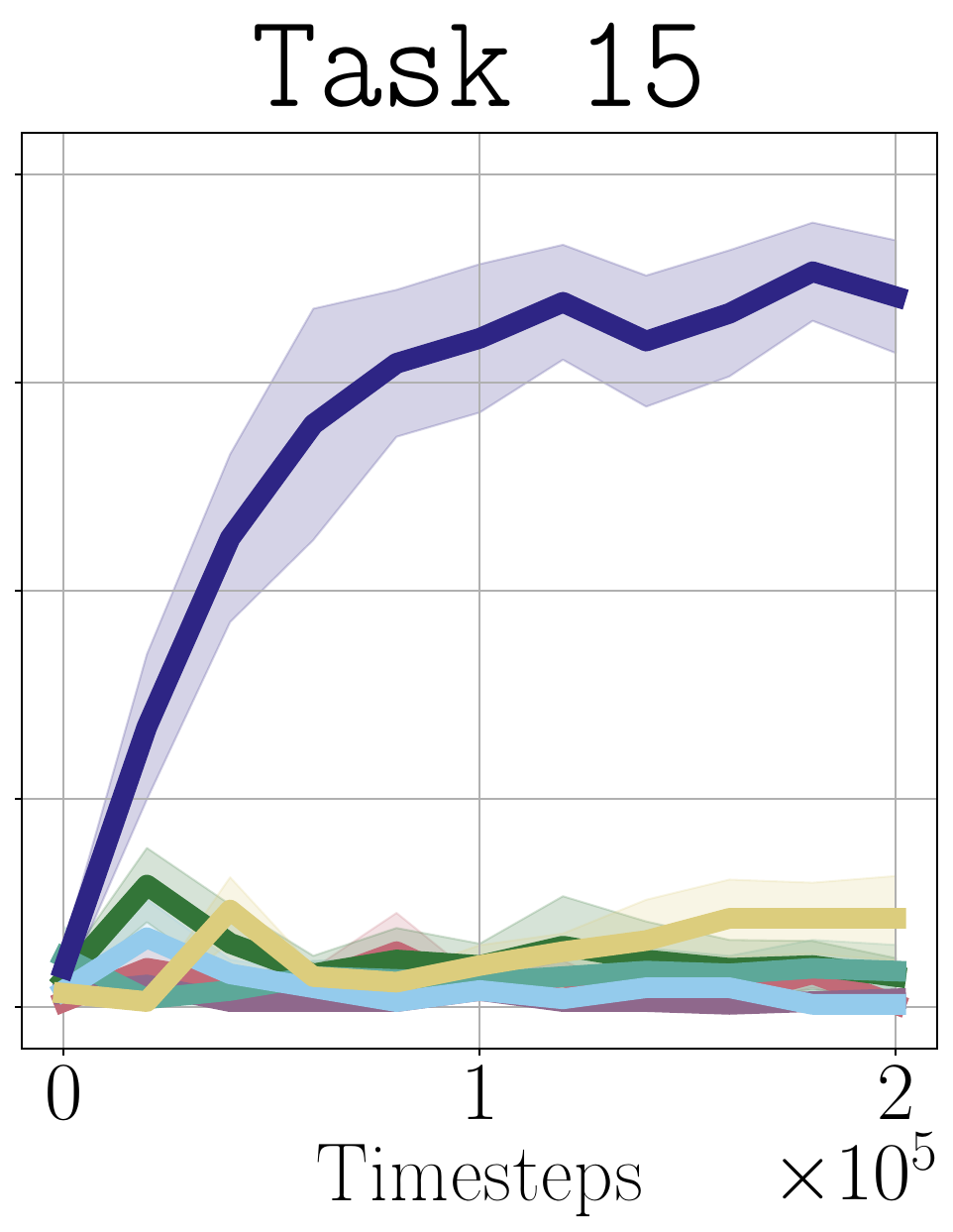}
  \end{subfigure}\hspace{0.0025\textwidth}
  \begin{subfigure}[t]{0.111\textwidth}
    \centering
    \includegraphics[width=\linewidth,
      height=2.75cm,
      keepaspectratio]{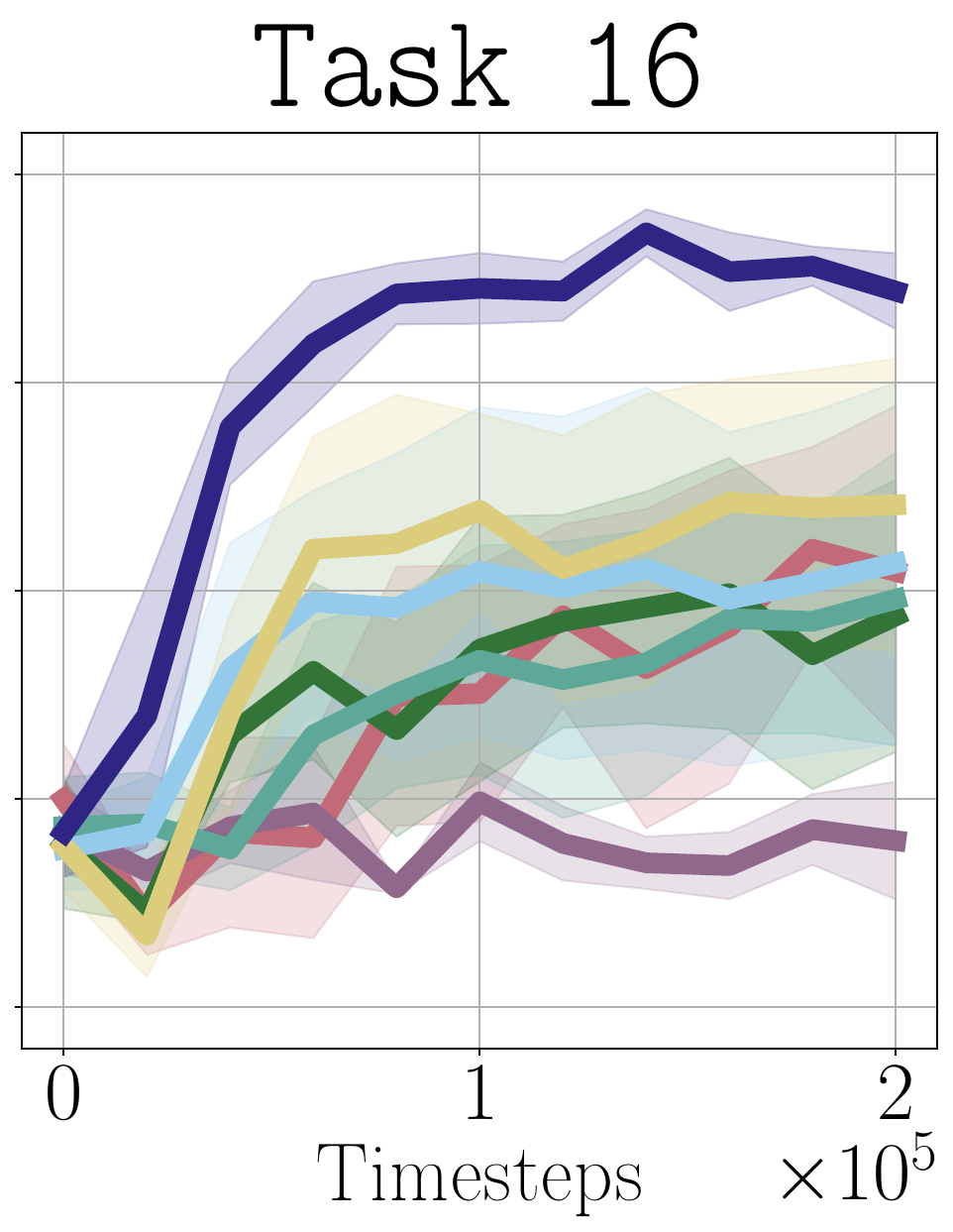}
  \end{subfigure}\hspace{0.0025\textwidth}
  \begin{subfigure}[t]{0.111\textwidth}
    \centering
    \includegraphics[width=\linewidth,
      height=2.75cm,
      keepaspectratio]{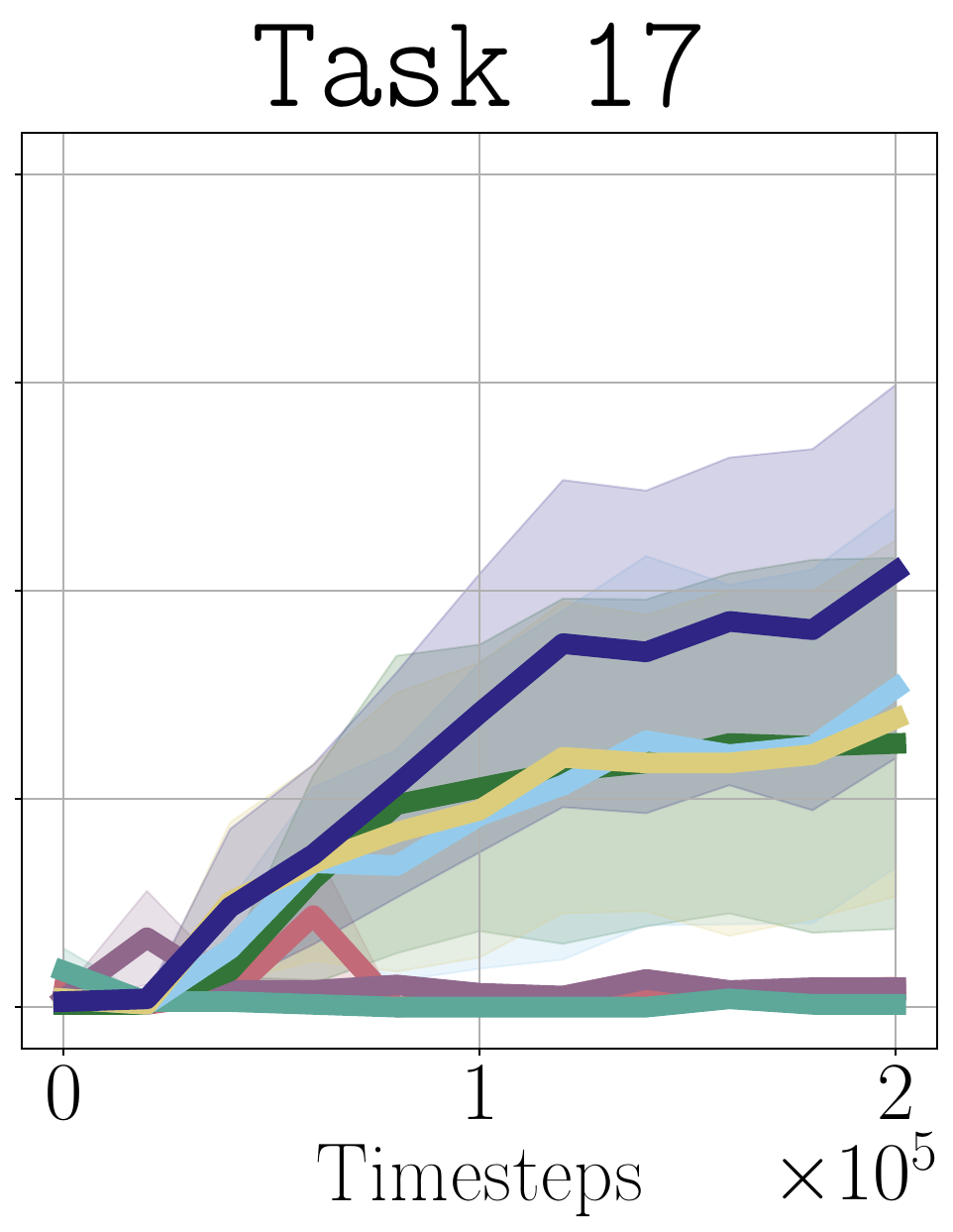}
  \end{subfigure}\hspace{0.0025\textwidth}
  \begin{subfigure}[t]{0.111\textwidth}
    \centering
    \includegraphics[width=\linewidth,
      height=2.75cm,
      keepaspectratio]{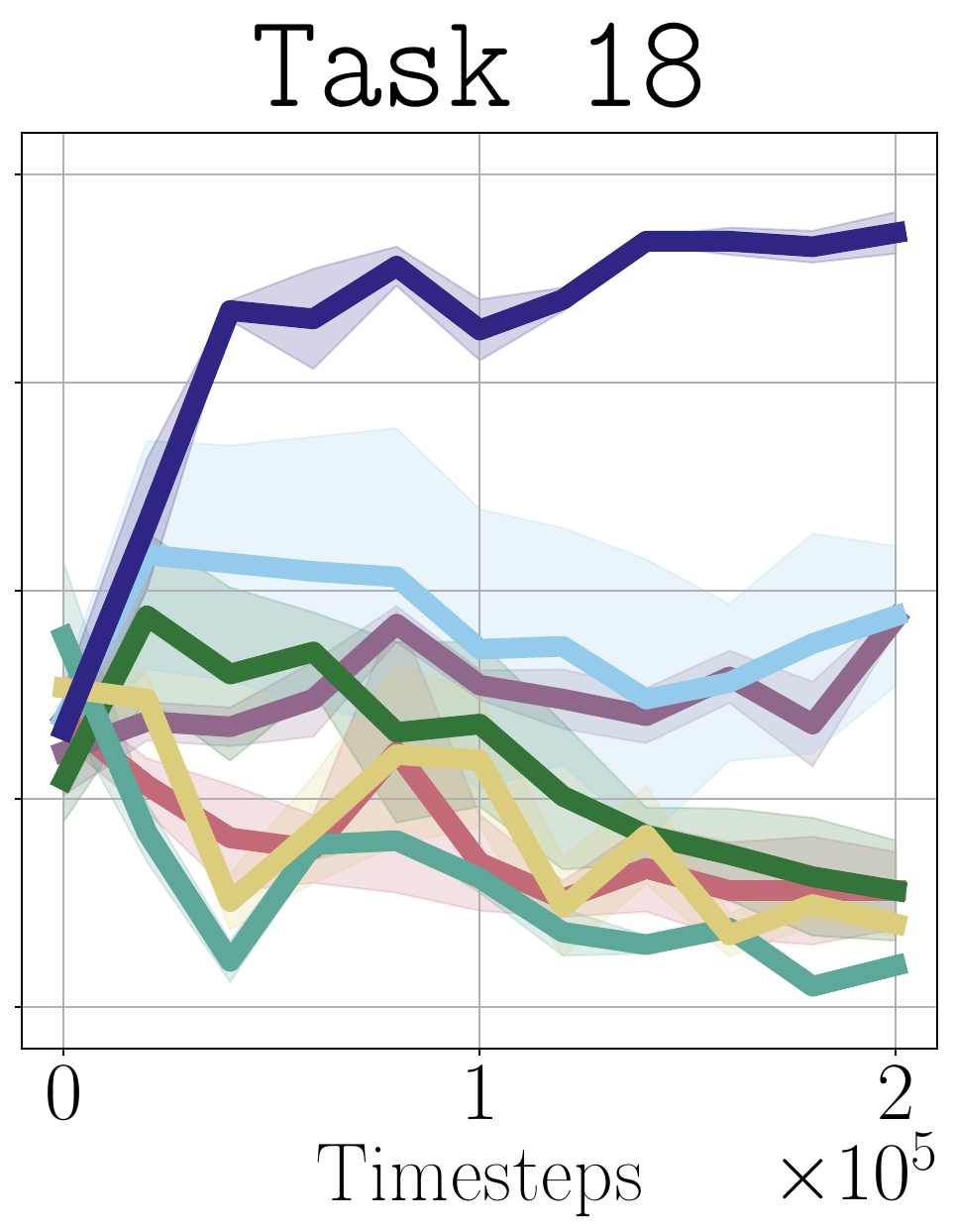}
  \end{subfigure}\hspace{0.0025\textwidth}
  \begin{subfigure}[t]{0.111\textwidth}
    \centering
    \includegraphics[width=\linewidth,
      height=2.75cm,
      keepaspectratio]{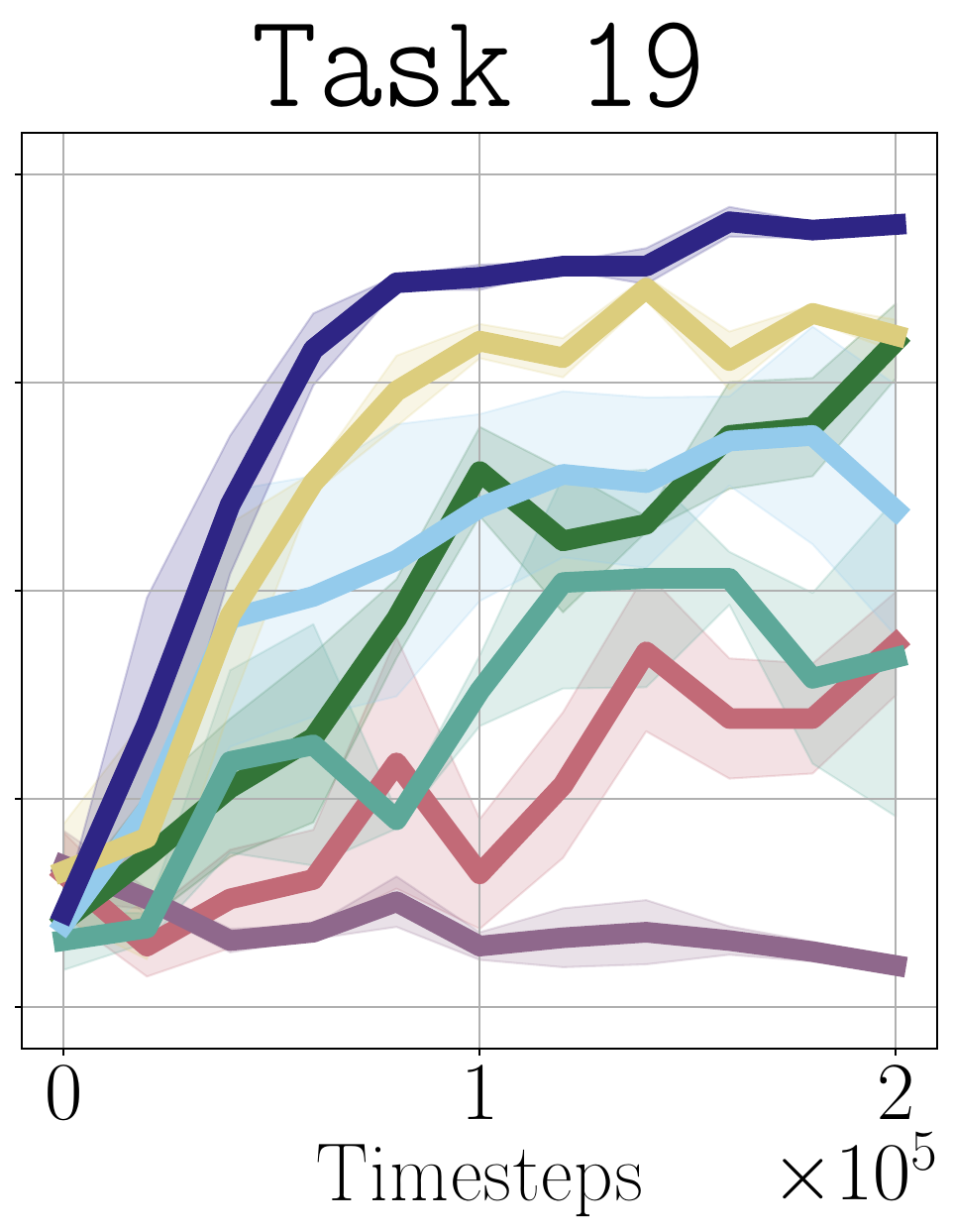}
  \end{subfigure}\hspace{0.0025\textwidth}
  \begin{subfigure}[t]{0.111\textwidth}
    \centering
    \includegraphics[width=\linewidth,
      height=2.75cm,
      keepaspectratio]{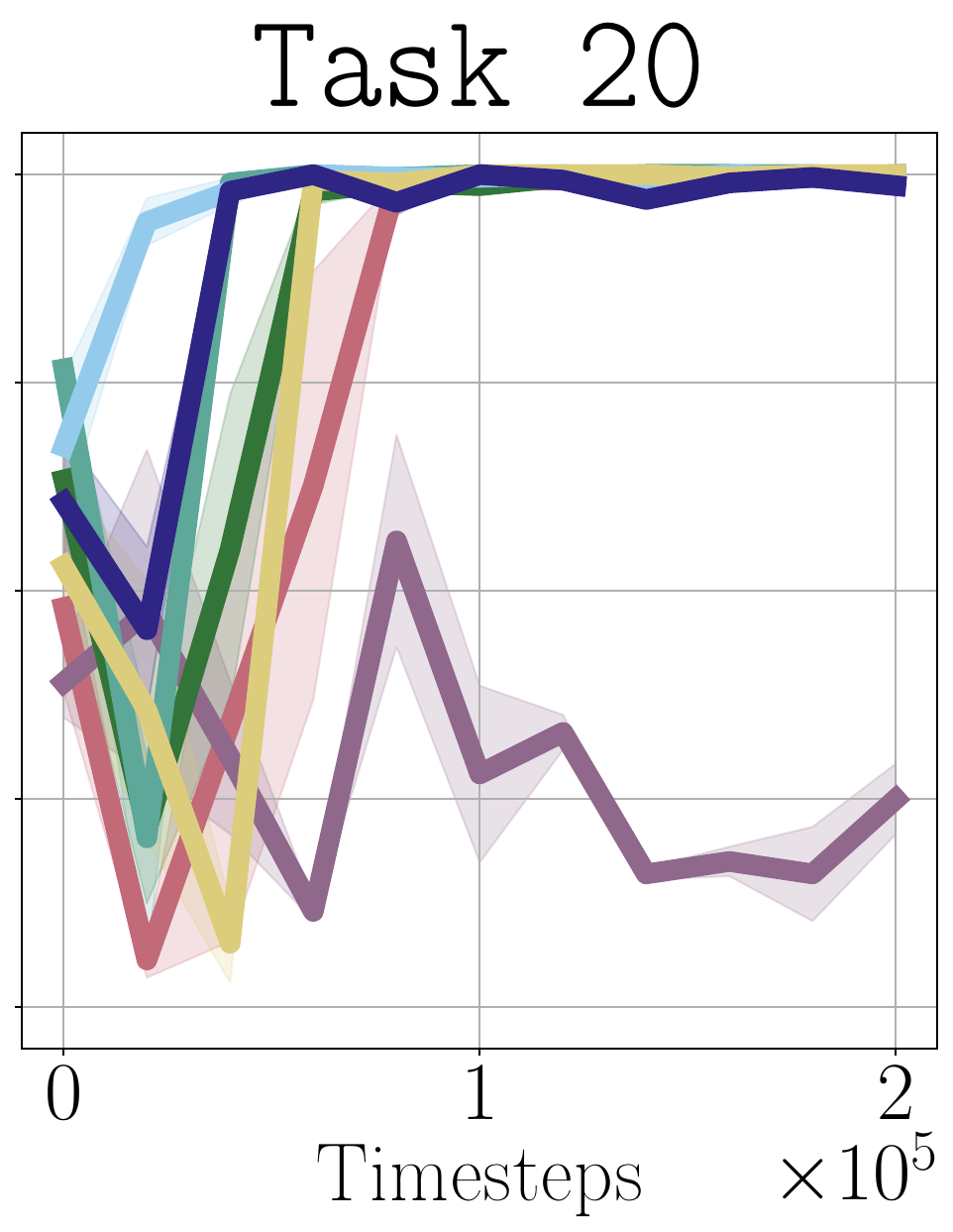}
  \end{subfigure}\hspace{0.0025\textwidth}
  \begin{subfigure}[t]{0.111\textwidth}
    \centering
    \includegraphics[width=\linewidth,
      height=2.75cm,
      keepaspectratio]{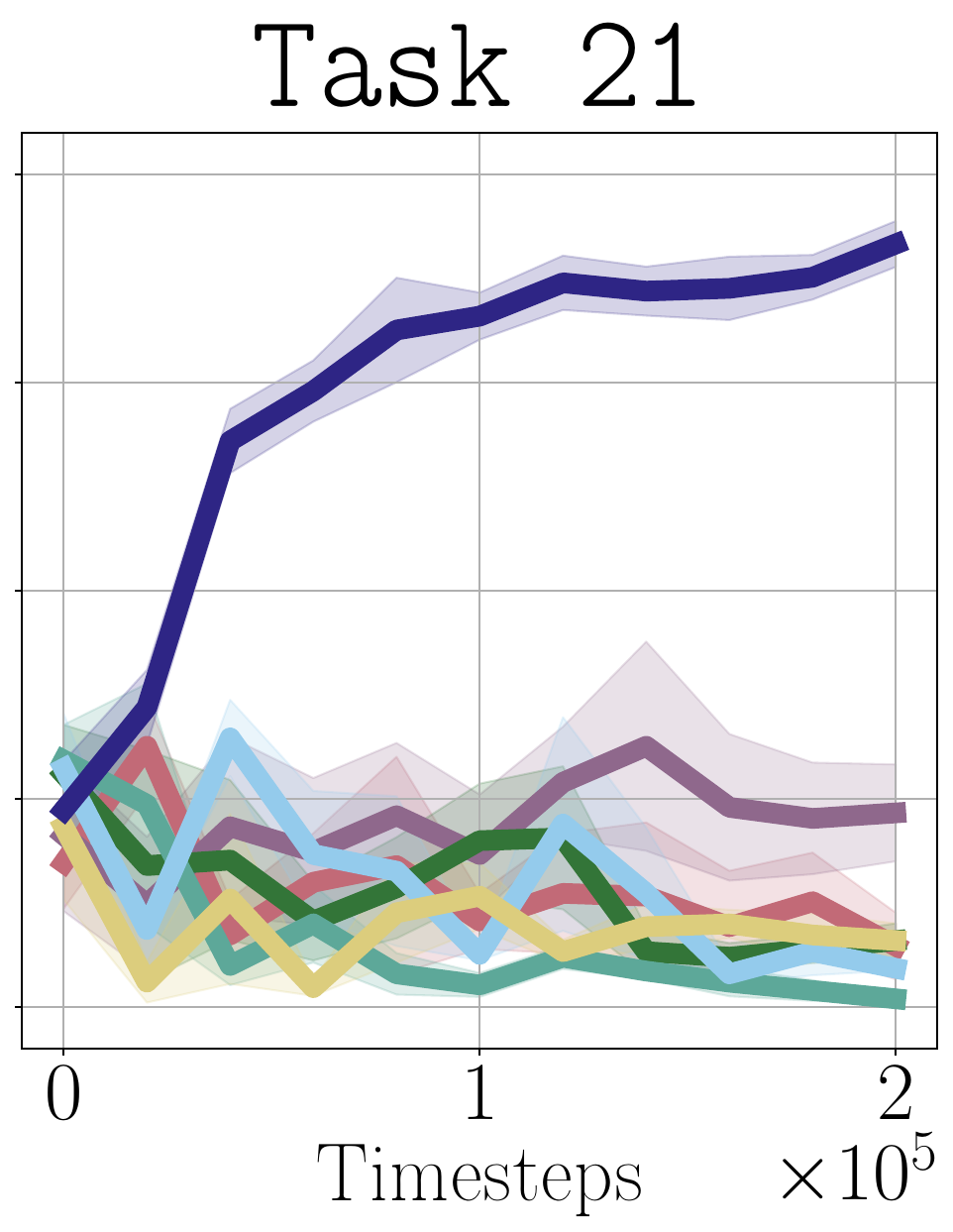}
  \end{subfigure}

  \caption{Per-task results of RL finetuning with \dsrl and \prvm process reward on \texttt{LIBERO Kitchen Scene 1--3}.}
  \label{fig:full_dsrl_libero}
\end{figure}

\begin{figure}[H]
  \centering
  \begin{subfigure}[t]{0.98\textwidth}
    \centering
    \includegraphics[width=0.98\linewidth]{images/legend.pdf}
  \end{subfigure}
  \vspace{0.15em}

  \begin{subfigure}[t]{0.13\textwidth}
    \centering
    \includegraphics[width=\linewidth,
      height=2.75cm,
      keepaspectratio]{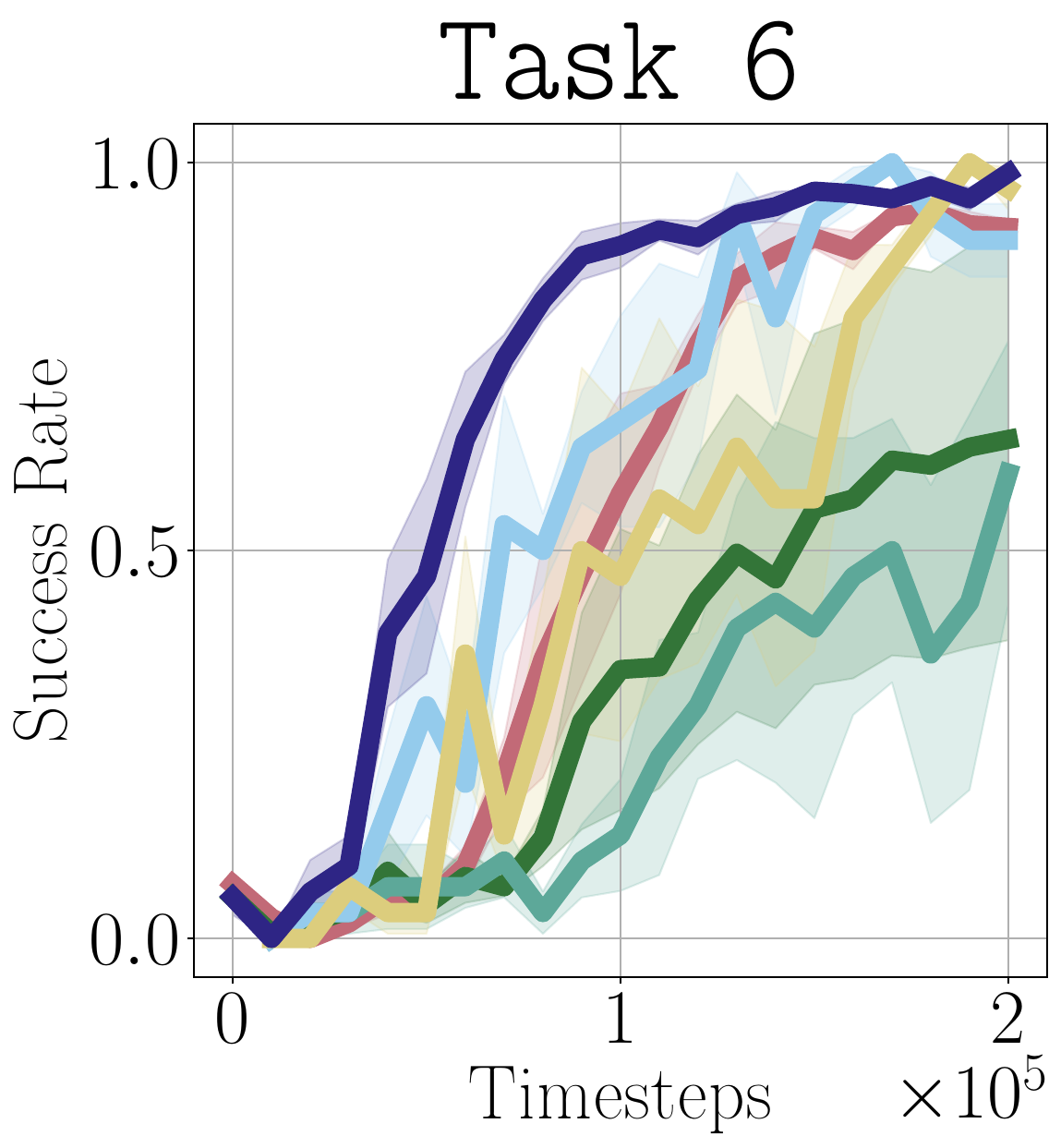}
  \end{subfigure}\hspace{0.0025\textwidth}
  \begin{subfigure}[t]{0.111\textwidth}
    \centering
    \includegraphics[width=\linewidth,
      height=2.75cm,
      keepaspectratio]{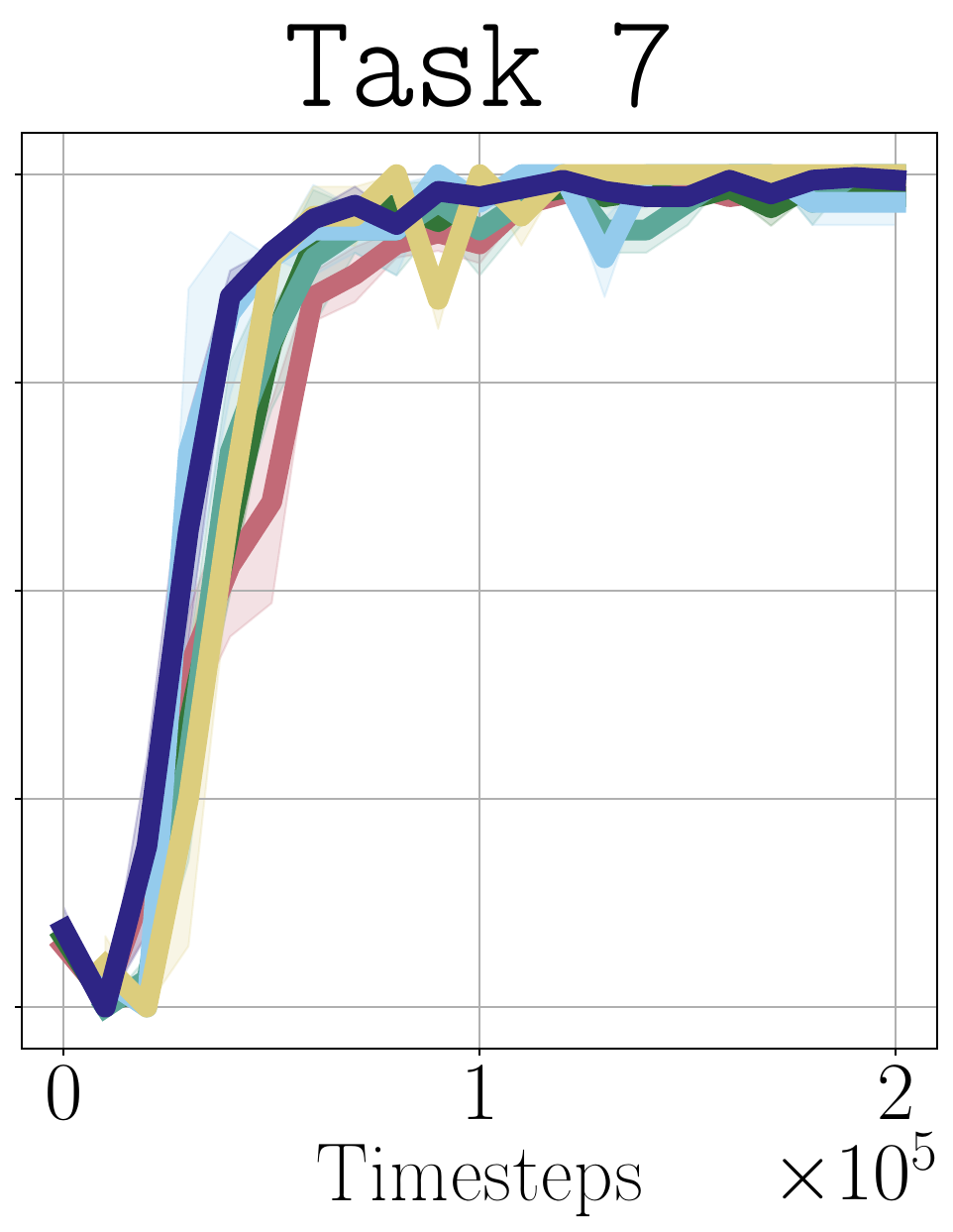}
  \end{subfigure}\hspace{0.0025\textwidth}
  \begin{subfigure}[t]{0.111\textwidth}
    \centering
    \includegraphics[width=\linewidth,
      height=2.75cm,
      keepaspectratio]{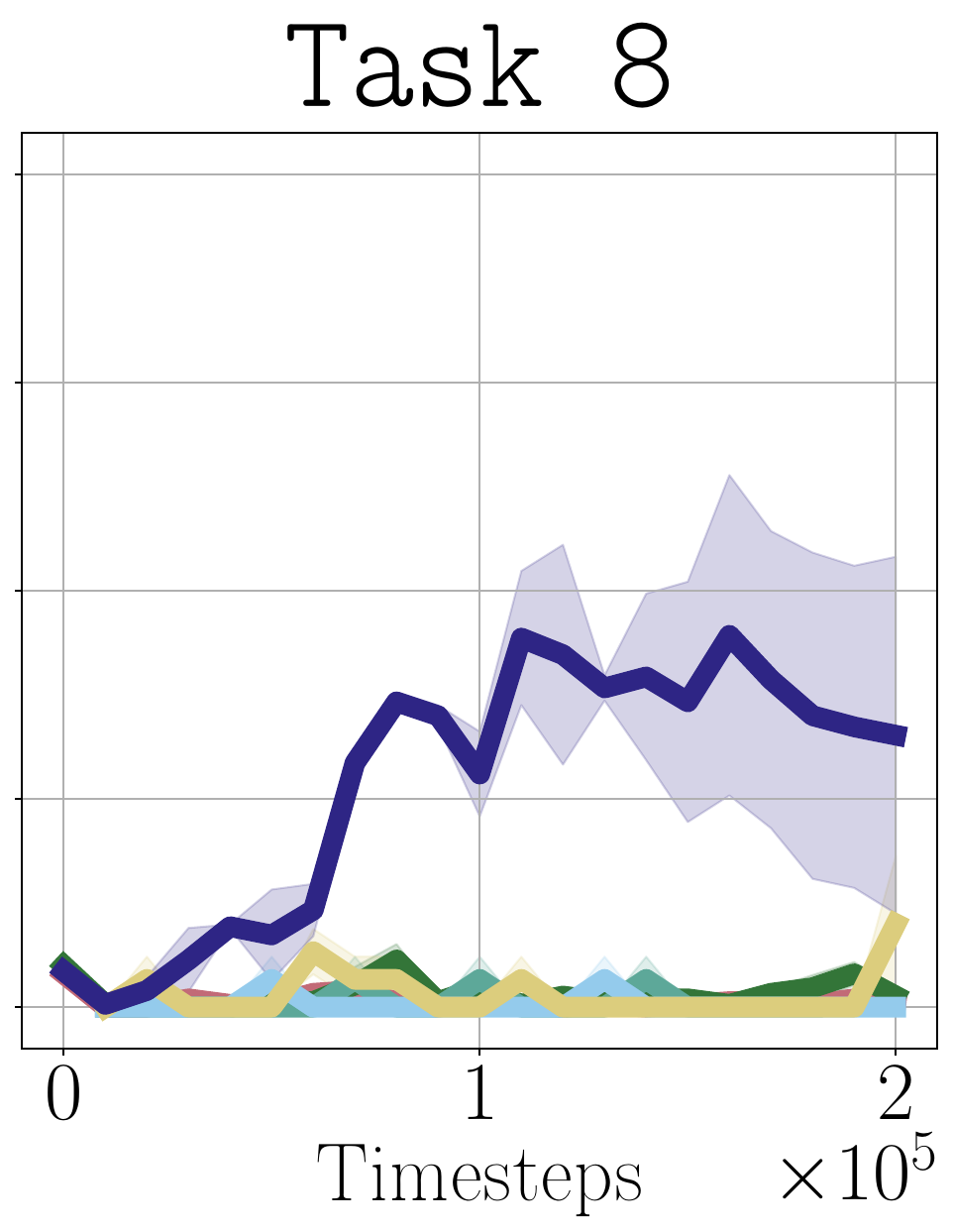}
  \end{subfigure}\hspace{0.0025\textwidth}
  \begin{subfigure}[t]{0.111\textwidth}
    \centering
    \includegraphics[width=\linewidth,
      height=2.75cm,
      keepaspectratio]{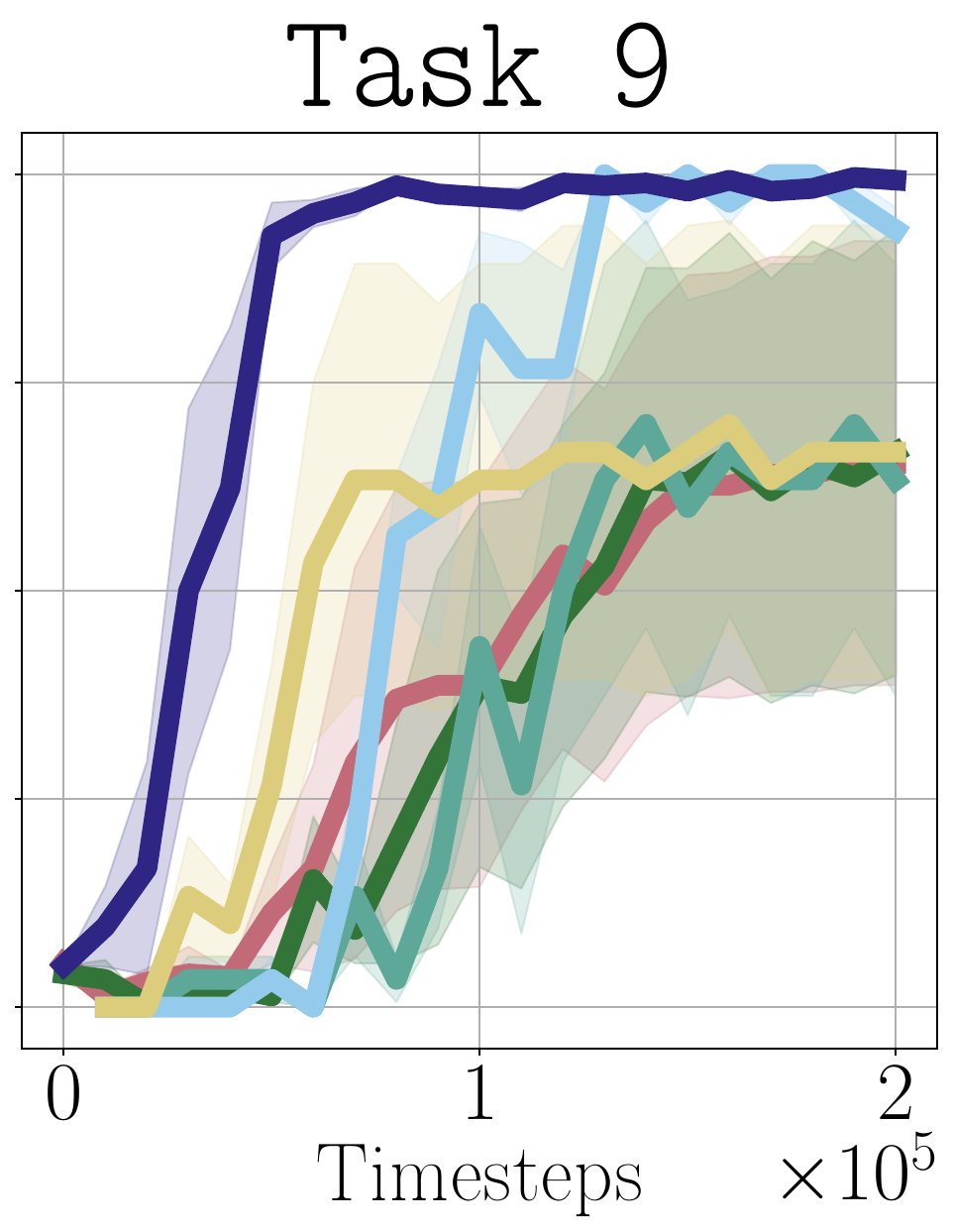}
  \end{subfigure}\hspace{0.0025\textwidth}
  \begin{subfigure}[t]{0.111\textwidth}
    \centering
    \includegraphics[width=\linewidth,
      height=2.75cm,
      keepaspectratio]{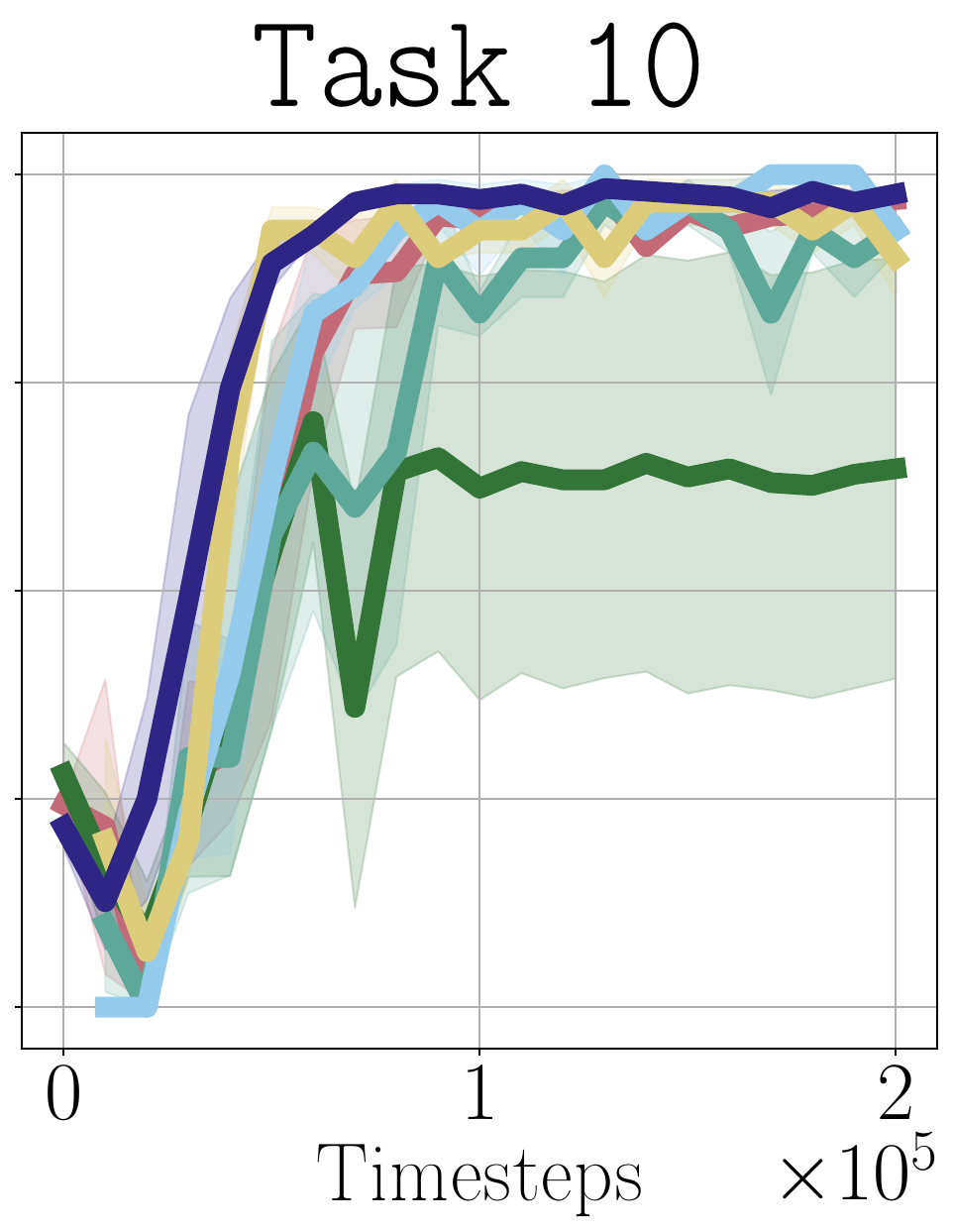}
  \end{subfigure}\hspace{0.0025\textwidth}
  \begin{subfigure}[t]{0.111\textwidth}
    \centering
    \includegraphics[width=\linewidth,
      height=2.75cm,
      keepaspectratio]{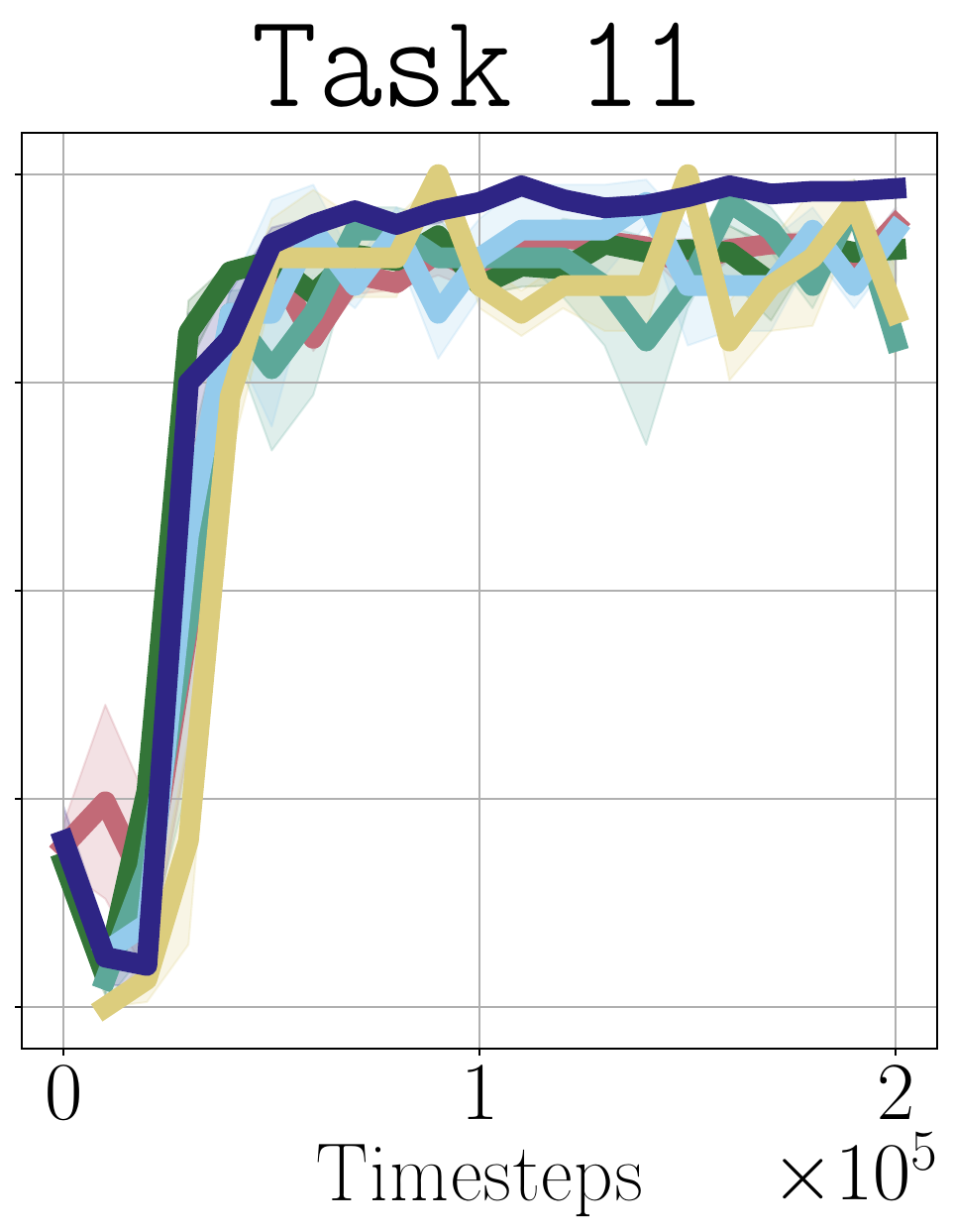}
  \end{subfigure}\hspace{0.0025\textwidth}
  \begin{subfigure}[t]{0.111\textwidth}
    \centering
    \includegraphics[width=\linewidth,
      height=2.75cm,
      keepaspectratio]{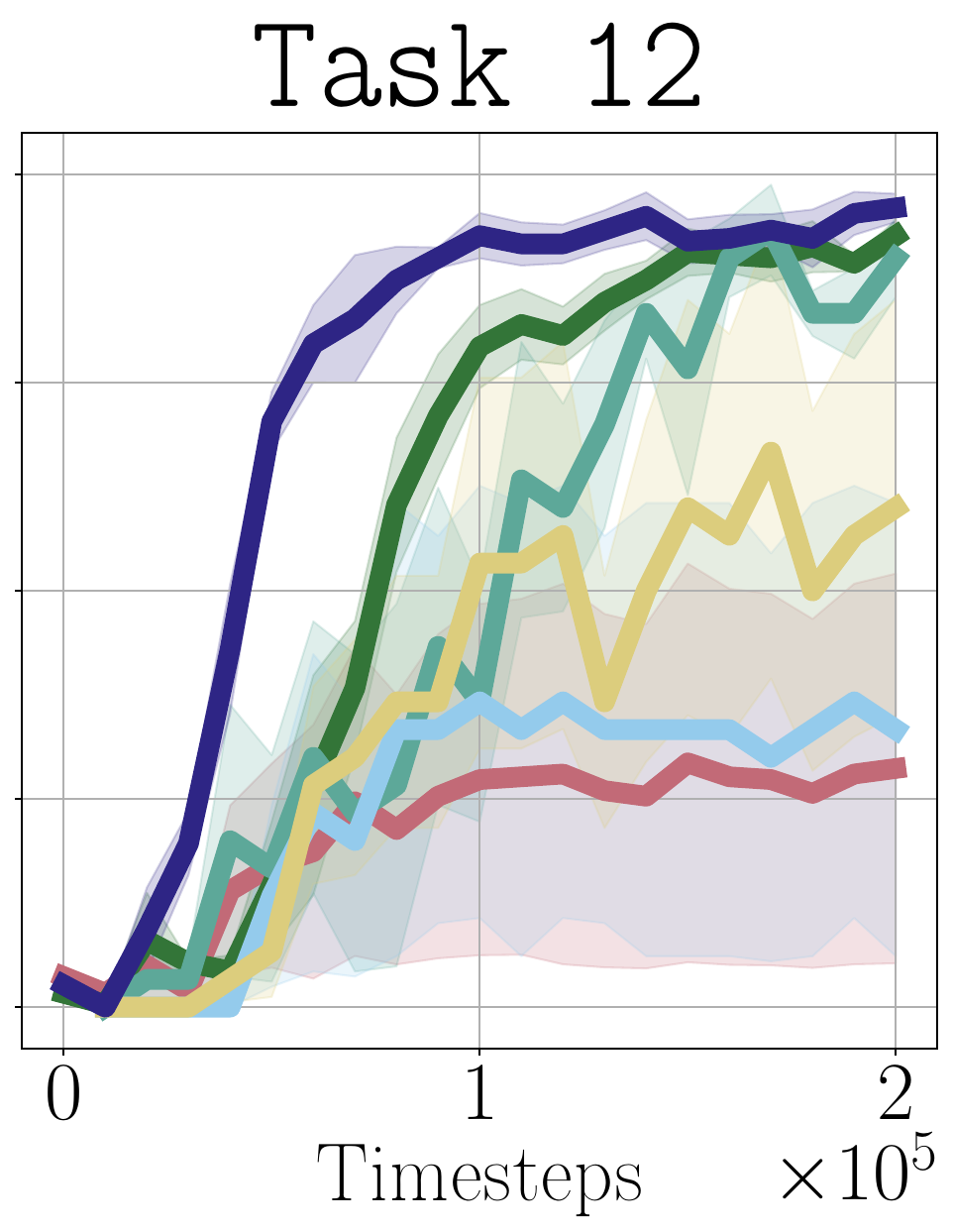}
  \end{subfigure}\hspace{0.0025\textwidth}
  \begin{subfigure}[t]{0.111\textwidth}
    \centering
    \includegraphics[width=\linewidth,
      height=2.75cm,
      keepaspectratio]{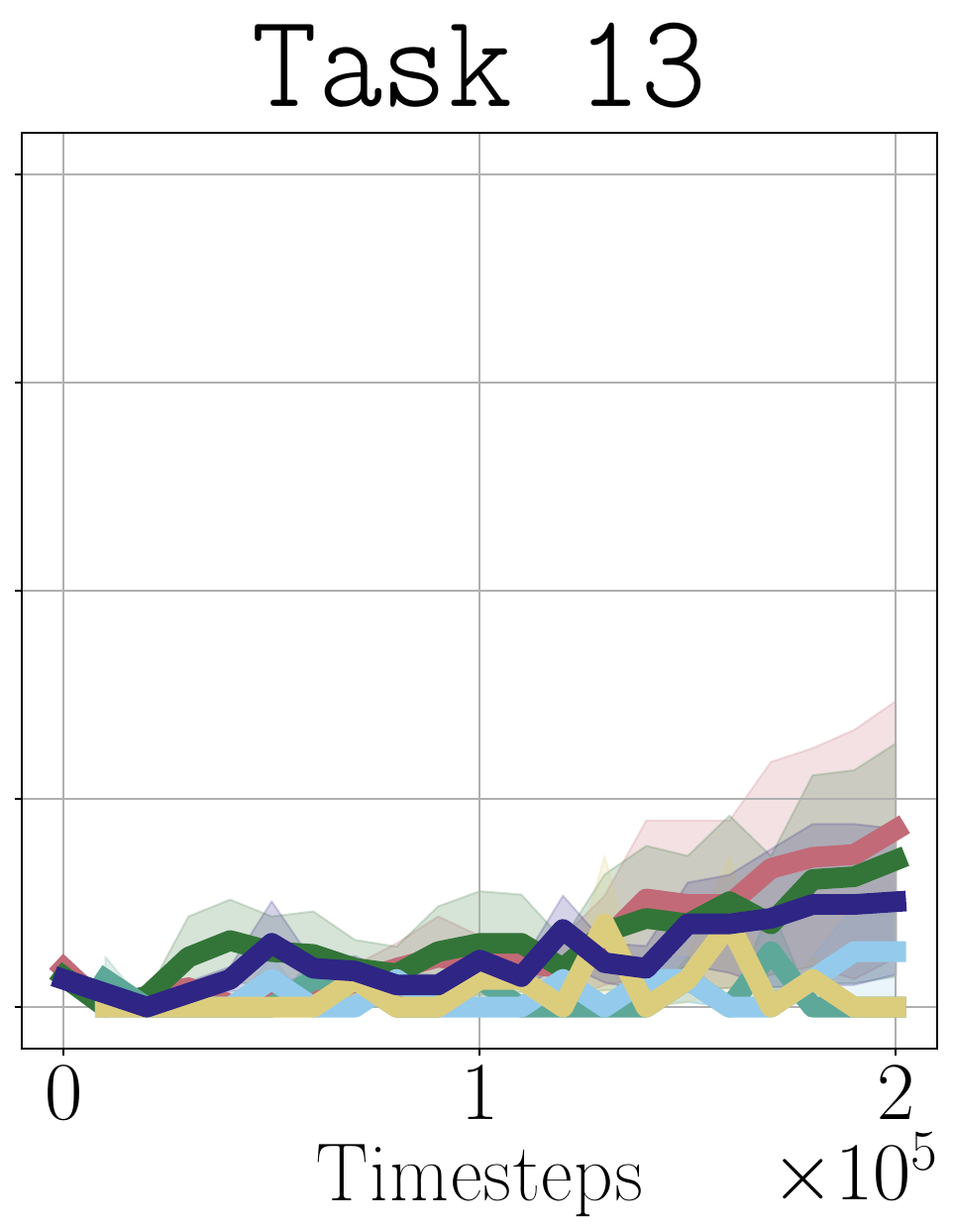}
  \end{subfigure}

  \vspace{0.15em}

  \begin{subfigure}[t]{0.13\textwidth}
    \centering
    \includegraphics[width=\linewidth,
      height=2.75cm,
      keepaspectratio]{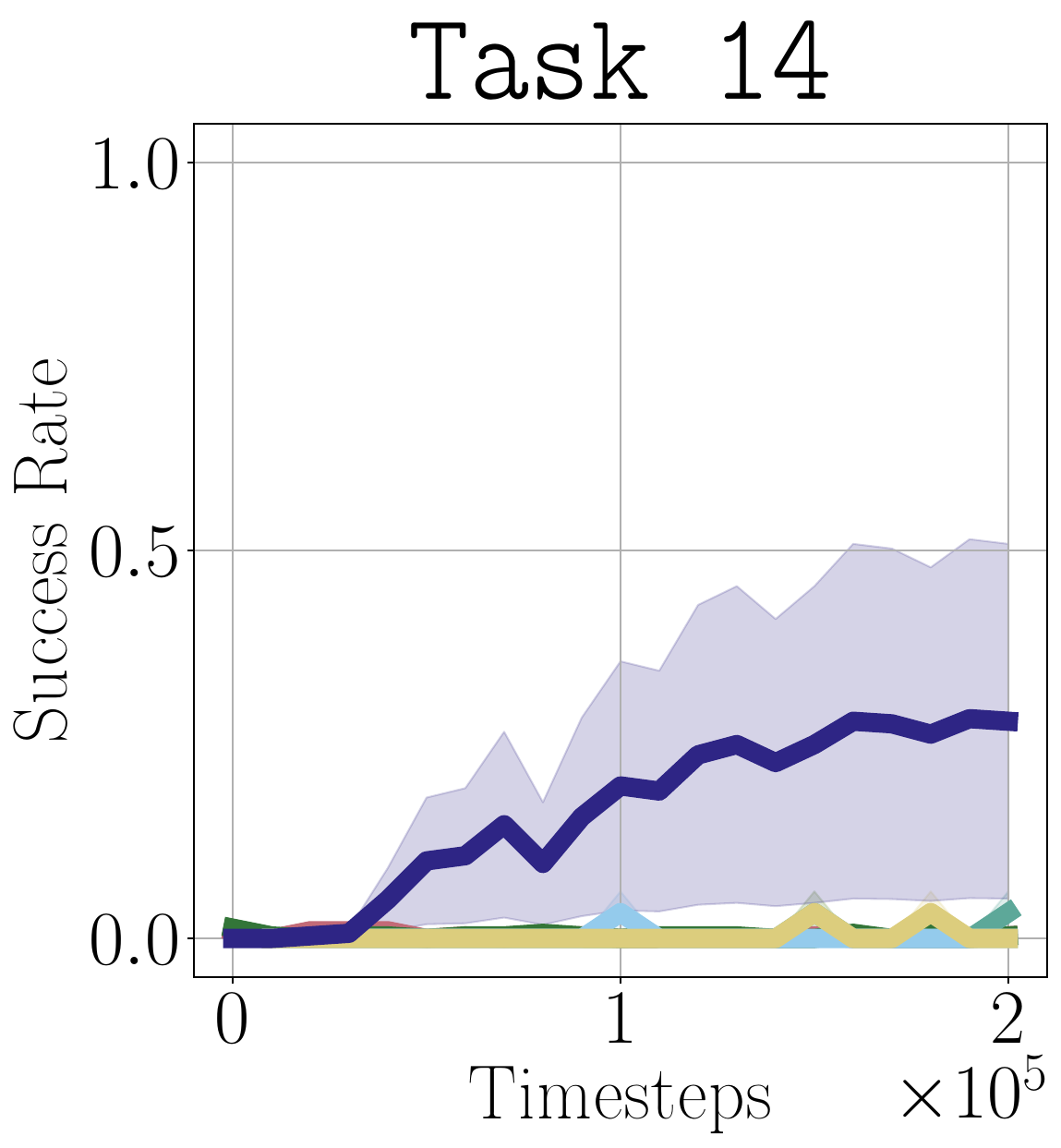}
  \end{subfigure}\hspace{0.0025\textwidth}
  \begin{subfigure}[t]{0.111\textwidth}
    \centering
    \includegraphics[width=\linewidth,
      height=2.75cm,
      keepaspectratio]{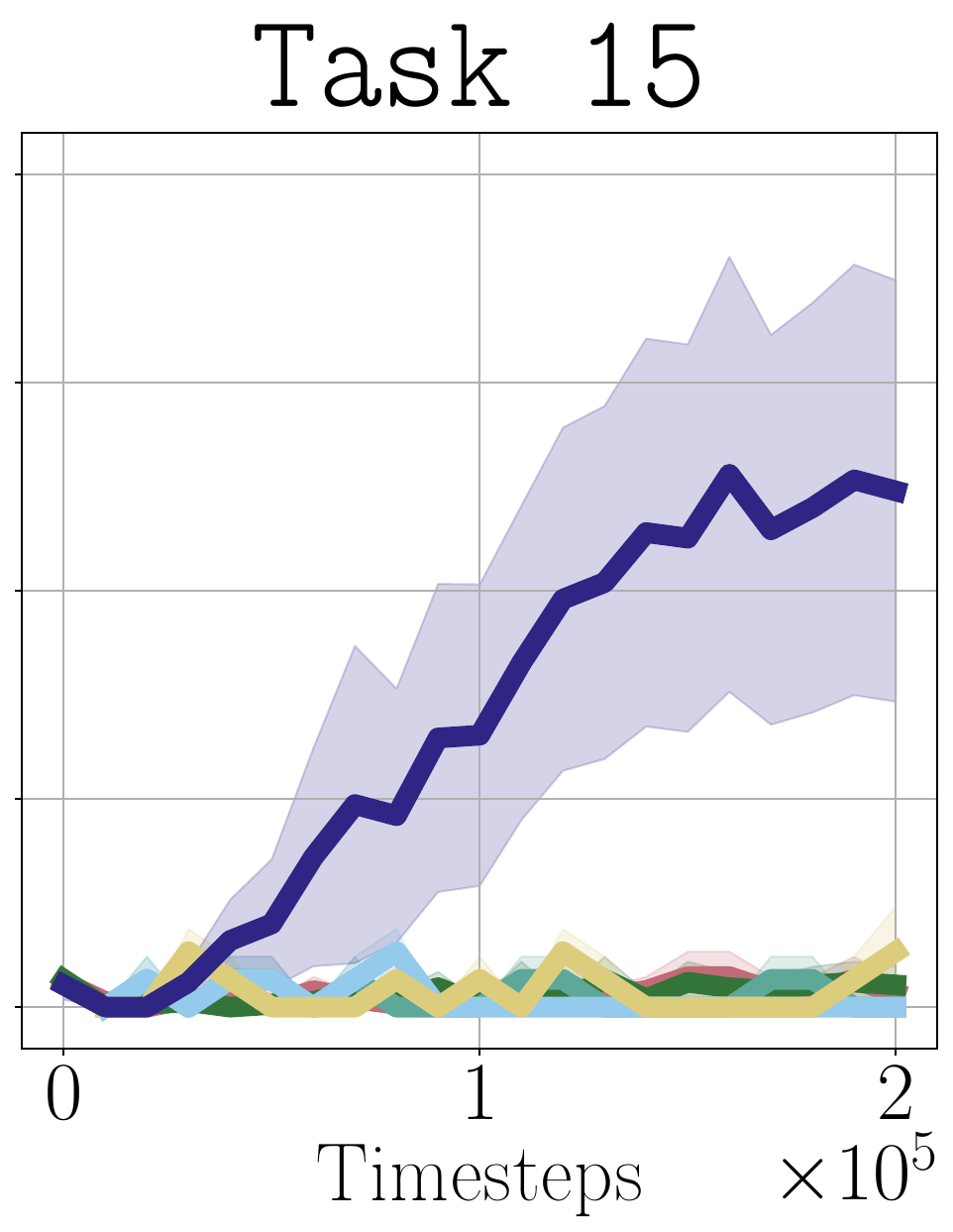}
  \end{subfigure}\hspace{0.0025\textwidth}
  \begin{subfigure}[t]{0.111\textwidth}
    \centering
    \includegraphics[width=\linewidth,
      height=2.75cm,
      keepaspectratio]{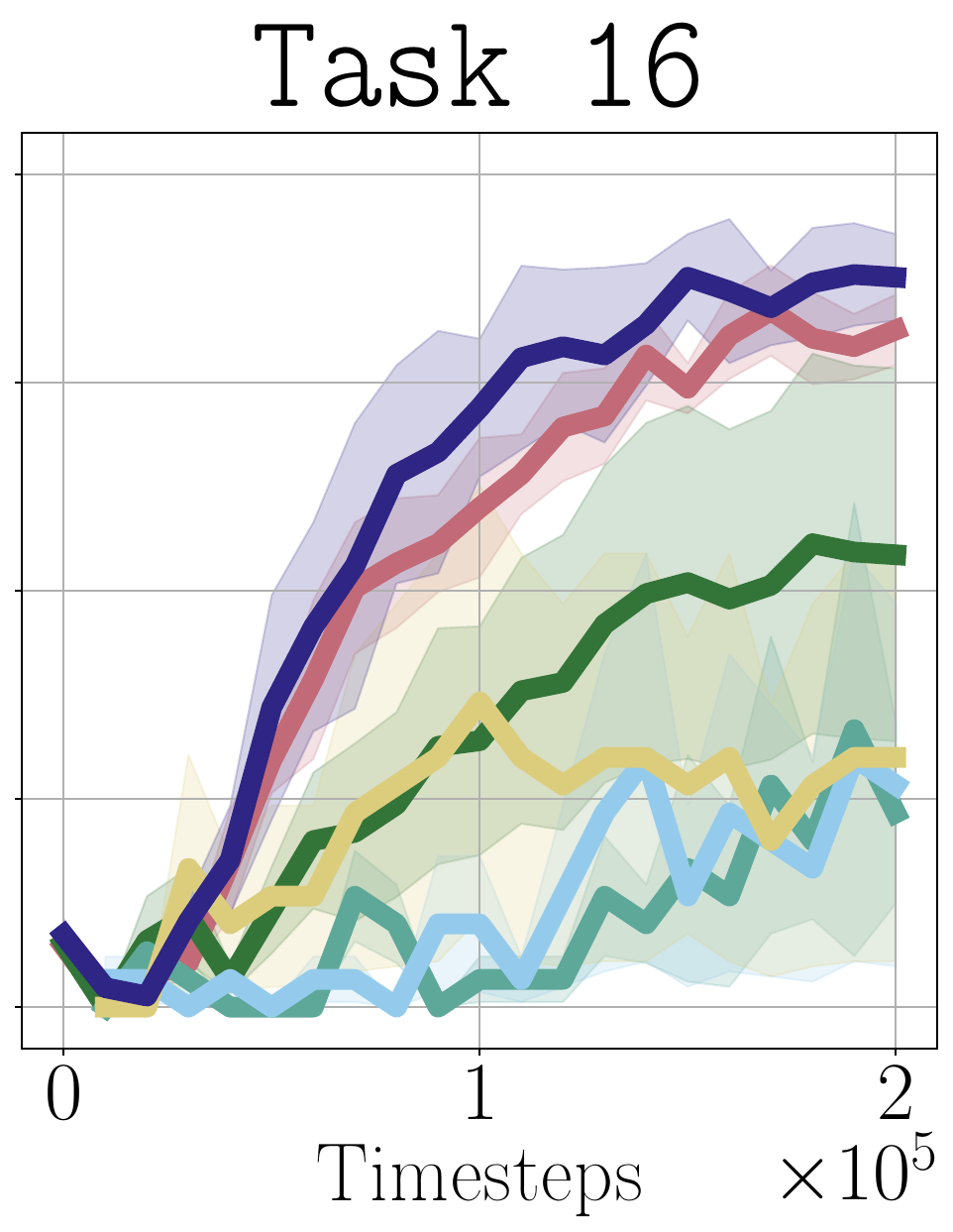}
  \end{subfigure}\hspace{0.0025\textwidth}
  \begin{subfigure}[t]{0.111\textwidth}
    \centering
    \includegraphics[width=\linewidth,
      height=2.75cm,
      keepaspectratio]{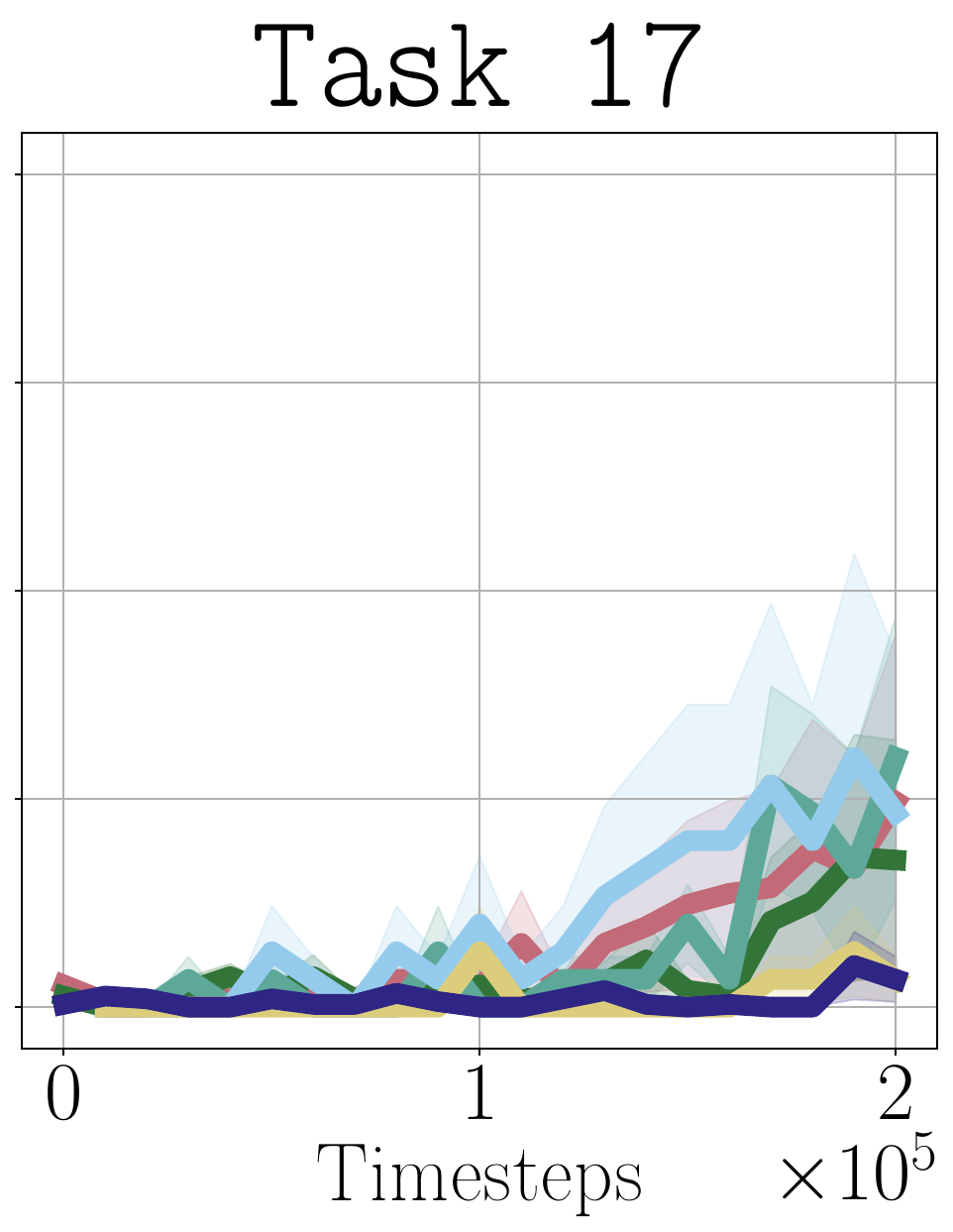}
  \end{subfigure}\hspace{0.0025\textwidth}
  \begin{subfigure}[t]{0.111\textwidth}
    \centering
    \includegraphics[width=\linewidth,
      height=2.75cm,
      keepaspectratio]{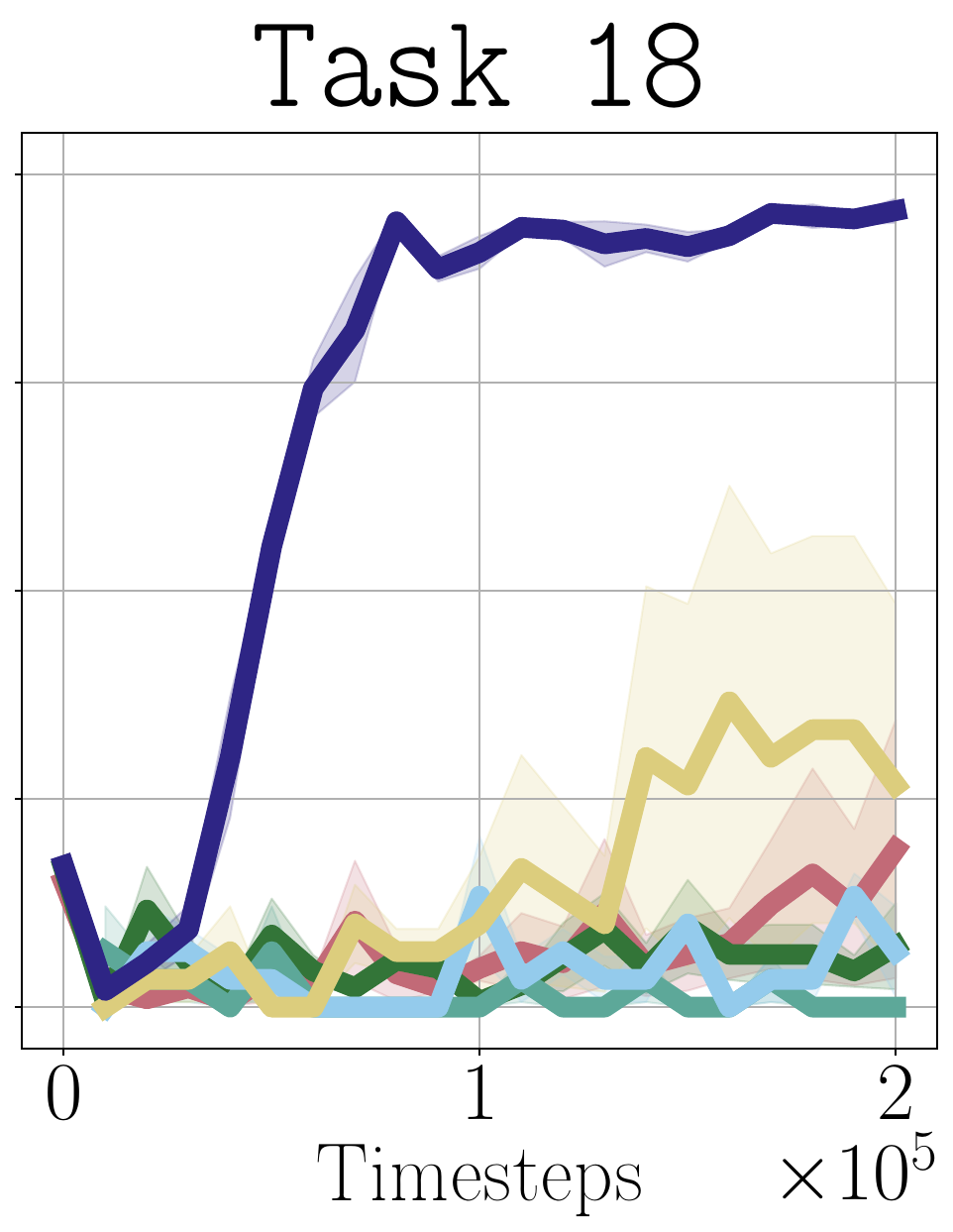}
  \end{subfigure}\hspace{0.0025\textwidth}
  \begin{subfigure}[t]{0.111\textwidth}
    \centering
    \includegraphics[width=\linewidth,
      height=2.75cm,
      keepaspectratio]{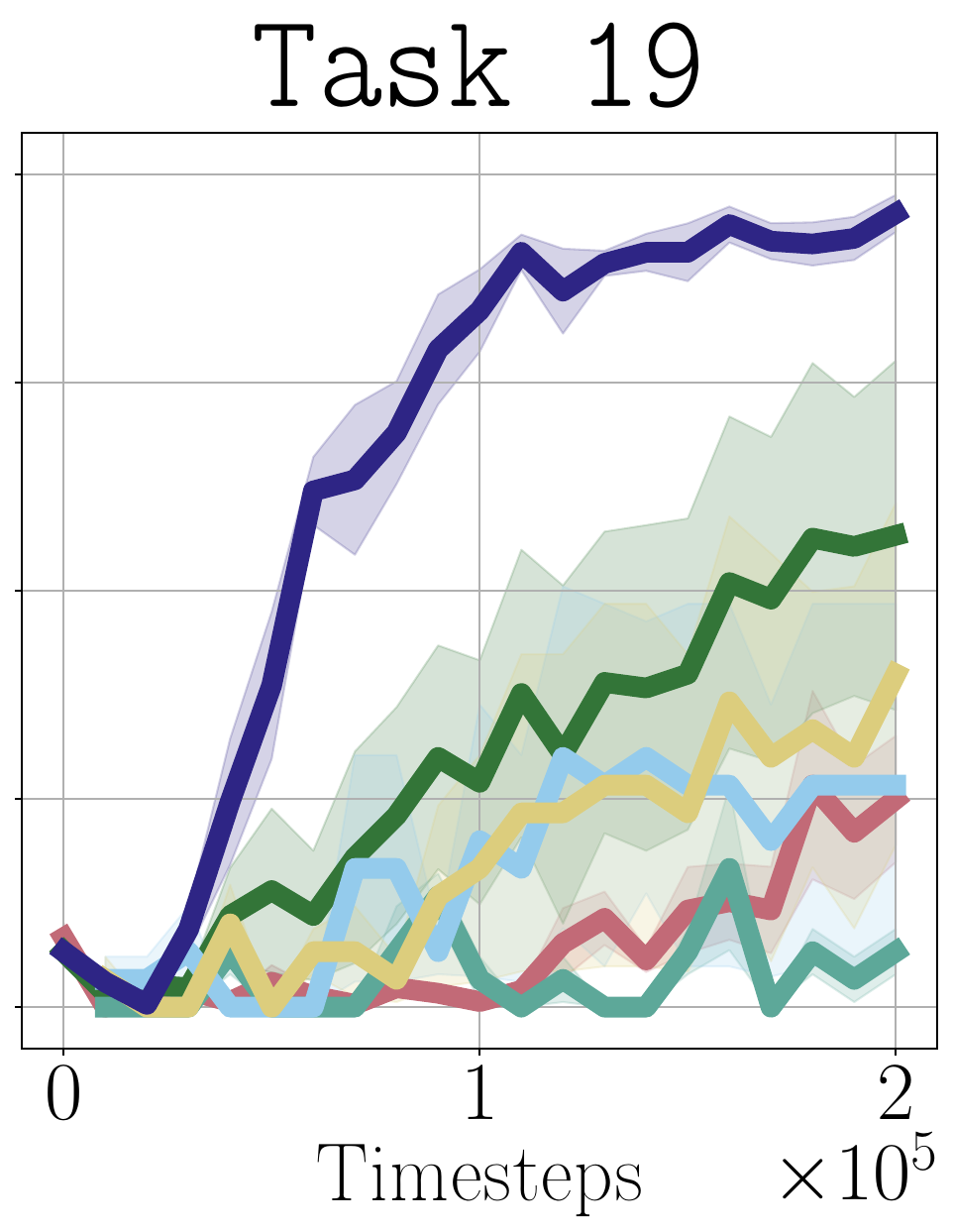}
  \end{subfigure}\hspace{0.0025\textwidth}
  \begin{subfigure}[t]{0.111\textwidth}
    \centering
    \includegraphics[width=\linewidth,
      height=2.75cm,
      keepaspectratio]{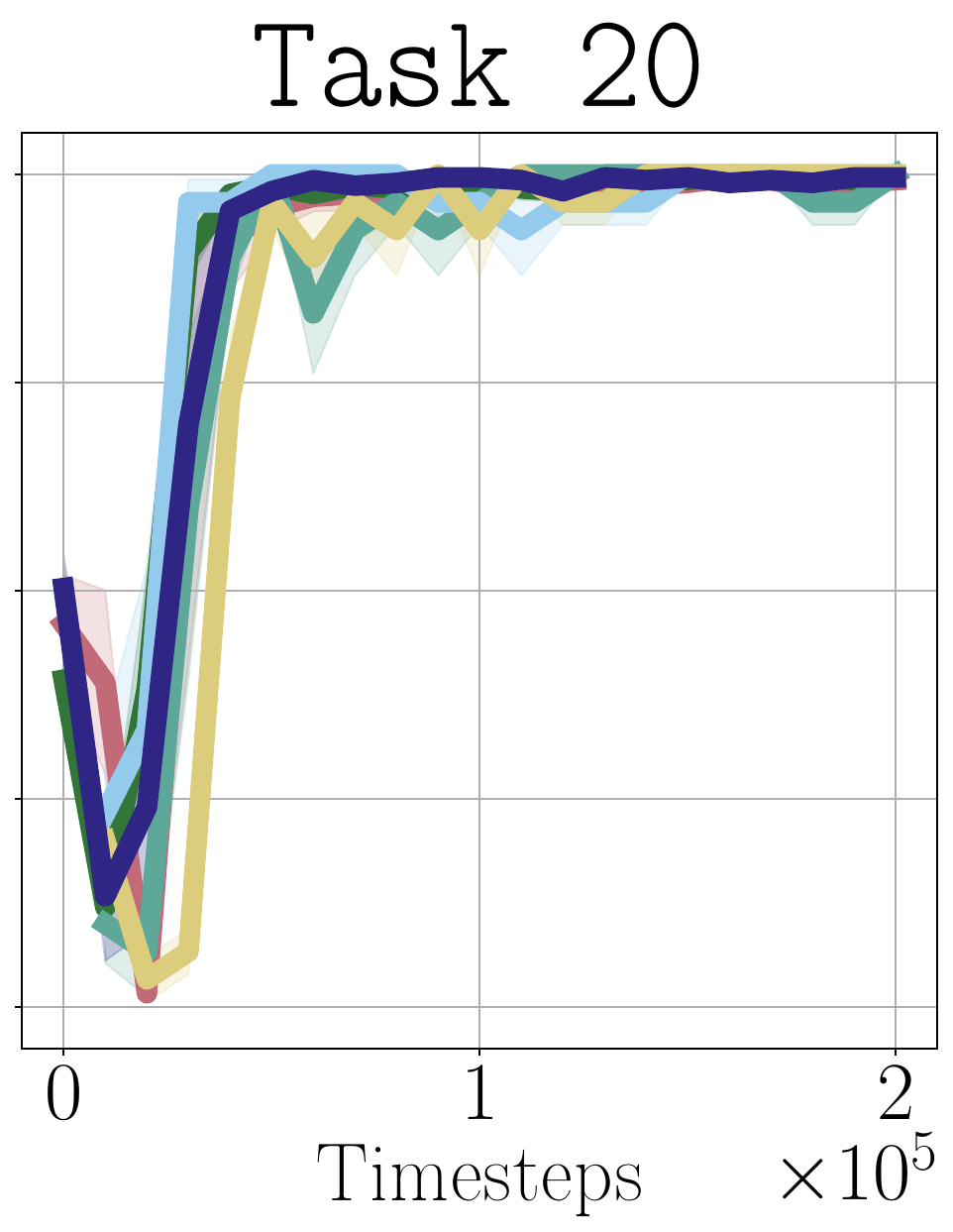}
  \end{subfigure}\hspace{0.0025\textwidth}
  \begin{subfigure}[t]{0.111\textwidth}
    \centering
    \includegraphics[width=\linewidth,
      height=2.75cm,
      keepaspectratio]{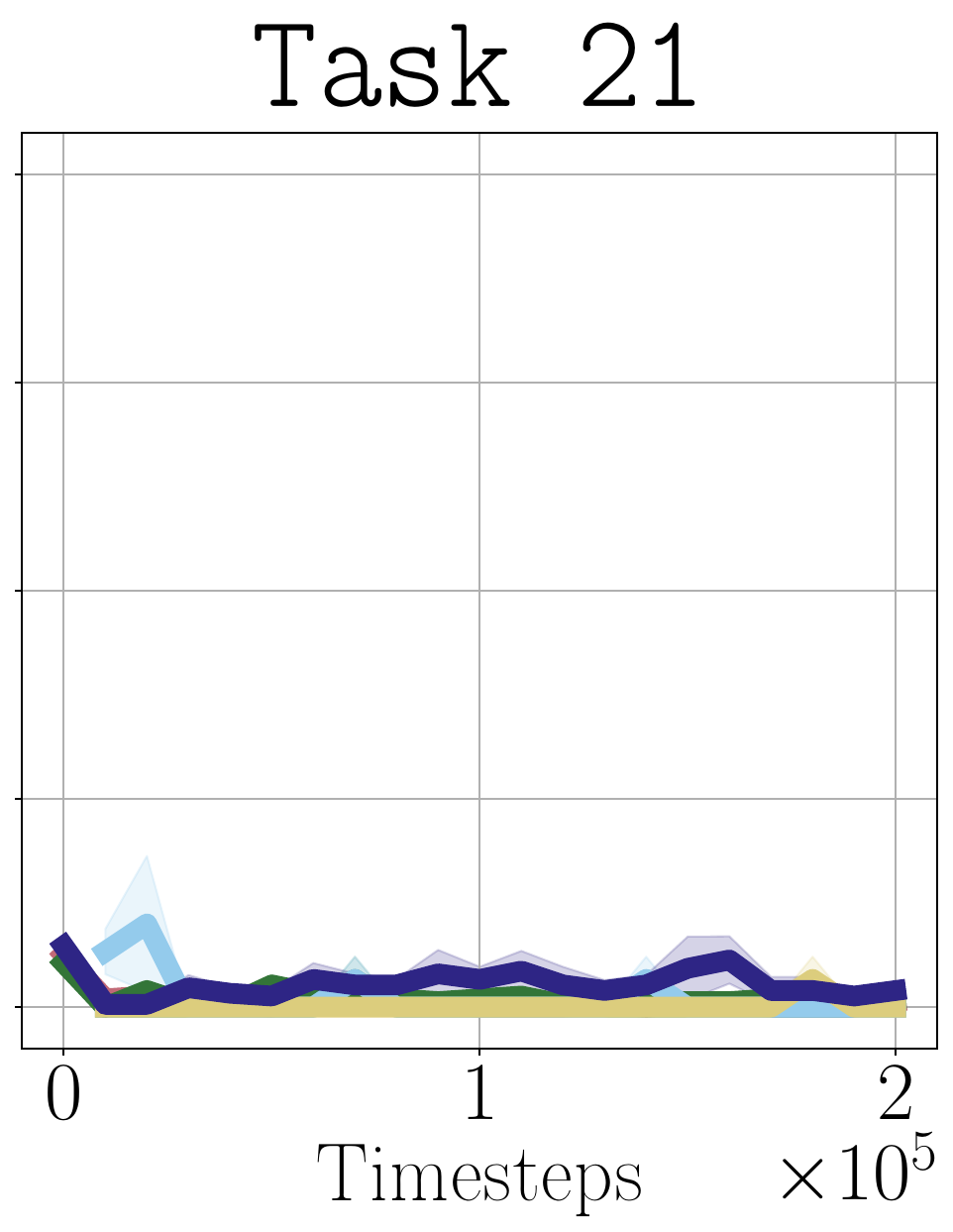}
  \end{subfigure}

  \caption{Per-task results of RL finetuning with \resrl and \prvm process reward on \texttt{LIBERO Kitchen Scene 1--3}.}
  \label{fig:full_resrl_libero}
\end{figure}

\subsection{Individual results for Residual RL on Robocasa}

\begin{figure}[H]
  \centering
  \includegraphics[width=0.98\linewidth]{images/legend.pdf}
  \begin{subfigure}[t]{0.145\textwidth}
    \centering
    \includegraphics[width=\linewidth]{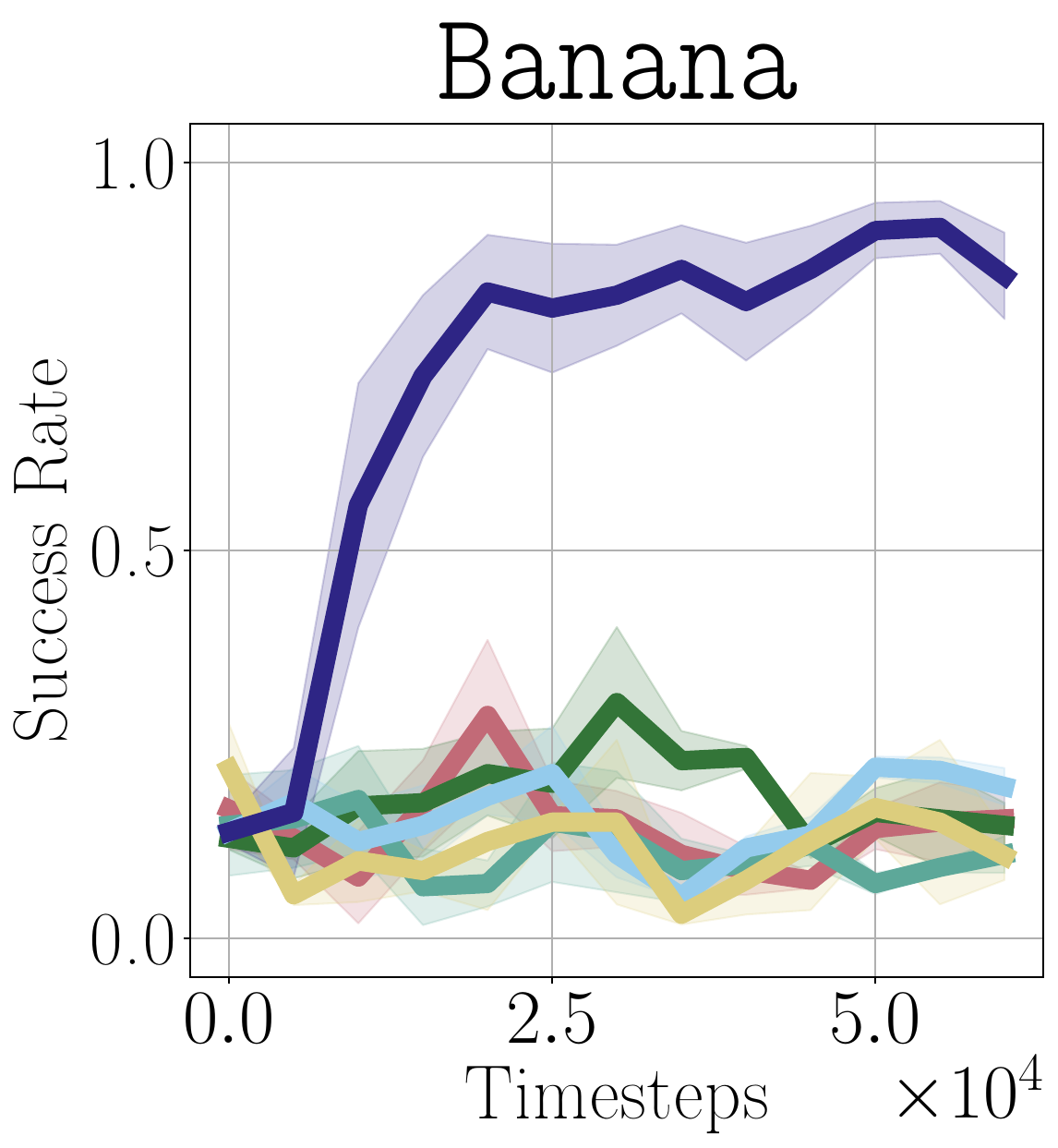}
  \end{subfigure}\hspace{0.015\textwidth}
  \begin{subfigure}[t]{0.12\textwidth}
    \centering
    \includegraphics[width=\linewidth]{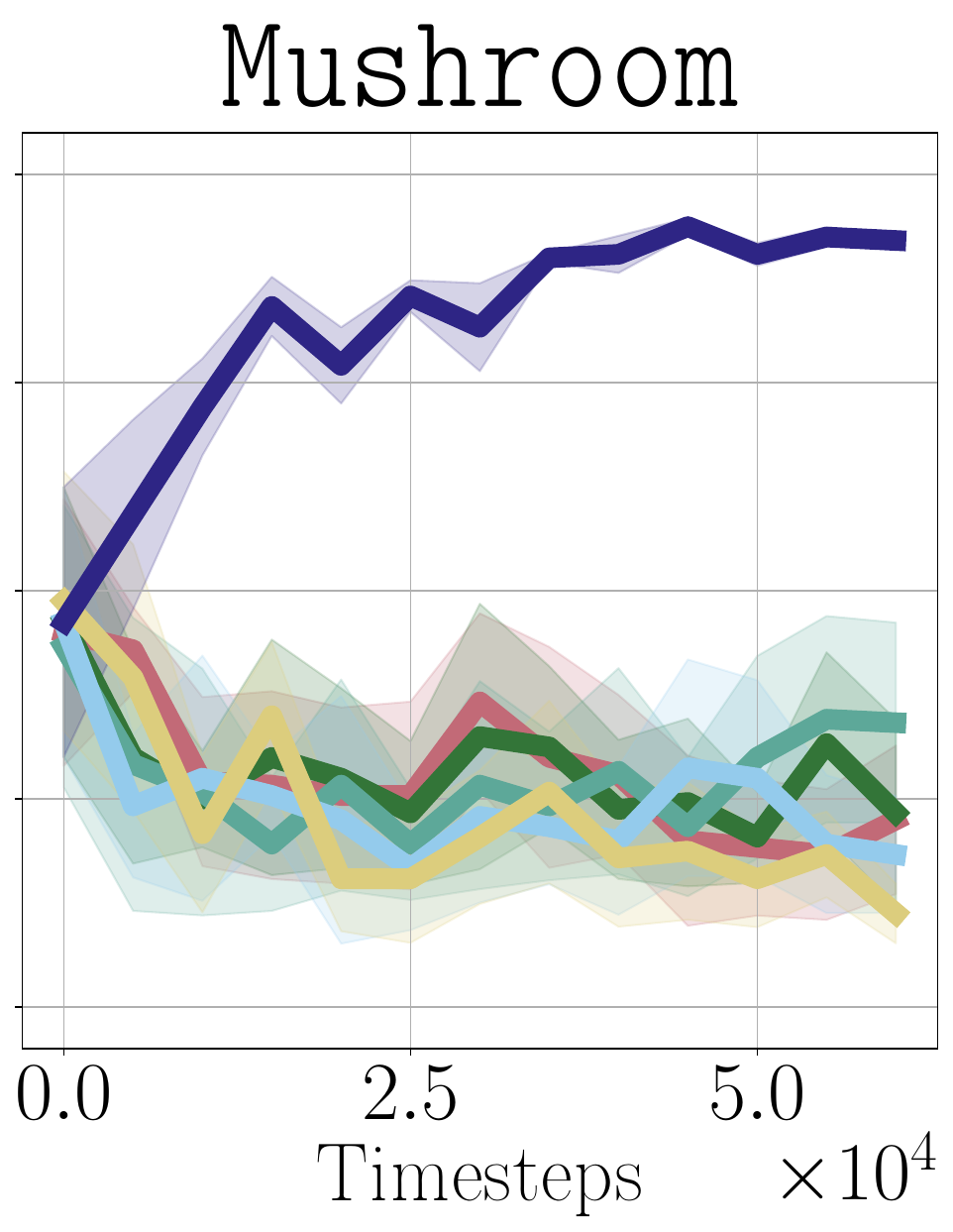}
  \end{subfigure}\hspace{0.015\textwidth}
  \begin{subfigure}[t]{0.12\textwidth}
    \centering
    \includegraphics[width=\linewidth]{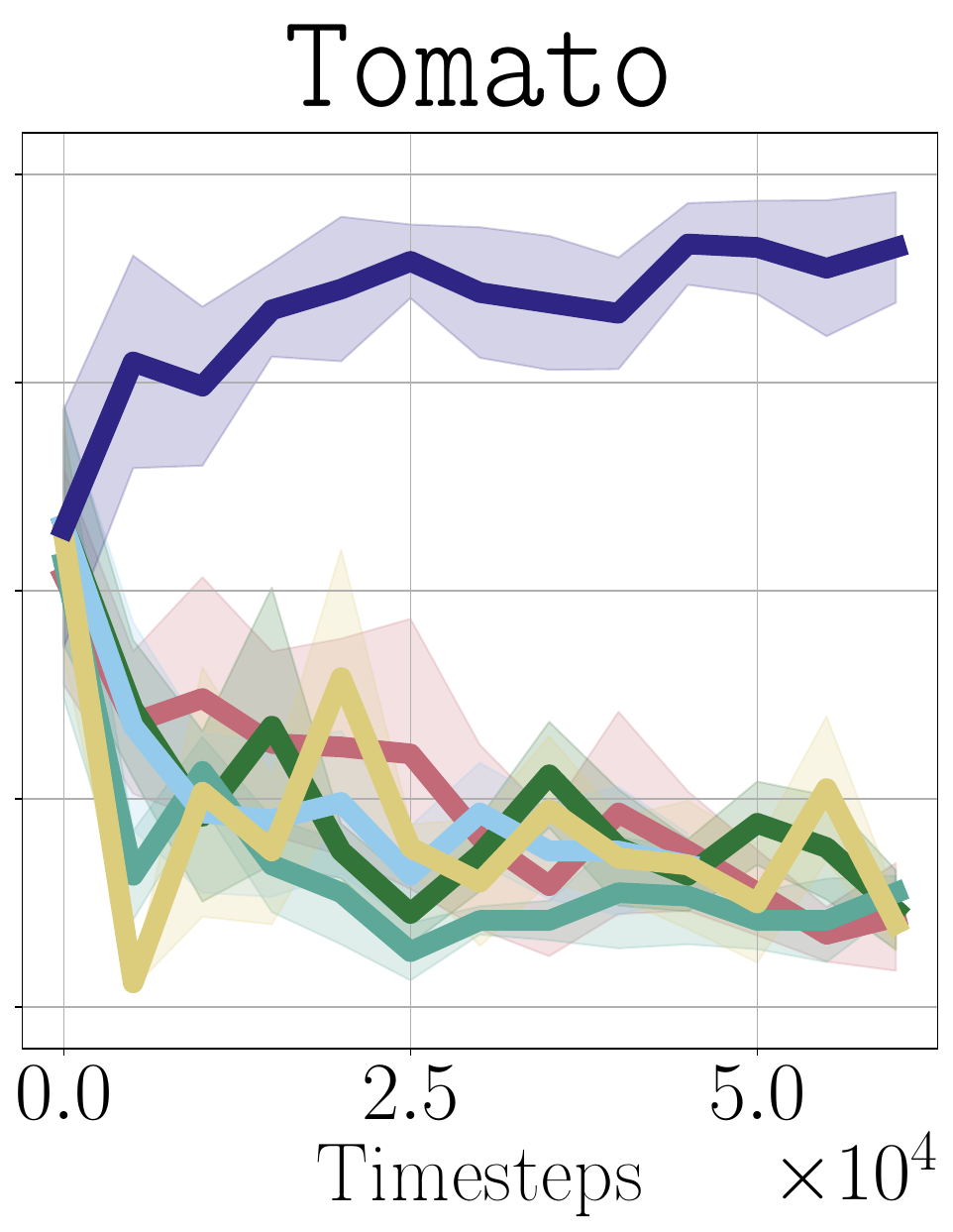}
  \end{subfigure}
  \caption{
    Per task results for RL finetuning with \resrl and $\prvm$ process reward on $\texttt{RoboCasa}$
  }
  \label{fig:all_robocasa}
\end{figure}

\subsection{Individual Results for $\pi_0$ Experiments}

\begin{figure}[H]
  \centering
  \begin{subfigure}[t]{0.18\textwidth}
    \centering
    \includegraphics[width=\linewidth]{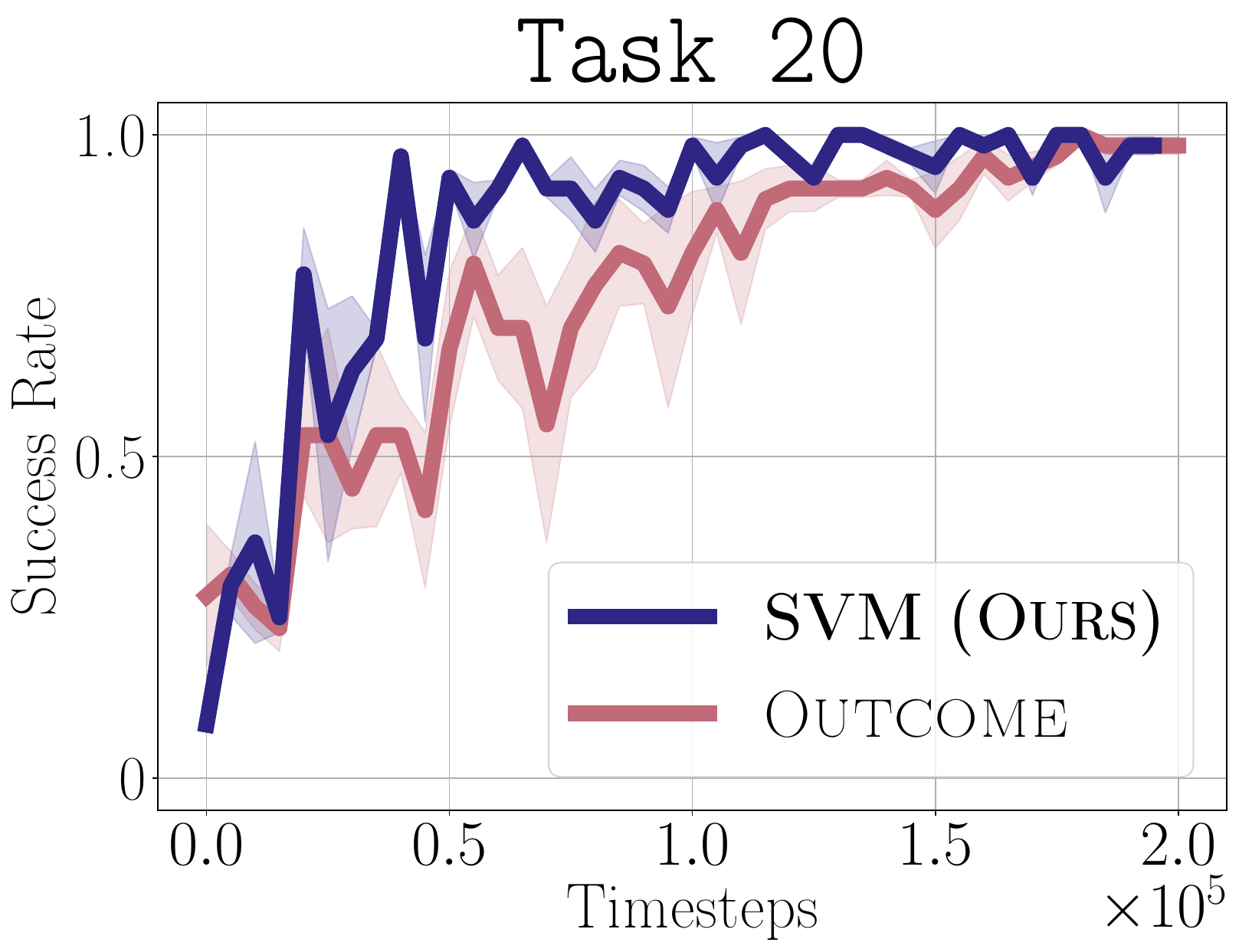}
  \end{subfigure}\hspace{0.015\textwidth}
  \begin{subfigure}[t]{0.16\textwidth}
    \centering
    \includegraphics[width=\linewidth]{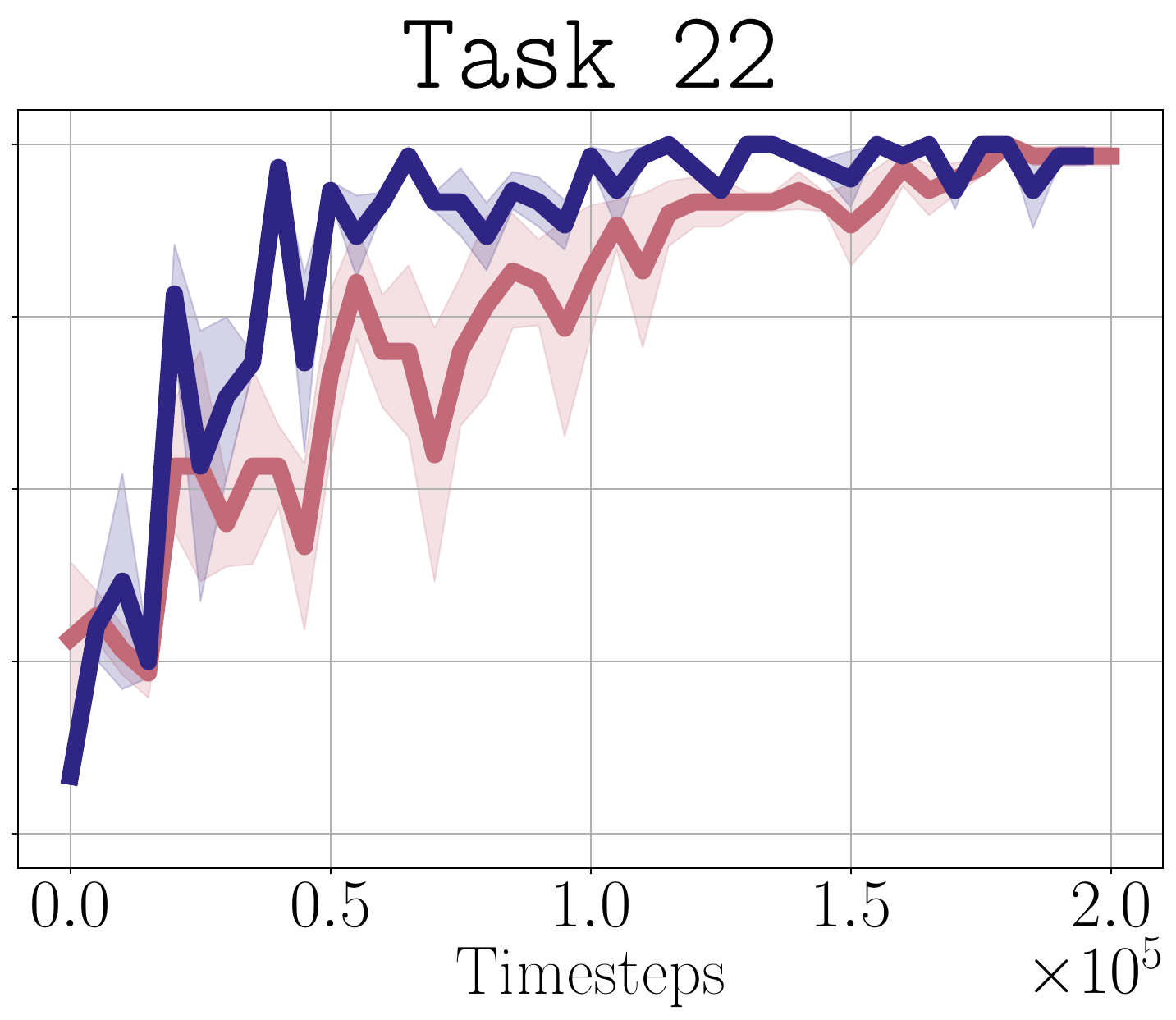}
  \end{subfigure}\hspace{0.015\textwidth}
  \begin{subfigure}[t]{0.16\textwidth}
    \centering
    \includegraphics[width=\linewidth]{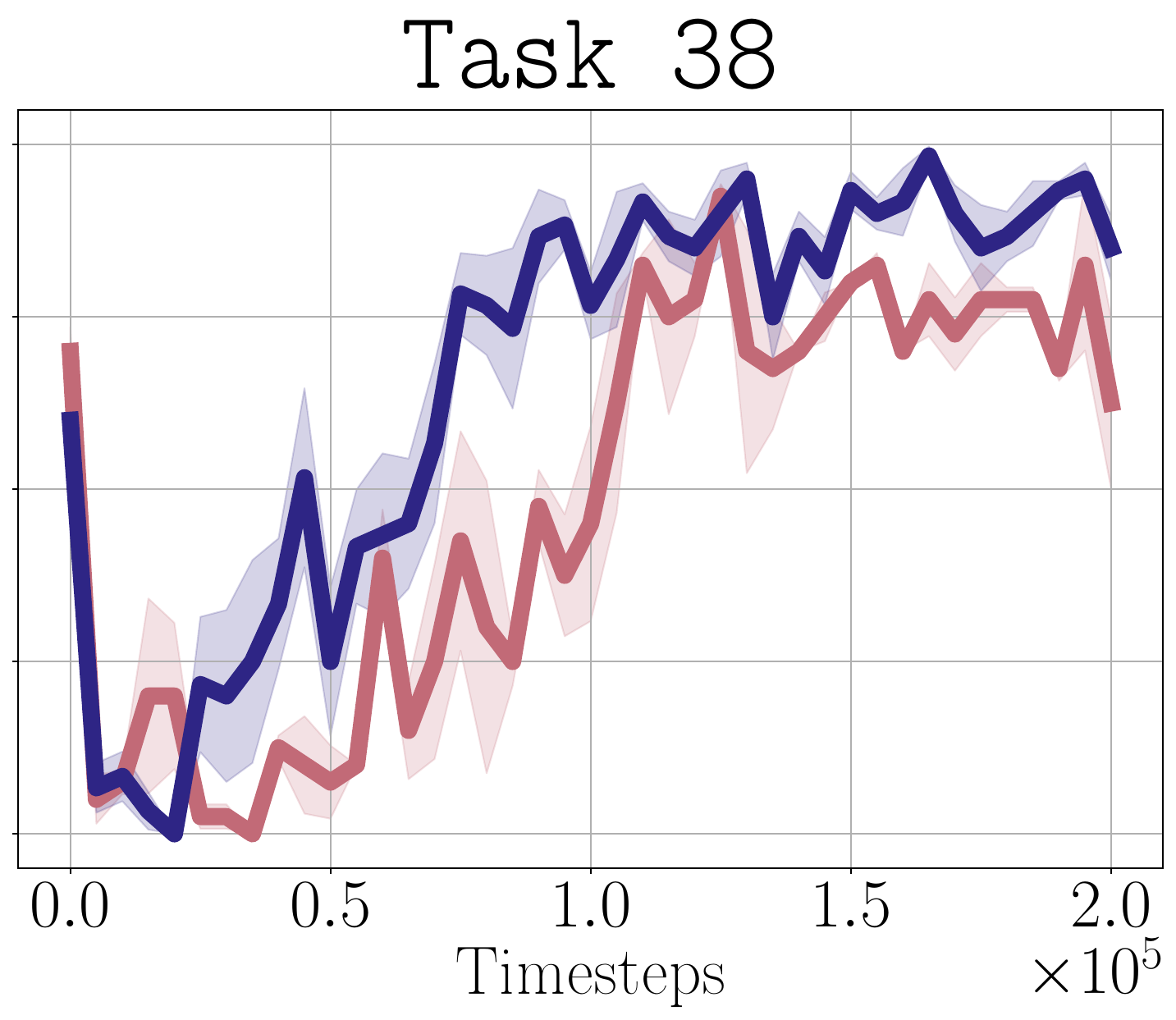}
  \end{subfigure}\hspace{0.015\textwidth}
  \begin{subfigure}[t]{0.16\textwidth}
    \centering
    \includegraphics[width=\linewidth]{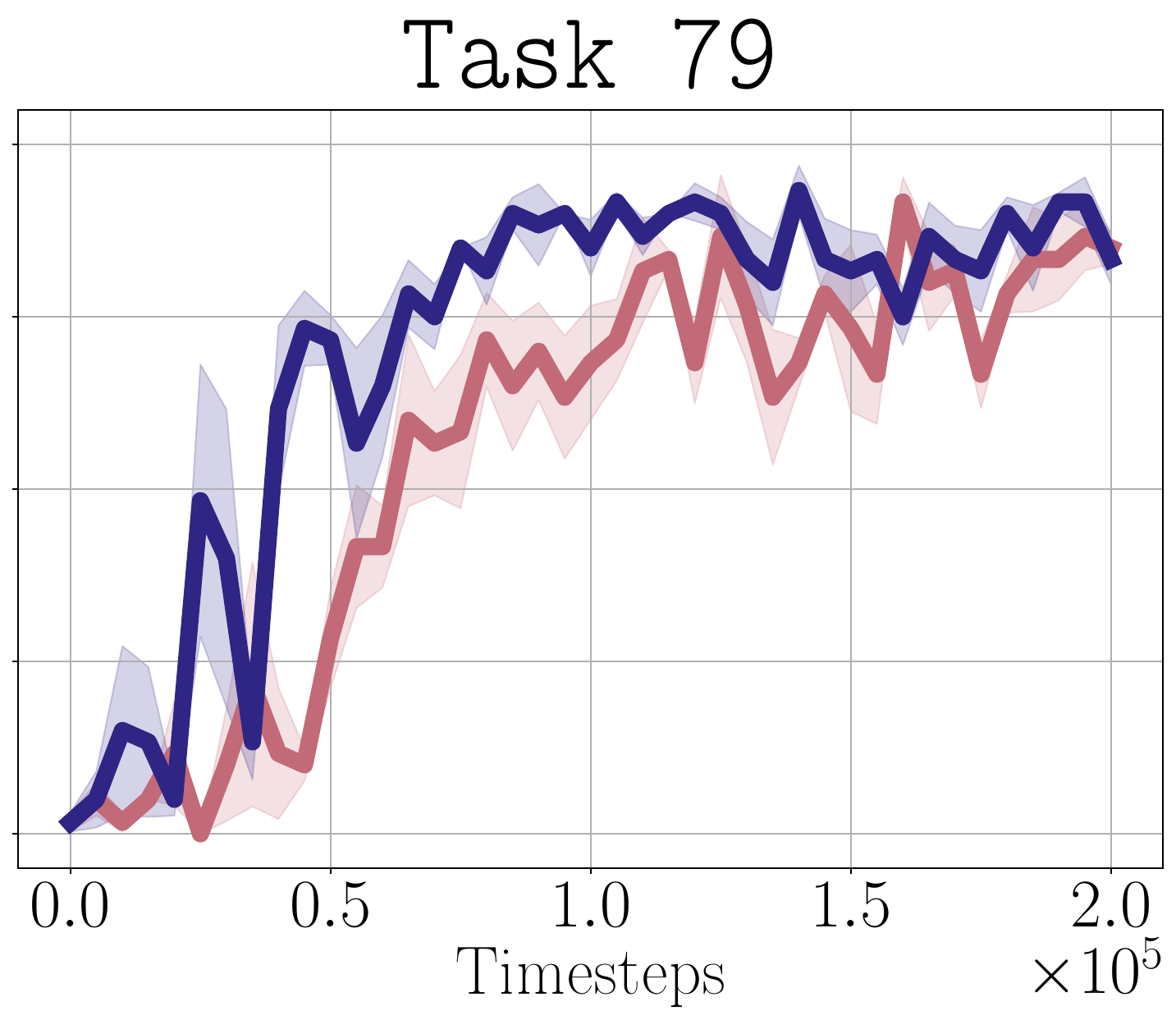}
  \end{subfigure}
  \caption{
    Per task results for RL finetuning of $\pi_0$ with $\dsrl$ and $\prvm$ process reward on $\texttt{LIBERO}$
  }
  \label{fig:all_pi0}
\end{figure}

\subsection{Additional Ablation Results}\label{sec:additional_ablations}

Here we provide several additional ablations of design choices for \prvm process rewards. Unless otherwise stated, the experiments reported here are averaged over tasks from \texttt{LIBERO Kitchen Scene 2}.

\textbf{How can we most effectively extract a policy from $\widehat{f}$?} We provide additional results for this ablation on \texttt{LIBERO Kitchen Scene 1-3} in Figure~\ref{fig:policy_extraction_all}. For each approach, we train the discriminator $\widehat{f}$ online. We describe the ablated policy extraction methods below.

\begin{itemize}
    \item $\widehat{f}$-\textsc{maximization}: We train the policy to directly maximize the discriminator score,
    \[\max_\pi  \mathbb{E}_{(s, a) \sim \pi}\left[\widehat{f}(s, a)\right]\]
    instead of learning a separate $Q$ function.
    
    \item $\widehat{f}$-\textsc{sampling}: During evaluation, at each step, we sample $K=64$ candidate actions from the policy and execute the action with the highest discriminator score. We swept over $K \in \{8, 32, 64, 128\}$ and found that $K \geq 64$ generally performs well.

    \item \dsrl\textsc{+BC}: We train the policy using the standard \dsrl objective with an additional behavior cloning regularizer:
    \[\max_\pi \; \mathbb{E}_{(s, a) \sim \pi}\left[Q(s, a)\right]
    + \lambda \mathbb{E}_{(s, a) \sim \Dpos}\left[\log \pi(a \mid s)\right]\] 
    Here, $\lambda$ controls the strength of the behavior cloning regularization. We swept over $\lambda \in \{0.25, 0.5, 0.75, 1, 2, 3, 10, 30\}$ and found that the method does not perform well across a wide range of values. We report results with $\lambda = 1$.
\end{itemize}

\begin{figure}[h]
  \centering
  \hspace*{1.4cm}%
  \begin{minipage}{\textwidth}
    \centering
    \begin{minipage}[t]{0.175\textwidth}
      \centering
      \includegraphics[width=\linewidth]{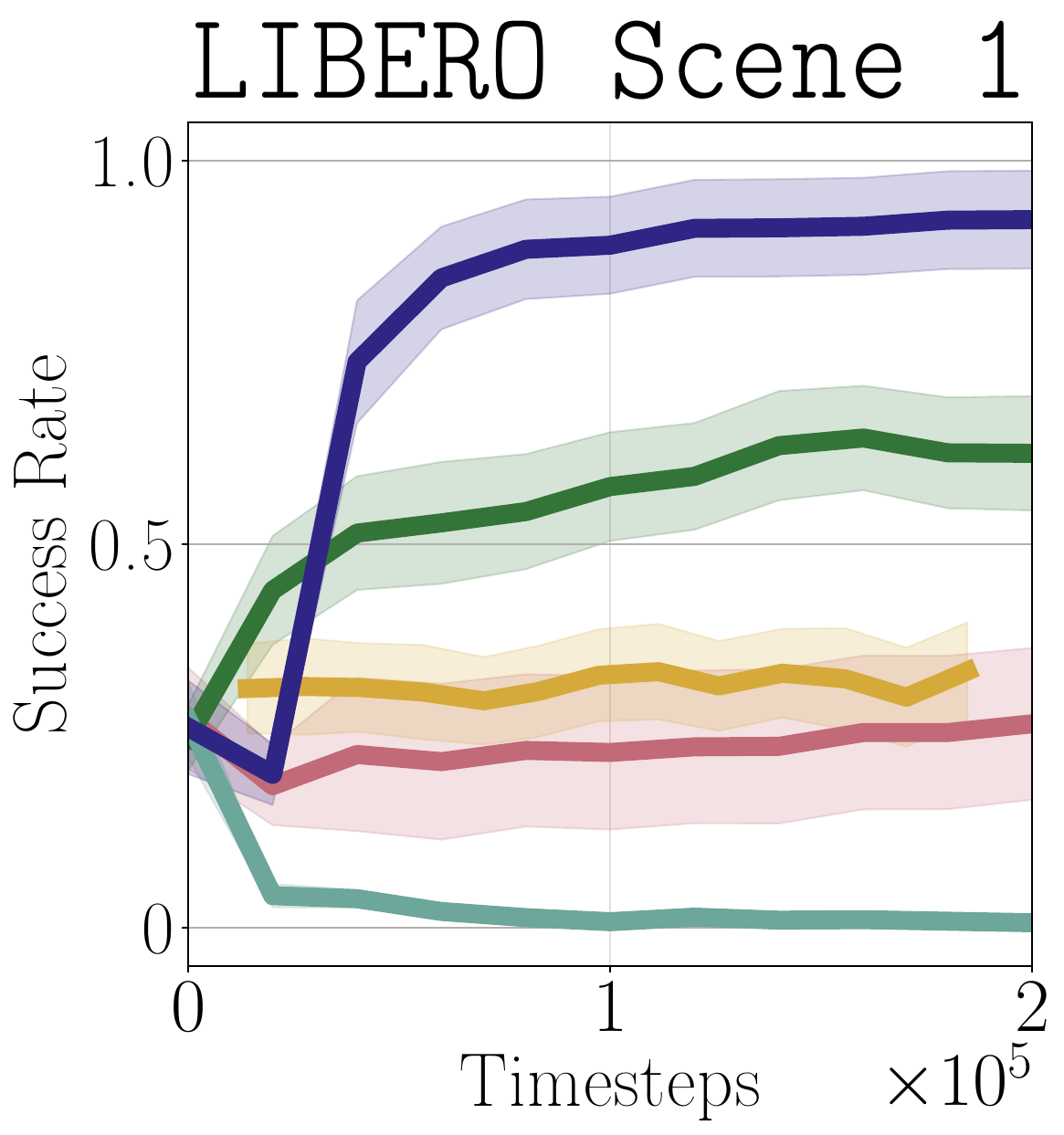}
    \end{minipage}
    \hspace{0.01\textwidth}
    \begin{minipage}[t]{0.15\textwidth}
      \centering
      \includegraphics[width=\linewidth]{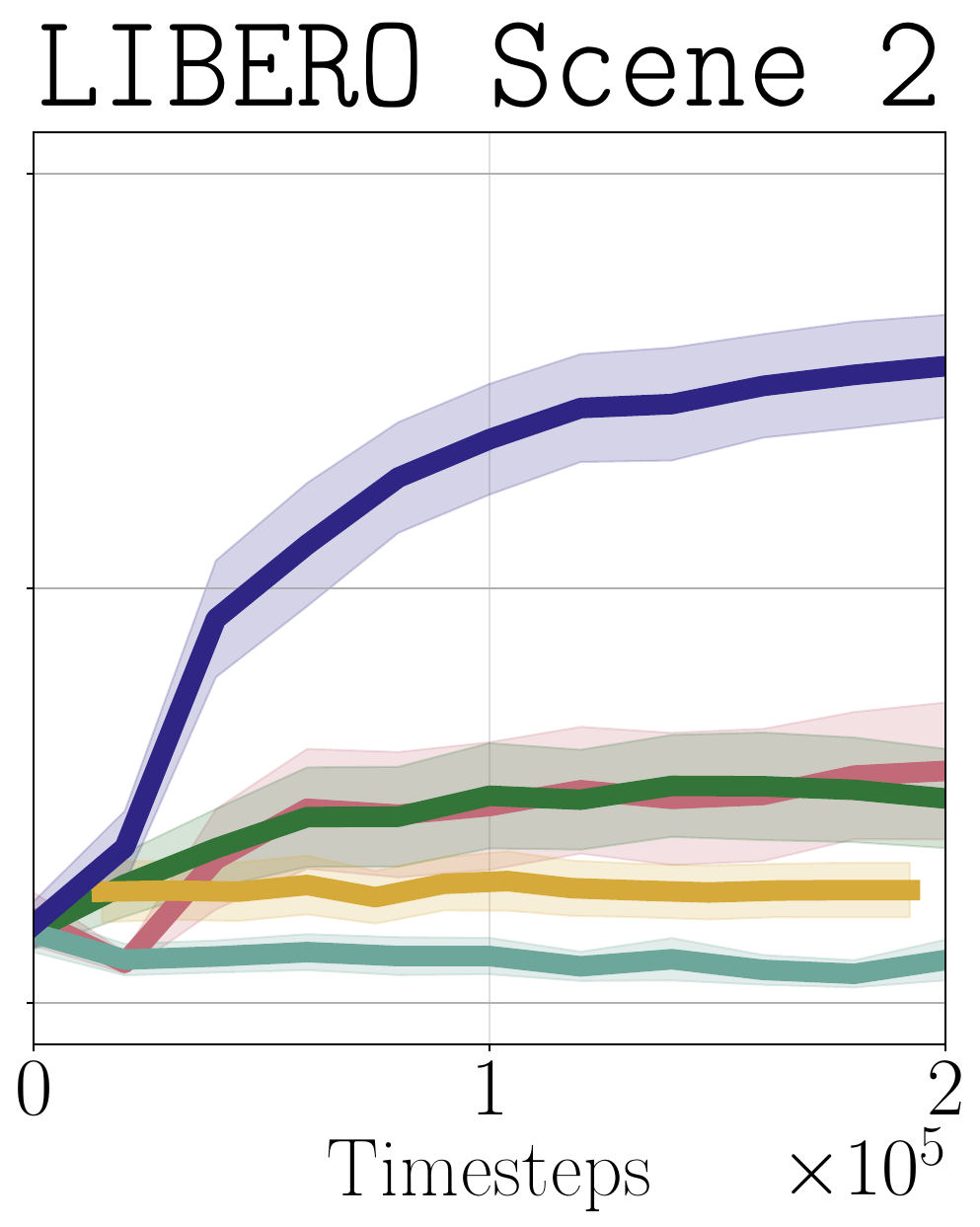}
    \end{minipage}
    \hspace{0.01\textwidth}
    \begin{minipage}[t]{0.15\textwidth}
      \centering
      \includegraphics[width=\linewidth]{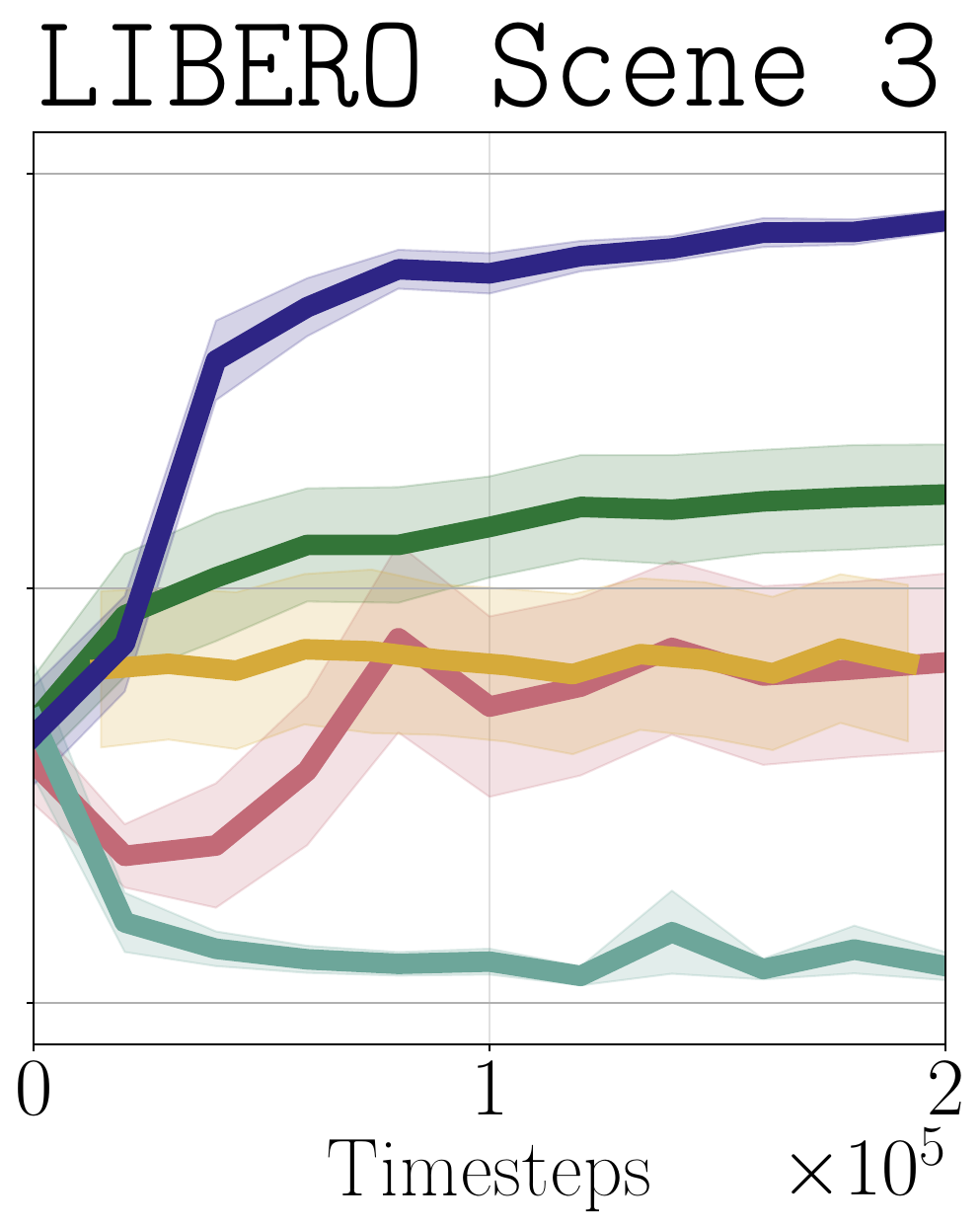}
    \end{minipage}
    \begin{minipage}[t]{0.20\textwidth}
      \centering
      \raisebox{0.35cm}[0pt][0pt]{%
        \makebox[\linewidth][c]{%
          \hspace*{-5mm}\includegraphics[height=3.5cm]{images/classifier_usage_legend.pdf}%
        }%
      }
    \end{minipage}
  \end{minipage}
  \caption{Ablation of policy extraction approach on \texttt{LIBERO Kitchen Scene 1-3}.}
  \label{fig:policy_extraction_all}
\end{figure}

\textbf{How does the functional form of the process reward impact performance?} We provide additional ablation results for \resrl in Figure~\ref{fig:reward_functional_form_all}. In particular, we note that while the current form bears a close relationship to KL-regularized RL and adversarial imitation learning, \Cref{thm:proc_consistent} remains true for a variety of monotonic transformations of the visitation probabilities.

\begin{figure}[h]
  \centering
  \makebox[\textwidth][c]{%
    \hspace*{1.5cm}%
    \begin{minipage}{0.60\textwidth}
      \centering
      \begin{minipage}[t]{0.29\linewidth}
        \centering
        \includegraphics[width=\linewidth]{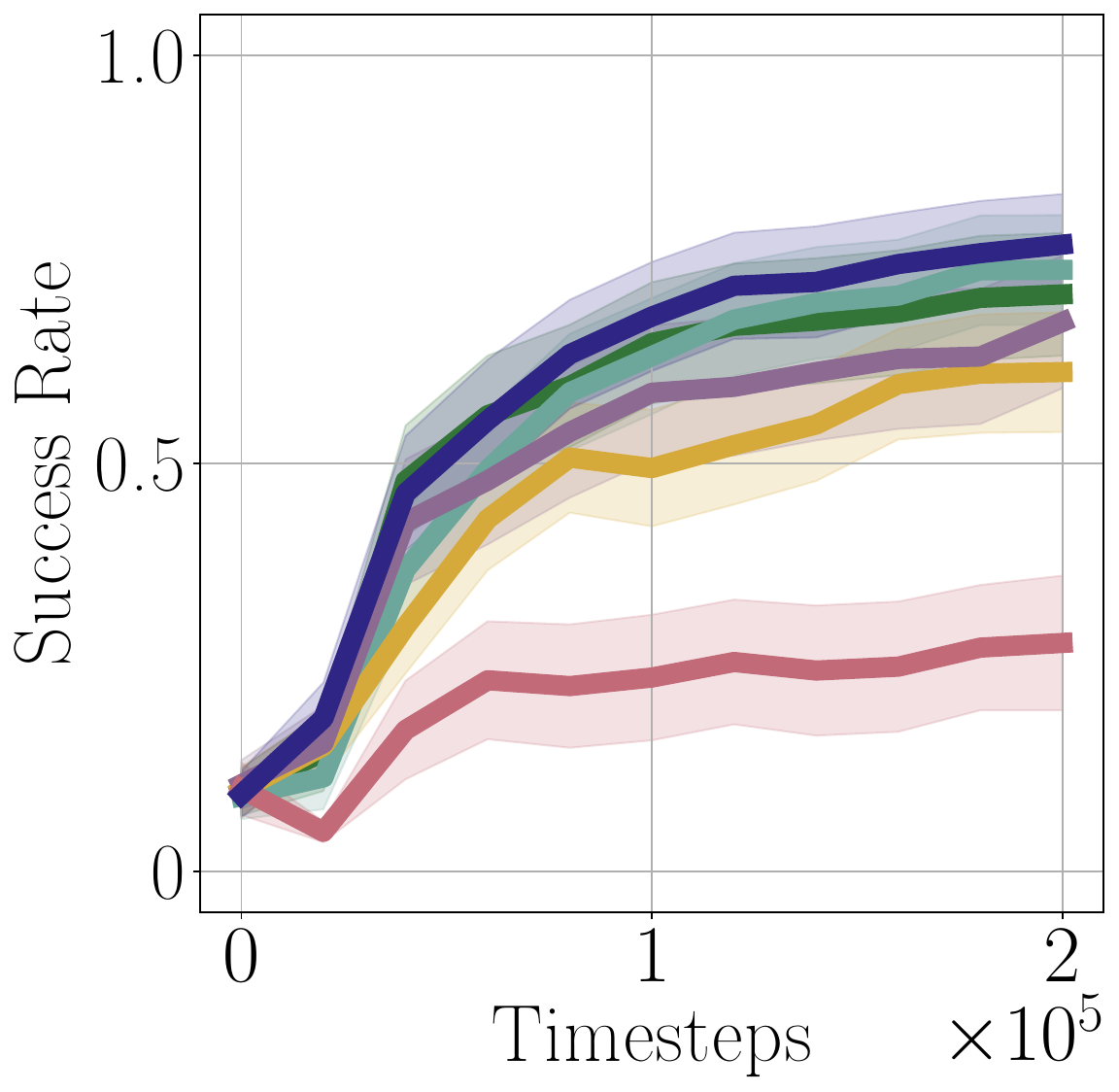}
      \end{minipage}
      \hspace{0.02\linewidth}
      \begin{minipage}[t]{0.25\linewidth}
        \centering
        \includegraphics[width=\linewidth]{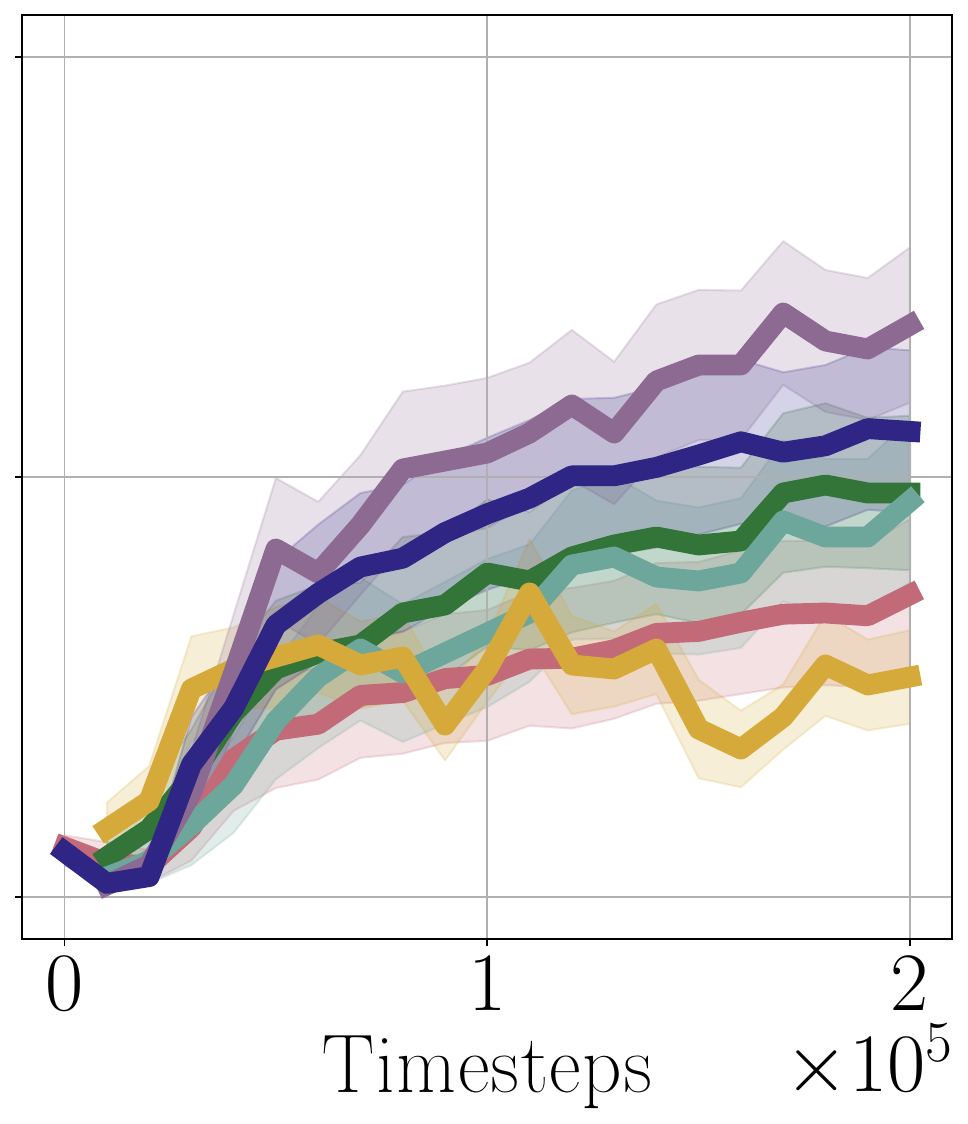}
      \end{minipage}
      \begin{minipage}[t]{0.33\linewidth}
        \centering
        \raisebox{0.265cm}[0pt][0pt]{%
          \makebox[\linewidth][c]{%
            \hspace*{-4mm}\includegraphics[height=3.2cm]{images/ablation_function_legend.pdf}%
          }%
        }
      \end{minipage}
    \end{minipage}%
  }
  \caption{Aggregated success rates for \dsrl (Left) and \resrl (Right) using different monotone transformations of the predicted success probability $\widehat{f}$.}
  \label{fig:reward_functional_form_all}
\end{figure}

\Cref{fig:reward_functional_form_all} illustrates the performance of RL with various transformations of $\widehat{f}$, the discriminator score. We see in particular that, while many such transformations do lead to improved performance over only outcome rewards, the specific form of \prvm, $\log \widehat{f} / (1-\widehat{f})$ leads to the most consistently effective performance.

\textbf{Does \prvm require timestep conditioning?} 
Our theoretical analysis motivates conditioning the classifier on the timestep $h$. Here we explore whether this impacts performance in practice. 
We illustrate \prvm with and without timestep conditioning in Figure~\ref{fig:conditioning_exp}, and see that timestep conditioning does not on average improve performance. 

\begin{figure}[h]
  \centering
  \makebox[\textwidth][c]{%
    \hspace*{1.5cm}%
    \begin{minipage}{0.60\textwidth}
      \centering
      \begin{minipage}[t]{0.29\linewidth}
        \centering
        \includegraphics[width=\linewidth]{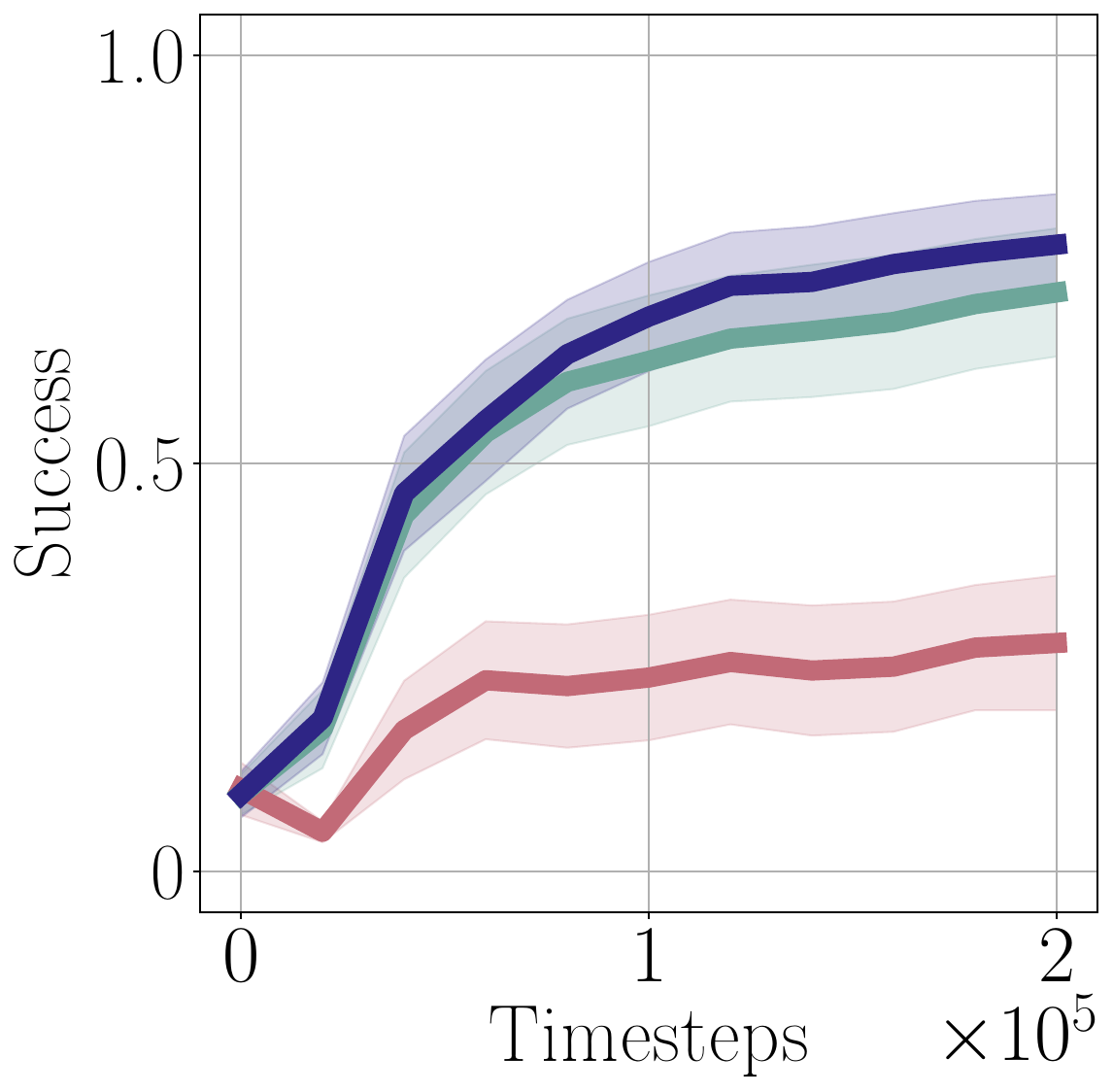}
      \end{minipage}
      \begin{minipage}[t]{0.33\linewidth}
        \centering
        \raisebox{0.265cm}[0pt][0pt]{%
          \makebox[\linewidth][c]{%
            \hspace*{2mm}\includegraphics[height=3.2cm]{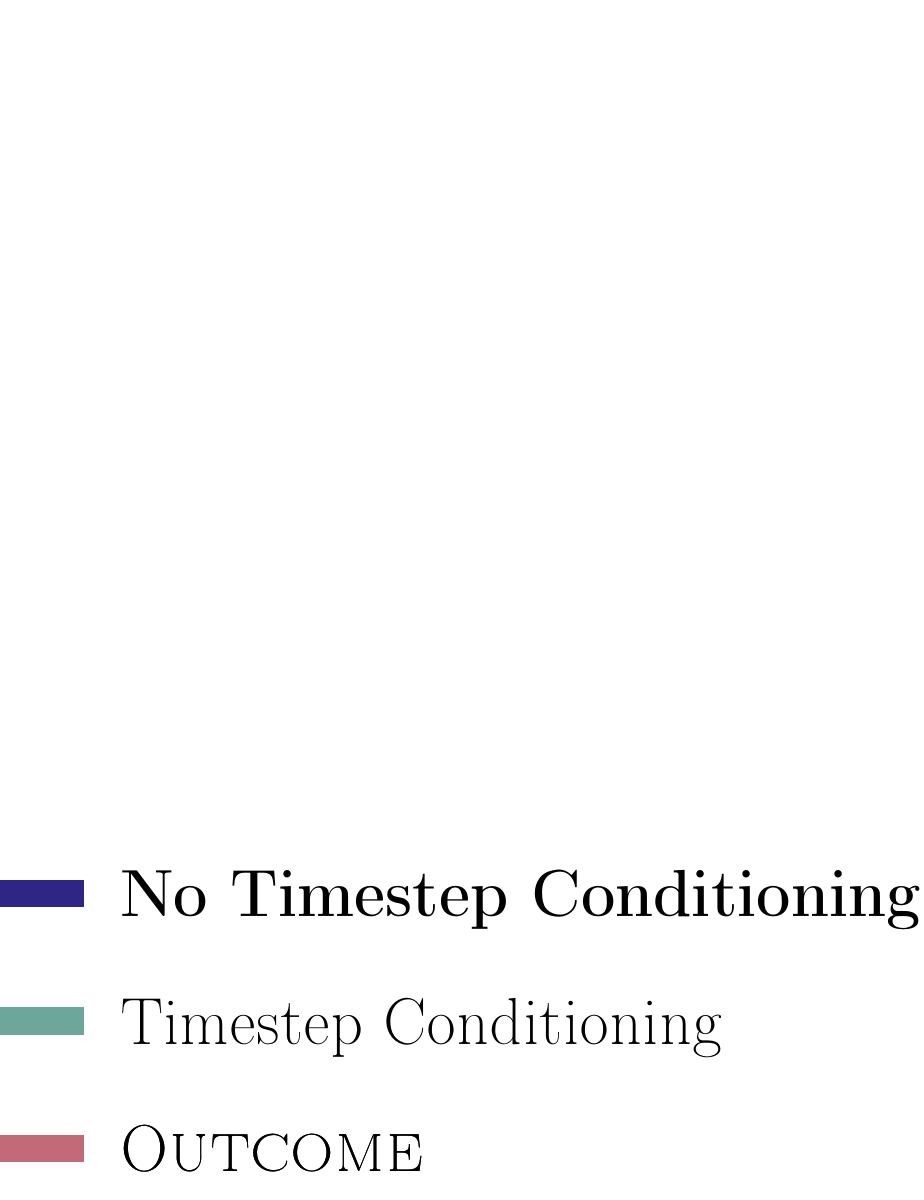}
          }%
        }
      \end{minipage}
    \end{minipage}%
  }
  \caption{Timestep conditioning with \dsrl.}
  \label{fig:conditioning_exp}
\end{figure}

\textbf{Does \prvm require symmetric sampling?} Our theory in Section~\ref{sec:method} suggests that to train the discriminator $\fhat$ we should sample symmetrically from $\Dpos$ and $\Dneg$. If we instead rely on asymmetric sampling, where we sample from $\Dpos$ and $\Dneg$ proportional to their size, the learned $\fhat$ instead corresponds, effectively, to a $Q$-function. 
Here we evaluate whether \prvm is sensitive to the form of sampling.  As shown in Figure~\ref{fig:sampling_exp}, symmetric sampling yields on average slightly higher aggregate performance. However, while the aggregate performance difference is not substantial, on 3/7 tasks in \texttt{Kitchen Scene 2}, we find that symmetric sampling is essential to achieving effective RL improvement and, furthermore, see a larger aggregate gap between symmetric and asymmetric sampling when using \resrl as opposed to \dsrl. Thus, using a ``discriminator'' as opposed to a $Q$-function to shape the reward is in general important for achieving the best performance.

\begin{figure}[H]
  \centering
  \makebox[\textwidth][c]{%
    \hspace*{1.5cm}%
    \begin{minipage}{0.60\textwidth}
      \centering
      \begin{minipage}[t]{0.295\linewidth}
        \centering
        \includegraphics[width=\linewidth]{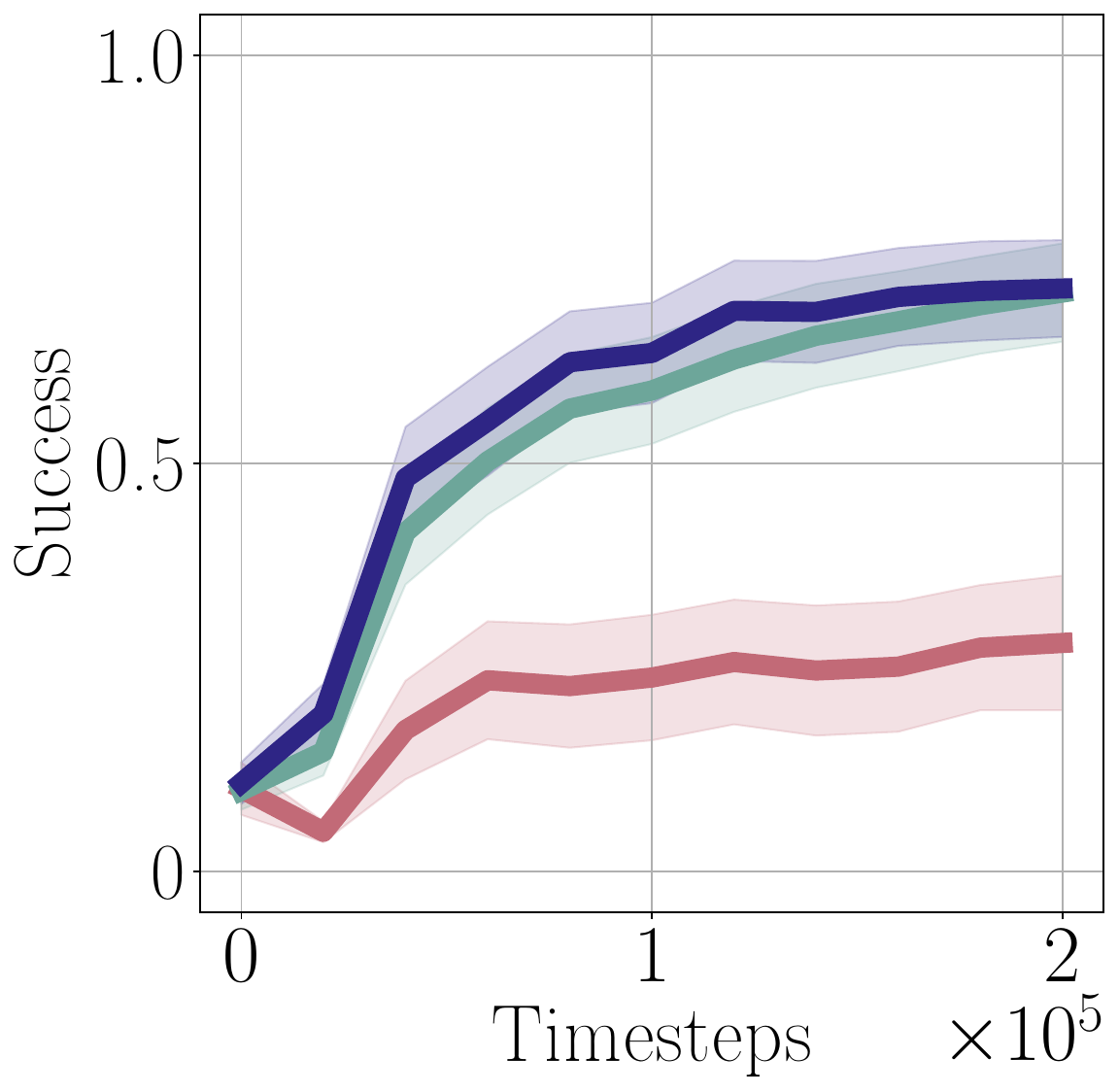}
      \end{minipage}
      \hspace{0.02\linewidth}
      \begin{minipage}[t]{0.25\linewidth}
        \centering
        \includegraphics[width=\linewidth]{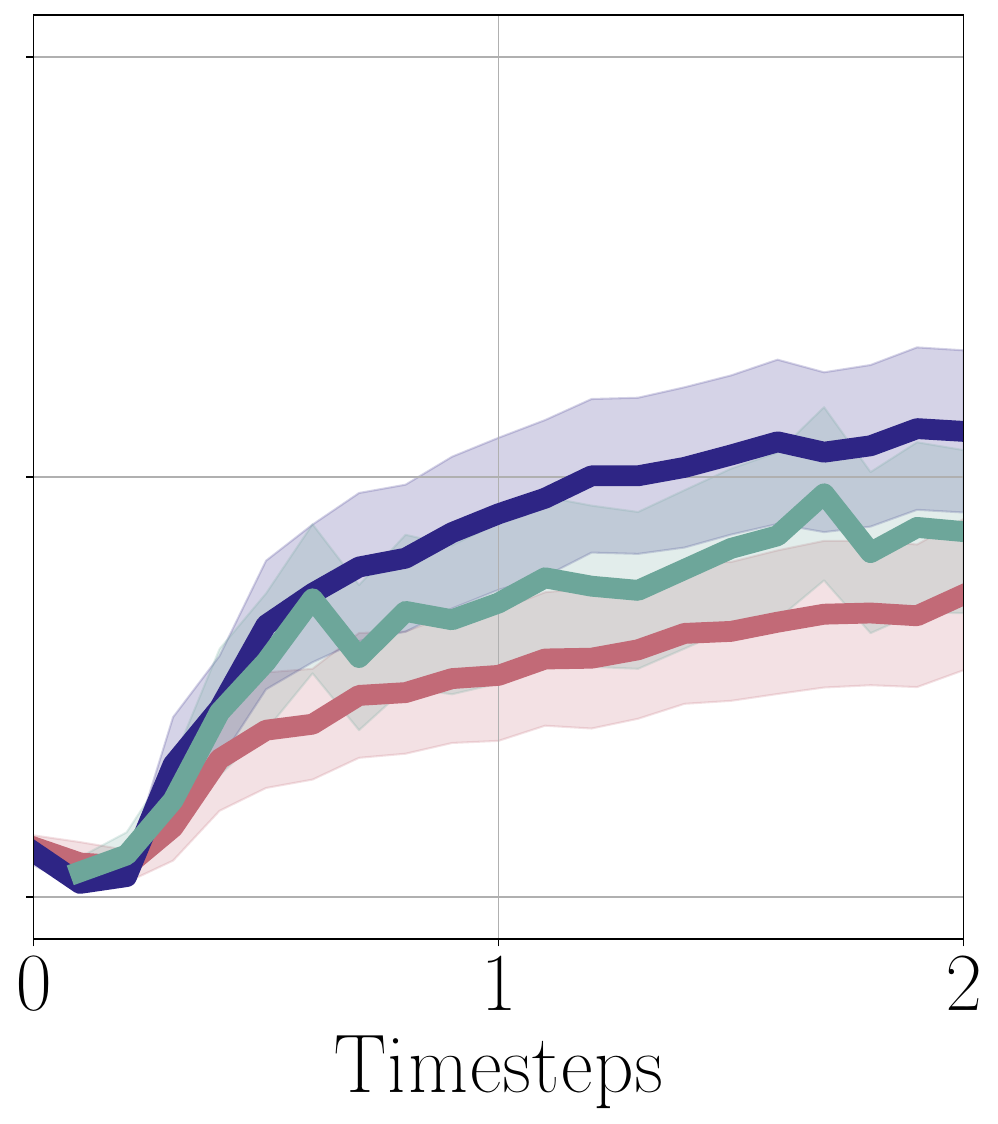}
      \end{minipage}
      \begin{minipage}[t]{0.33\linewidth}
        \centering
        \raisebox{0.265cm}[0pt][0pt]{%
          \makebox[\linewidth][c]{%
            \hspace*{-10mm}\includegraphics[height=3.2cm]{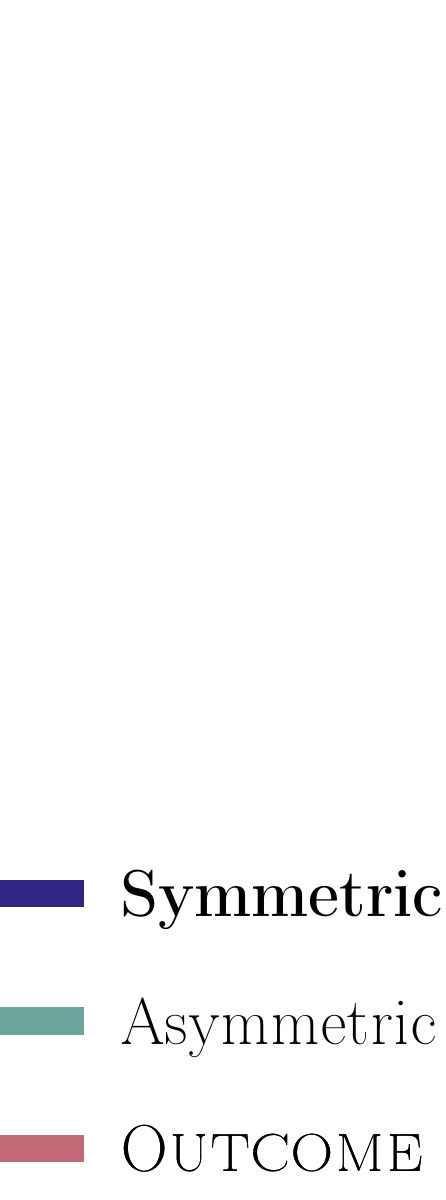}
          }%
        }
      \end{minipage}
    \end{minipage}%
  }
  \caption{Symmetric sampling ablation with \dsrl (Left) and \resrl (Right).}
  \label{fig:sampling_exp}
\end{figure}

\section{Experimental Details}\label{sec:exp_details}

Finally, we provide additional experimental details on each approach we consider.

\subsection{Details of Libero Experiments}\label{sec:libero_exp_details}

\textbf{Base Policy Training.} We instantiate the base policy $\pi_{\text{pre}}$ as a diffusion transformer policy~\cite{dasari2024ingredients}. The policy is pretrained via behavioral cloning on the full \texttt{LIBERO-90} dataset. For task conditioning, we use BERT language embeddings~\cite{devlin2019bertpretrainingdeepbidirectional} of the task descriptions. The hyperparameters used for base policy pretraining are reported in Table~\ref{tab:libero_policy_hyperparam}.

\textbf{DSRL Implementation.} For \dsrl, we use the official implementation from~\cite{wagenmaker2025steering}, specifically the \textsc{DSRL-SAC} algorithm. The noise policy $\pi_{\dsrl}(z \mid s)$ is trained using the standard SAC algorithm, performing $k$ gradient steps per training update. We additionally constrain the noise action magnitude $b_{\mathcal{W}}$ by clipping each output dimension of $\pi_{\dsrl}(z \mid s)$ to the interval $[-b_{\mathcal{W}}, b_{\mathcal{W}}]$.

\textbf{Residual RL Implementation.} For \resrl, the residual policy $\pi_{\text{res}}$ is also trained using \textsc{SAC} \cite{haarnoja2018soft}, likewise performing $k$ gradient steps per training update. The residual policy is conditioned on both the observation $s$ and the base policy action $a_0$. The final action executed in the environment is given by $a=a_0+\gamma \cdot \pi_{\text{res}}(s, a_0)$, with $\gamma$ controlling the residual scale. 

Prior to the main experiments, we perform a hyperparameter sweep over the number of gradient steps $k$, the process reward scale $\lambda$, and the action magnitude $b_{\mathcal{W}}$ for \dsrl, as well as the residual scale $\gamma$ for \resrl. We observe comparable performance across \dsrl and \resrl for the shared hyperparameters, and therefore use the same set of values for both methods. We evaluate \prvm on \texttt{Kitchen Scene 1-3}, and for each task we use the same set of hyperparameters. The full set of hyperparameters used for \dsrl and \resrl is reported in Table~\ref{tab:dsrl_libero_hyperparams}.

\textbf{Discriminator Implementation.} We parameterize the discriminator $\widehat{f}$ as a three-layer MLP with hidden size $256$. The discriminator takes as input the image representations extracted by the ResNet encoder of the pretrained diffusion policy, concatenated with the actions. The discriminator is updated $K$ times every $T$ environment steps using the Adam optimizer with a learning rate of $10^{-4}$. The discriminator training hyperparameters are reported in Table~\ref{tab:libero_classifier_hyper}.

\begin{table}[h]
\caption{
\footnotesize
\textbf{Base policy pretraining hyperparameter for \texttt{LIBERO} experiments.}
}
\label{tab:libero_policy_hyperparam}
\vspace{5pt}
\begin{center}
\scalebox{0.9}
{
\begin{tabular}{ll}
    \toprule
    \textbf{Hyperparameter} & Value \\
    \midrule
Batch size &  $150$ \\
Learning rate & $0.0003$ \\
Training steps & $50000$ \\
LR scheduler & cosine \\
Warmup steps & $2000$ \\
\hline
Action chunk size & $4$ \\
Train denoising steps & $100$ \\
Inference denoising steps & $8$ \\
Image encoder & ResNet-18 \\
Hidden size & $256$ \\
Number of Heads & $8$ \\
Number of Layers & $4$ \\
Feedforward dimension & $512$ \\
Token dimension & $256$ \\
    \bottomrule
\end{tabular}
}
\end{center}
\end{table}

\begin{table}[h]
\caption{
\footnotesize
\textbf{\dsrl and \resrl hyperparameter for \texttt{LIBERO} experiments.}
}
\vspace{5pt}
\label{tab:dsrl_libero_hyperparams}
\begin{center}
\scalebox{0.9}
{
\begin{tabular}{ll}
    \toprule
    \textbf{Hyperparameter} & \textbf{Value} \\
    \midrule
    Learning rate & $0.0003$ \\
    Batch size & $256$ \\
    Activation & Tanh \\
    Target entropy & $0$ \\
    Target update rate ($\tau$) & $0.005$ \\
    Number of actor and critic layers & $3$ \\
    Layer size & $1024$ \\
    Number of critics & $2$ \\
    Number of environments & $1$ \\
    Gradient steps per update $k$ & $20$ \\
    Discount factor & $0.99$ \\
    Initial episode rollouts $N_0$ & $20$ \\
    Reward clipping $\rclip$ & $20$ \\
    Reward scale $\lambda$ & $0.05$ \\
    (\dsrl) Action magnitude $b_{\mathcal{W}}$ & $1.5$ \\
    (\resrl) Residual scale $\gamma$ & $0.01$ \\
    \bottomrule
\end{tabular}
}
\end{center}
\end{table}

\begin{table}[h]
\caption{
\footnotesize
\textbf{Discriminator hyperparameter for \texttt{LIBERO} experiments.}
}
\vspace{5pt}
\label{tab:libero_classifier_hyper}
\begin{center}
\scalebox{0.9}
{
\begin{tabular}{ll}
    \toprule
    \textbf{Hyperparameter} & \textbf{Value} \\
    \midrule
    Number of layers & $3$ \\
    Layer size & $256$ \\
    Batch size & $64$ \\
    Learning rate & $10^{-4}$ \\
    Activation & $\text{ReLU}$ \\
    Optimizer & Adam \\
    Gradient steps per update $K$ & $32$ \\
    Update frequency $T$ & $1000$ \\
    \bottomrule
\end{tabular}
}
\end{center}
\end{table}

For all baseline methods except \textsc{SASR}, we parameterize the reward model using the same architecture as our discriminator and update it at the same frequency during training and used the same training hyperparameters reported in Table~\ref{tab:libero_classifier_hyper}. Similar to \prvm, the reward model takes as input the image representations extracted by the ResNet encoder of the pretrained diffusion policy. Prior to the main experiments, we sweep the reward scale for each baseline over the range $\{0.05, 0.1, 0.15, 0.2\}$ for \prvm, \textsc{SORS}, \textsc{RND}, and \textsc{GAIL-Reward}, and over the range $\{0.25, 0.5, 0.75, 1\}$ for \textsc{SASR}. We find that a reward scale of $0.05$ generally performs best (or yields no significant difference) for the former methods, while a reward scale of $0.5$ works best for \textsc{SASR}. We detail the implementation of each baseline below.

\begin{itemize}
    \item \textsc{SORS}: For each trajectory $\tau$, let $R(\tau)$ denote its discounted return. We train the reward model using the Bradley-Terry preference objective. For trajectory pairs $(\tau_i,\tau_j)\sim\mathcal{D}$ with preference label $y=\mathbb{I}[R(\tau_i) > R(\tau_j)]$, we minimize 
    \[\mathcal{L}(f) = \mathbb{E}_{(\tau_i,\tau_j)\sim\mathcal{D}} \Big[- y \log \sigma(R(\tau_i)-R(\tau_j))- (1-y)\log \sigma(R(\tau_j)-R(\tau_i))\Big].\] 
    \item \textsc{SASR}: We used the publicly available implementation of Ma et al.~\cite{ma2024highly}, using random Fourier features to approximate the Gaussian kernel, with a retention rate of $0.1$ and a KDE bandwidth of $1$.
    \item \textsc{RND}: We instantiate a frozen target network and a trainable predictor network, both following the architecture and update frequency specified in Table~\ref{tab:libero_classifier_hyper}.
    \item \textsc{GAIL-Reward}: We implement \textsc{GAIL-Reward} using the same discriminator architecture and training procedure as \prvm, differing only in the sampling strategy used to construct positive and negative examples.
\end{itemize}

\subsection{Details of RoboCasa Experiments}\label{sec:robocasa_exp_details}

\textbf{Base Policy Training.} We instantiate the base policy $\pi_{\text{pre}}$ as a BC-Transformer policy with a GMM output head, pretrained via behavioral cloning on the \texttt{PnPCounterToCab} task using $50$ demonstrations from the RoboCasa dataset. We use the hyperparameters provided by their codebase without modification~\cite{nasiriany2024robocasalargescalesimulationeveryday}.

\textbf{Residual RL Implementation.} We use the same \resrl implementation as in \Cref{sec:libero_exp_details}. 
Before running the main experiments, we perform a hyperparameter sweep over the number of gradient steps $k$, the process reward scale $\lambda$, and the residual scale $\gamma$. Unless otherwise stated, we use the same hyperparameters as in \Cref{tab:dsrl_libero_hyperparams}, except that the residual scale is set to $0.1$.

\textbf{Discriminator Implementation.} We adopt the same discriminator architecture as in \Cref{sec:libero_exp_details}. The discriminator takes as input the concatenation of the image and proprioceptive representations from the last timestep of the BC transformer's context, the base policy action, and the residual action. We use the same training hyperparameters as before, reported in \Cref{tab:libero_classifier_hyper}.

\subsection{Details of $\pi_0$ Experiments}
We use the public checkpoint \texttt{s3://openpi-assets/checkpoints/pi0\_libero} for our $\pi_0$ experiments and select four tasks for which the base policy has a nonzero success rate: \texttt{Task~20}, \texttt{Task~22}, \texttt{Task~38}, and \texttt{Task~79}. The scene setup and task description for these tasks is shown in Figure~\ref{fig:pi0_scene}. 

\textbf{Discriminator Implementation.} For the $\pi_0$ experiments, we adopt an image-based discriminator. We experiment with two discriminator architectures: a CNN-based model and a pretrained ResNet-based model. We find that the CNN-based architecture generally performs better across most tasks, and therefore use it for all subsequent experiments.

The CNN-based discriminator consists of three convolutional blocks with channel dimensions of $64$, $128$, and $256$. Each block applies a convolution followed by batch normalization and a ReLU activation. The final feature map is projected to a $512$-dimensional embedding, which is concatenated with the action and passed to a three-layer MLP. We additionally sweep the update frequency $T$ and the reward scale $\lambda$, and find that $T=100$ and $\lambda=0.1$ work best for most tasks. We report the hyperparameters for the discriminator and \dsrl in Tables~\ref{tab:pi0_classifier_hyperparam} and~\ref{tab:pi0_hyperparam}, respectively.

\begin{figure}[h]
  \centering
  \begin{minipage}{0.15\textwidth}
    \centering
    \includegraphics[width=\linewidth]{images/scene_images/libero_kitchen_scene_3.png}
  \end{minipage}
  \hspace{0.01\textwidth}
  \begin{minipage}{0.15\textwidth}
    \centering
    \includegraphics[width=\linewidth]{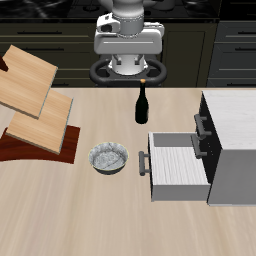}
  \end{minipage}
  \hspace{0.01\textwidth}
  \begin{minipage}{0.15\textwidth}
    \centering
    \includegraphics[width=\linewidth]{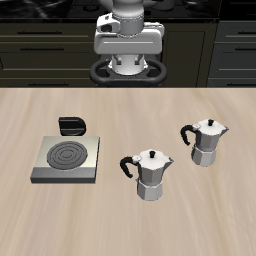}
  \end{minipage}
  \hspace{0.01\textwidth}
  \begin{minipage}{0.15\textwidth}
    \centering
    \includegraphics[width=\linewidth]{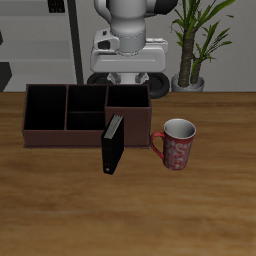}
  \end{minipage}
  \caption{Selected tasks for $\pi_0$ experiments. \texttt{Task 20: Turn on the stove}, \texttt{Task 22: Close the bottom drawer of the cabinet}, \texttt{Task 38: Put the right moka pot on the stove}, \texttt{Task 79: pick up the book and place it in the left compartment of the caddy}.}
  \label{fig:pi0_scene}
\end{figure}

\begin{table}[h]
\caption{
\footnotesize
\textbf{Discriminator hyperparameter for \texttt{LIBERO} $\pi_0$ experiments}
}
\vspace{5pt}
\label{tab:pi0_classifier_hyperparam}
\begin{center}
\scalebox{0.9}
{
\begin{tabular}{ll}
    \toprule
    \textbf{Hyperparameter} & \textbf{Value} \\
    \midrule
    Batch size & $64$ \\
    Learning rate & $10^{-4}$ \\
    Activation & ReLU \\
    Optimizer & Adam \\
    Gradient steps per update $K$ & $8$ \\
    Update frequency $T$ & $100$ \\
    \bottomrule
\end{tabular}
}
\end{center}
\end{table}

\begin{table}[h]
\caption{
\footnotesize
\textbf{\dsrl hyperparameter for \texttt{LIBERO} $\pi_0$ experiments}
}
\vspace{5pt}
\label{tab:pi0_hyperparam}
\begin{center}
\scalebox{0.9}
{
\begin{tabular}{ll}
    \toprule
    \textbf{Hyperparameter} & \textbf{Value} \\
    \midrule
    Number of critics & $10$ \\
    Gradient steps per update & $20$ \\
    Discount factor & $0.99$ \\
    Action Magnitude $b_{\mathcal{W}}$ & $2$ \\
    Reward Scale $\lambda$ & $0.1$ \\
    Initial episode rollouts $N_0$ & $20$ \\
    \bottomrule
\end{tabular}
}
\end{center}
\end{table}

\subsection{Details of Real World Experiments}

For our real-world experiments, we select three tasks:
\begin{itemize}
    \item \texttt{Pick and Place}: The goal is to pick up the corn and place it into the silver pot. 
    \item \texttt{Open Drawer}: The goal is to open the red drawer.
    \item \texttt{Cover Knife with Cloth} task: The goal is to cover the knife on the table with the cloth.
\end{itemize} 
For each task, we use a sparse $0$–$1$ reward and employ \dsrl as the fine-tuning algorithm. The hyperparameters used for \dsrl training are reported in Table~\ref{tab:widowx_hyperparam}.

\textbf{Base Policy Training.} In the WidowX experiments, we again use a diffusion transformer architecture for the base policy~\cite{dasari2024ingredients}. The policy is pretrained on the BridgeData V2 dataset with image-based goal conditioning. The pretraining hyperparameters are provided in Table~\ref{tab:widowx_pretrain_hyperparam}.

\textbf{Discriminator Implementation.} For the real-world experiments, the discriminator follows the same architecture as in the Libero experiments (Table~\ref{tab:libero_classifier_hyper}). The remaining discriminator training hyperparameters are listed in Table~\ref{tab:widowx_hyperparam}.

\begin{table}[!htbp]
\caption{
\footnotesize
\textbf{\dsrl hyperparameter for WidowX experiments.}
}
\vspace{5pt}
\begin{center}
\scalebox{0.9}
{
\begin{tabular}{ll}
    \toprule
    \textbf{Hyperparameter} & \textbf{Value} \\
    \midrule
    Hidden size & $1024$ \\
    Gradient steps per update & $30$ \\
    Discount factor & $0.97$ \\
    Action Magnitude & $2$ \\
    Reward clipping $\rclip$ & $20$ \\
    Reward scale $\lambda$ & $0.1$ \\
    Initial episode rollouts $N_0$ & $20$ \\
    \bottomrule
\end{tabular}
}
\end{center}
\label{tab:widowx_hyperparam}
\end{table}

\begin{table}[!htbp]
\caption{
\footnotesize
\textbf{Base policy pretraining hyperparameters for WidowX experiments.}
}
\vspace{5pt}
\begin{center}
\scalebox{0.9}
{
\begin{tabular}{ll}
    \toprule
    \textbf{Hyperparameter} & \textbf{Value} \\
    \midrule
    Batch size & $2048$ \\
    Learning rate & $0.0003$ \\
    Training steps & $100000$ \\
    LR scheduler & $\text{cosine}$ \\
    Warmup steps & $2000$ \\
    Action chunk size & $1$ \\
    Training denoising steps & $100$ \\
    Inference denoising steps & $8$ \\
    Image encoder & ResNet-34 \\
    Hidden size & $256$ \\
    Number of heads & $1$ \\
    Number of layers & $3$ \\
    Feedforward dimension & $512$ \\
    \bottomrule
\end{tabular}
}
\end{center}
\label{tab:widowx_pretrain_hyperparam}
\end{table}

\begin{table}[!htbp]
\caption{
\footnotesize
\textbf{Discriminator hyperparameter for WidowX experiments.}
}
\vspace{5pt}
\label{tab:widowx_classifier_hyper}
\begin{center}
\scalebox{0.9}
{
\begin{tabular}{ll}
    \toprule
    \textbf{Hyperparameter} & \textbf{Value} \\
    \midrule
    Number of layers & $3$ \\
    Layer size & $1024$ \\
    Batch size & $64$ \\
    Learning rate & $10^{-4}$ \\
    Activation & $\text{ReLU}$ \\
    Optimizer & Adam \\
    Gradient steps per update $K$ & 8 \\
    Update frequency $T$ & $100$ \\
    \bottomrule
\end{tabular}
}
\end{center}
\end{table}

\subsection{Details of Robomimic Experiments}
For \texttt{Robomimic} experiments we follow the \textsc{RLPD} implementation in the \textsc{SERL} codebase~\cite{luo2024serl}. We initialize the buffer with the full set of proficient human demonstrations, which consists of $200$ trajectories.

\textbf{Discriminator Implementation.} Similar to Section~\ref{sec:libero_exp_details}, the discriminator consists of a three-layer MLP that takes state observations and actions as input. We use the same hyperparameter configuration as \prvm, as shown in Table~\ref{tab:libero_classifier_hyper}. \textsc{GAIL} uses the same hyperparameters and differs only in the composition of the training dataset, and does not incorporate the outcome reward. The full set of hyperparameters is provided in Table~\ref{tab:robomimic_hyperparam}.

\begin{table}[!htbp]
\caption{
\footnotesize
\textbf{\textsc{RLPD} hyperparameters for \texttt{Robomimic} experiments.}
}
\vspace{5pt}
\label{tab:robomimic_hyperparam}
\begin{center}
\scalebox{0.9}
{
\begin{tabular}{ll}
    \toprule
    \textbf{Hyperparameter} & \textbf{Value} \\
    \midrule
    Learning rate & $0.0003$ \\
    Batch size & $64$ \\
    Target update rate ($\tau$) & $0.005$ \\
    Critic to actor ratio & $4$ \\
    Number of critics & $2$ \\
    Number of environments & $1$ \\
    Gradient steps per update & $30$ \\
    Discount factor & $0.99$ \\
    Reward clipping $\rclip$ & $20$ \\
    Reward scale $\lambda$ & $0.1$ \\
    \bottomrule
\end{tabular}
}
\end{center}
\end{table}

\subsection{Details of Ablation Experiments}

For all ablation experiments, we follow the same hyperparameter setup as described in Section~\ref{sec:exp_details}.

\subsection{Details of Compute Resource}

For the simulation experiments on \texttt{LIBERO}, \texttt{RoboCasa}, and \texttt{Robomimic}, we use 
NVIDIA RTX A5000 24GB GPU, four CPU cores, and 60GB of CPU memory. Both the \dsrl and \resrl experiments take approximately $10$ hours of wall-clock time to complete per training run. For the real-world experiments, we run each experiment on an NVIDIA GeForce RTX 4090 GPU, taking approximately $1$--$1.5$ hours to complete.

\end{document}